\documentclass[11pt]{article}
\usepackage[a4paper,top=3cm,bottom=2cm,left=2cm,right=2cm,marginparwidth=1.75cm]{geometry}

\usepackage[utf8]{inputenc} 
\usepackage[T1]{fontenc}
\usepackage[numbers]{natbib}
\usepackage[colorlinks=True,citecolor=blue,urlcolor=blue,pagebackref=true,backref=true]
{hyperref}
\renewcommand*{\backrefalt}[4]{%
    \ifcase #1 \footnotesize{(Not cited.)}%
    \or        \footnotesize{(Cited on page~#2.)}%
    \else      \footnotesize{(Cited on pages~#2.)}%
    \fi}

\usepackage[none]{hyphenat}
\usepackage{breakcites}
\usepackage[utf8]{inputenc} 
\usepackage[T1]{fontenc}    
\usepackage{hyperref}       
\hypersetup{
  colorlinks = true,
  urlcolor = blue,
  linkcolor = blue,
  citecolor = blue
}
\usepackage{booktabs}       
\usepackage{amsfonts}       
\usepackage{nicefrac}       
\usepackage{microtype}      
\usepackage{xcolor}         
\usepackage{caption}
\usepackage{subcaption}
\usepackage{float}
\usepackage{amsmath,amsthm}
\usepackage{amssymb}
\usepackage{mathtools}
\usepackage{natbib}
\usepackage{soul}
\usepackage{bm}
\usepackage{enumitem}
\usepackage{multirow}
\usepackage{wrapfig}
\usepackage{scalerel}
\allowdisplaybreaks
\usepackage{dsfont}
\usepackage{algorithm}

\usepackage[algo2e,linesnumbered]{algorithm2e}
\SetKwInput{KwInput}{Input}
\SetKwInput{KwOutput}{Output}

\usepackage{makecell}

\usepackage{svg}

\usepackage{xcolor}
\usepackage{thmtools,thm-restate}
\usepackage{cleveref}

%


 \newtheorem{theorem}{Theorem}
 \newtheorem{definition}{Definition}
 \newtheorem{lemma}{Lemma}
 \newtheorem{remark}{Remark}
 \newtheorem{assumption}{Assumption}
 \newtheorem{proposition}{Proposition}
 \newtheorem{corollary}{Corollary}

\title{$k$-SVD with Gradient Descent}
\author{Yassir Jedra\footnote{This work was done when Yassir Jedra was at MIT. For future correspondence please use: \texttt{y.jedra@imperial.ac.uk}} \\ MIT \\\texttt{jedra@mit.edu} \and  Devavrat Shah \\ MIT \\devavrat@mit.edu}

\date{}

\newcommand{\cC}{\mathcal{C}}

\newcommand{\cX}{\mathcal{X}}

\newcommand{\cN}{\mathcal{N}}

\newcommand{\PP}{\mathbb{P}}
\newcommand{\RR}{\mathbb{R}}

\DeclareMathOperator{\rank}{rank}
\DeclareMathOperator{\diag}{diag}
\DeclareMathOperator{\tr}{tr}

\newcommand{\F}{\textup{F}}

\begin{document}

\maketitle

\setlength{\parindent}{0cm}

\setlength{\parskip}{6pt}

\begin{abstract}
    The emergence of modern compute infrastructure for iterative optimization has led to great interest in developing optimization-based approaches for a scalable computation of $k$-SVD, i.e., the $k\geq 1$ largest singular values and corresponding vectors of a matrix of rank $d \geq 1$. Despite lots of exciting recent works, 
    all prior works fall short in this pursuit. Specifically, the existing results are either for the \emph{exact-parameterized} (i.e., $k = d$) and \emph{over-parameterized} (i.e., $k > d$) settings; or only establish local convergence guarantees; or use a step-size that requires problem-instance-specific oracle-provided information. In this work, we complete this pursuit by providing a gradient-descent method with a simple, universal rule for step-size selection (akin to pre-conditioning), that provably finds $k$-SVD for a matrix of any rank $d \geq 1$. We establish that the gradient method with random initialization enjoys global linear convergence for any $k, d \geq 1$. Our convergence analysis reveals that the gradient  method has an attractive region, and within this attractive region, the method behaves like Heron's method (a.k.a. the Babylonian method). Our analytic results about the said attractive region imply that the gradient method can be enhanced by means of Nesterov's momentum-based acceleration technique. The resulting improved convergence rates match those of rather complicated methods typically relying on Lanczos iterations or variants thereof. We provide an empirical study to validate the theoretical results. 
\end{abstract}

\section{Introduction}

\medskip
\textbf{The task.} 
Consider $M \in \RR^{m \times n}$ a matrix of rank $d \le m \wedge n$. Let
SVD of $M$ be given as $M = U \Sigma V^\top$, where $\Sigma = \diag(\sigma_1, \dots, \sigma_d) \in \RR^{d \times d}$, 
with $ \sigma_1, \dots,\sigma_d$ being the singular values of $M$ in decreasing order,  
$U \in \RR^{n \times d}$ (resp. $V \in \RR^{d \times d}$) be semi-orthogonal matrix containing 
the left (resp. right) singular vectors.\footnote{We adopt the convention that the $\ell$-th element of 
the diagonal of $\Sigma$ is $\sigma_\ell$, and that $\ell$-th column of $U$ (resp. of $V$) denoted 
$u_\ell$ (resp. denoted $v_\ell$) are its corresponding $\ell$-th left (resp. left) singular vector. 
The condition number of $M$ is defined as $\kappa = \sigma_1 / \sigma_d$.} 
Our objective is to find the the $k$-SVD of $M$, i.e. the leading $k$ singular 
values, $\sigma_i, i\leq k$,  corresponding left (resp. right) singular vectors 
$u_i, i\leq k$ (resp. $v_i, i\leq k$). For ease and clarity of exposition, we will 
consider $M \in \RR^{n\times n}$ that is symmetric, positive semi-definite 
in which case $u_i = v_i$ for all $i \in [d]$. Indeed, we can always reduce the problem of 
$k$-SVD for any matrix $M$ to that of solving the $k$-SVD of $M M^\top$ and $M^\top M$, which are both
symmetric, positive semi-definite.

\medskip
\noindent {\bf A bit of history.}
Singular Value Decomposition (SVD) is an essential tool in modern machine learning with applications spanning numerous fields such as biology, statistics, engineering, natural language processing, econometric and finance, etc (see \cite{chen2021spectral} for a non-exhaustive list of examples). It is a fundamental linear algebraic operation with a very rich history (see \cite{stewart1993early} for an early history). 

Traditional algorithms for performing SVD or related problems like principal component analysis, or eigenvalue and eigenvector computation, have mostly relied on iterative methods such as the Power method \cite{mises1929praktische}, Lanczos method \cite{lanczos1950iteration}, the QR algorithm \cite{francis1961qr}, or variations thereof for efficient and better numerical stability  \cite{cullum2002lanczos}. Notably, these methods have also been combined with stochastic approximation schemes to handle streaming and random access to data \cite{oja1985stochastic}. The rich history of these traditional algorithms still impacts the solutions of modern machine learning questions \cite{xu2018convergence, arora2012stochastic, hardt2014noisy, garber2015fast, shamir2015stochastic, garber2016faster, allen2016lazysvd, xu2018accelerated}.

\medskip
\noindent {\bf Why study a gradient-based method?} Given this rich history, one wonders why look for another method? Especially a gradient-based one. 
To start with, the $k$-SVD problem is typically formulated as a non-convex optimization problem \cite{auchmuty1989unconstrained} and the ability to understand success (or failure) of gradient-based methods for non-convex settings can expand our understanding for optimization in general. For example, the landscape of loss function associated with PCA has served as a natural nonconvex surrogate to understand landscape of solutions that arise in training for neural networks \cite{baldi1989neural}. It is also worth mentioning some of the recent works have been motivated by this very reason \cite{ge2015escaping, ge2016matrix, lee2016gradient, chi2019nonconvex}. Moreover, gradient-based methods are known to be robust with respect to noise or missing values \cite{hazan2016introduction}. For example, problems like matrix completion \cite{chen2020noisy} or phase retrieval \cite{candes2015phase} can be formulated as nonconvex matrix factorization problems and are typically solved using (stochastic) gradient-based methods (see \cite{chi2019nonconvex} and references therein). This is precisely what fueled the recent interest in understanding gradient methods for non-convex matrix factorization \cite{de2015global, jain2017global, stoger2021small, de2015global, jain2017global, stoger2021small, zhang2023preconditioned, jia2024preconditioning, li2024crucial}. Finally, the recent emergence of scalable compute coupled with software infrastructure for iterative optimization like gradient-based methods can naturally enable computation of $k$-SVD for large scale matrices in a seamless manner, and potentially overcome existing challenges with the scaling for SVD computation, see \cite{struski2024efficient}.

\medskip
\noindent {\bf Question of interest: $k$-SVD with gradient descent.} This work aims to develop a gradient descent method for $k$-SVD for any matrix. 
This is similar in spirit to earlier works devoted to developing gradient-based approaches for solving maximum eigen-pair problems \cite{auchmuty1989unconstrained, auchmuty1989variational, auchmuty1991globally, mongeau2004computing, gao2015barzilai, jiang2014unconstrained, shi2016limited}. These works proposed methods that, indeed, leverage gradient information, but their design is different from that of the standard 
gradient descent methods and their global convergence guarantees remain poorly understood.  
Recent works on gradient descent for non-convex matrix factorization \cite{de2015global, jain2017global, stoger2021small, de2015global, jain2017global, stoger2021small, zhang2023preconditioned, jia2024preconditioning, li2024crucial} are also useful 
for computing $k$-SVD; however, they fall short in doing so due to various limitations 
(see Table \ref{tab:prior_work} for a summary and discussion of related works in \textsection\ref{sec:related}). 
This has left the question of whether gradient descent can provably compute $k$-SVD for any matrix unresolved.

\begin{table}[h!]
    \caption{\small Here, we contextualize and compare our contributions to prior work, specifically, on computing $k$-SVD using gradient-descent for non-convex matrix factorization (i.e., minimizing $\Vert M - X X^\top \Vert_F^2$ over $X \in \RR^{n \times k}$ with $\rank(M)=d$).}\label{tab:prior_work}
      \begin{center}
    \resizebox{\columnwidth}{!}{
      \begin{tabular}{lcccccc}
        \toprule
        &  \multirow{2}{*}{\makecell{Parametrization \\ Regime}}  & \multirow{2}{*}{\makecell{Linear \\ Convergence}} & \multicolumn{2}{c}{Step-size Selection} & 
        \multirow{2}{*}{\makecell{Random \\ Initialization}}  & \multirow{2}{*}{\makecell{(Local)\\ Acceleration}}   \\
        \cmidrule(lr){4-5} 
        & & & Parameter-free & Pre-conditioning &  & \\
        \midrule
        \multirow{3}{*}{Prior work} & $(k > d)$ & \cite{ jiang2023algorithmic, stoger2021small, zhang2023preconditioned}  & \cite{zhang2023preconditioned} & \cite{zhang2023preconditioned, li2024crucial}  & \cite{li2024crucial} & --  \\
        \cmidrule(lr){2-7} 
         & $(k = d)$ &  \cite{tong2021accelerating, zhang2023preconditioned, jiang2023algorithmic, jia2024preconditioning, stoger2021small} & \cite{tong2021accelerating, zhang2023preconditioned, jia2024preconditioning}  & \cite{jia2024preconditioning,li2024crucial} & \cite{jia2024preconditioning,li2024crucial} & -- \\
         \cmidrule(lr){2-7}
          & ($k < d$) & \cite{chi2019nonconvex, jiang2023algorithmic} & -- & \cite{li2024crucial} &  \cite{li2024crucial} & -- \\
        \midrule
        \multirow{1}{*}{This work} &  (any $k$)  &  \multirow{1}{*}{\checkmark} & \multirow{1}{*}{\checkmark} & \multirow{1}{*}{\checkmark}  & \multirow{1}{*}{\checkmark} & \checkmark\\ 
        \bottomrule
      \end{tabular}%
    }
\end{center}
\end{table}

\medskip
\noindent {\bf Summary of Contributions.} The primary contribution of this work is to provide a gradient descent method for $k$-SVD for any given matrix. 
The method is parameter-free, enjoys global linear convergence, and works for any setting. In contrast to prior works, as summarized in Table \ref{tab:prior_work}, this is the first of such result for the {\em under-parametrized} setting, i.e. $k$ less than rank of matrix, including $k = 1$. 
The method can also be immediately enhanced by means of acceleration techniques such as that of Nesterov's. 
The accelerated method is algorithmically simple, yet enjoys an improved performance. 
Extensive empirical results corroborate our theoretical findings. 
The proof of the global linear convergence result is novel. 
Critical to the analysis is the observation that the gradient descent method for $k$-SVD is similar to the classical  
Heron's method
\footnote{To find the square root of number $a$, Heron's method consist in running the iterations $z_{t+1} = \frac{1}{2}(z_t + \frac{a}{z_t})$ starting 
with some initial point $z_1 > 0$. Heron's method is guaranteed to converge to $\sqrt{a}$ at a quadratic rate.} 
(a.k.a. Babylonian method) for root finding. 
This offers an interesting explanation to why pre-conditioning works. 
We believe this insight might shed further light in the study of gradient descent for 
generic non-convex loss landscapes.

\medskip
\noindent{\bf Organization.} The remainder of the manuscript is organized as follows. In Section \textsection\ref{sec:main}, we present the main results.
In Section \textsection\ref{sec:related}, we present related work. In Section \textsection\ref{sec:warmup}, we provide intuition explaining 
why gradient descent behaves like Heron's method for minimizing $\Vert M - x x^\top \Vert_F^2$. Section \textsection\ref{sec:proof} provides 
the proof of our main Theorem \ref{thm:convergence-1}. Section \textsection\ref{sec:experiments} presents empirical results to validate 
our theoretical findings. It should be noted that these empirical results are limited given the focus on this work being primarily theoretical in nature. Indeed, an important future direction is to develop a robust, scalable implementation of the method proposed in this work.


\section{Main Results}\label{sec:main}

\medskip
\noindent{\bf Gradient descent for $k$-SVD.} Like the power method, the algorithm proceeds by sequentially finding one singular value and its corresponding vector at a time, in a decreasing order until all the $k \geq 1$ leading vectors are found. To find the top singular value and vector of $M$, we minimize the objective
\begin{align}\label{eq:objective}
    g(x; M) = \frac{1}{4} \Vert M - xx^\top \Vert_F^2.
\end{align}
Gradient-descent starts by randomly sampling an initial point $x_0 \in \RR^{n}$  as follows: 
\begin{align}\label{eq:random-init}
    x_0 = M x, \qquad \text{with} \qquad x \sim \cN(0, I_n), 
\end{align}
then updates for $t \geq 0$, 
\begin{align}\label{eq:gradient}
x_{t+1} & = x_{t} - \frac{\eta}{\Vert x_t \Vert^2} \nabla g (x_t; M), 
\end{align}
where $\eta \in (0,1)$. For the above algorithm, we establish the following:
\begin{theorem}\label{thm:convergence-1}
Let $\epsilon> 0$ and $M \in \RR^{n\times n}$ be a symmetric, positive semi-definite  with $\sigma_1 - \sigma_2 > 0$.  Running gradient descent iterations as described in \eqref{eq:gradient} with the choice $\eta=1/2$, ensures that for $t \ge 1$, {$ \vert \Vert x_t \Vert^2 - \sigma_1 \vert \le \epsilon \sigma_1$, $\Vert \Vert x_t \Vert^{-1} x_t - u_1  \Vert \wedge \Vert \Vert x_t \Vert^{-1} x_t + u_1  \Vert \le \epsilon$}, and  
$
\Vert x_{t} + \sqrt{\sigma_1} u_1 \Vert \wedge \Vert x_t - \sqrt{\sigma_1} u_1 \Vert \le \epsilon$
\footnote{The notation $\wedge$ is for $\max$ and $\vee$ is for $\min$.},
for any $\epsilon \in (0, 1)$, so long as 
{
\begin{align*}
    t \ge  \underbrace{\frac{c_1\sigma_1}{\sigma_1-\sigma_2} \log\left( \frac{e}{\epsilon} \frac{\sigma_1}{(\sigma_1 - \sigma_2) }   \right)}_{\textrm{number of iterations to converge within attracting region}} + \underbrace{c_2 \log\left( e \left(\frac{1}{\sigma_1} + \sigma_1\right)\right)}_{\textrm{number of iterations to reach attracting region}}. 
\end{align*}}
where $c_1, c_2$ are constants that only depend on the initial point $x_0$; { with the random initialization \eqref{eq:random-init}, the constants $c_1, c_2$ are almost surely strictly positive.} 
{\color{blue}}
\end{theorem}

\begin{figure}[h]
    \begin{subfigure}[t]{0.48\linewidth}
    \includesvg[width=1\linewidth]{fig_001_GD_traj.svg}
    \caption{ \emph{The Gradient Descent Trajectory.} We generate a random matrix $M$ of dimension $2 \times 2$ and visualize the trajectories of the iterates $(x_{t})_{t\ge0}$ when running the gradient descent iterations \eqref{eq:gradient} with specified initial points along the loss landscape $g$. }
    \label{fig:convergence-acceleration}
    \end{subfigure}\hfill  
   \begin{subfigure}[t]{.48\linewidth}
   \includesvg[width=1\linewidth]{fig_002_lower_env.svg}
    \caption{\emph{Gap vs. Number of Iterations.} Comparison between the power method and accelerated variants of gradient descent when minimizing $g$ for different values of the gap $\sigma_1 - \sigma_2$. See \textsection\ref{sec:experiments} for a detailed description of the experimental setup.}
    \label{fig:convergence-acceleration-2}
   \end{subfigure}
\end{figure}
The proof of Theorem \ref{thm:convergence-1} is presented in \textsection\ref{sec:proof}. In Figure \ref{fig:convergence-acceleration}, we illustrate the convergence result of the gradient descent method. 
{We remark that the condition $\sigma_1 - \sigma_2 > 0$ is not necessary for gradient descent \eqref{eq:gradient} to converge. We only require it to simplify the exposition of our results and  analysis. Indeed, when $\sigma_1 = \sigma_2 = \dots = \sigma_i$ and $\sigma_{i}- \sigma_{i+1}>0$ for some $i < d$, it can be shown that the iterations \eqref{eq:gradient} still converge to some $x_\star$ where $\Vert x_\star \Vert = \sqrt{\sigma_1}$, and $x_\star \in \mathrm{Span}\lbrace u_1, \dots, u_i\rbrace$ with a numer of iterations that is of order $\widetilde{\Omega}\left(\frac{\sigma_i}{\sigma_i - \sigma_{i+1}} \log\left(\frac{1}{\epsilon}\right)\right)$. Furthermore, the requirement that $M$ must be a symmetric, positive semi-definite matrix can be relaxed to that of $M$ being simply a symmetric semi-definite matrix with $\sigma_{1}(M) = \max_{i \in [d]} \vert \sigma_i \vert $. Indeed, our proofs naturally extend under this more general setting, and this requirement is only made to simplify the analysis.  
}

As a consequence of Theorem \ref{thm:convergence-1} it immediately follows that by sequential application of \eqref{eq:gradient}, we can recover all $k$ singular values and vectors (up to a desired precision). 


\begin{theorem}\label{thm:bigtheorem}
    Let $\epsilon > 0$ and $M \in \RR^{n\times n}$ be a symmetric, positive semi-definite matrix with {$\frac{\sigma_i - \sigma_{i+1}}{2 \sigma_i} \ge \epsilon$} for all $i \in [k]$. Sequential application of \eqref{eq:gradient} with $\eta = 1/2$ and  { the random initialization \eqref{eq:random-init}}, recovers $\hat{\sigma}_1, \dots, \hat{\sigma}_k$, and $\hat{u}_1, \dots, \hat{u}_k$ such that: $\vert \hat{\sigma}_i - \sigma_i \vert \le \epsilon \sigma_i  \quad \text{and} \quad \Vert \hat{u}_i + u_i\Vert \wedge \Vert \hat{u}_i - u_i \Vert \le \epsilon,
    $
    so long as the number of iterations, denoted $t$, per every application of \eqref{eq:gradient} satisfies:  
    {
    \begin{align*}
     t \ge C_1 k \max_{i \in [k]}\left(\frac{\sigma_i}{\sigma_i- {\sigma}_{i + 1}} \right) \log\left(\frac{\sigma_1}{\sigma_k} \max_{i \in [k]}\left(\frac{ {\sigma}_i}{(\sigma_i - \sigma_{i + 1}) } \right) \frac{k}{\epsilon}\right) + C_{2} \log\left( e\left( \sigma_1 + \frac{1}{\sigma_k} \right) \right),
    \end{align*}
    where $C_1$ and $C_2$ are constants that are almost surely strictly positive.
    }
\end{theorem}

{
The proof of Theorem \ref{thm:bigtheorem} is deferred to Appendix \ref{app:proof-big-thm}. We remark that the condition $\frac{\sigma_i - \sigma_{i+1}}{2\sigma_i} \ge \epsilon$ is an artifact of the requirement $\sigma_1 - \sigma_2 > 0$ from Theorem \ref{thm:convergence-1}, which is made to simplify the analysis. Moreover, we remark that for a non-symmetric matrix $N \in \RR^{n \times m}$ with SVD $N = U_N \Sigma_N V_N^\top$, computing it $k$-SVD, may be done by running the proposed $k$-SVD with gradient descent on the matrices $N N^\top = U_N \Sigma_N^2 U_N^\top$ and $N^\top N = V_N \Sigma^2 V_N^\top$. However, we may also resort to the \emph{dilation trick}, namely by defining 
$$\mathcal{H}(N) := \begin{bmatrix}
    0 & N \\
    N^\top & 0
\end{bmatrix} = \frac{1}{\sqrt{2}}\begin{bmatrix}
    U_N & U_N \\
    V_N & -V_N
\end{bmatrix} 2 \begin{bmatrix}
    \Sigma_N & 0 \\
    0 & - \Sigma_N
\end{bmatrix}  \frac{1}{\sqrt{2}}\begin{bmatrix}
    U_N & U_N \\
    V_N & -V_N
\end{bmatrix}^\top, 
$$
then running the proposed gradient descent method for minimizing the loss 
\begin{align*}
    \frac{1}{4}\left\Vert \mathcal{H}(N) - \begin{bmatrix}
        x \\ y
    \end{bmatrix} \begin{bmatrix}
        x \\ y
    \end{bmatrix}^\top  \right\Vert_F^2  = \frac{1}{2} \Vert N - x y^\top \Vert_F^2 + \frac{1}{4}( \Vert x \Vert^4 + \Vert y \Vert^4).
\end{align*}
For the above loss, $\mathcal{H}(N)$ is a symmetric, semi-definite matrix, and as highlighted before similar results to those of Theorem \ref{thm:convergence-1} hold. Thus, we may use $k$-SVD on this matrix instead.
}

\medskip
\noindent{\bf Accelerated gradient descent for $k$-SVD.} 
Gradient methods can achieve better performance when augmented with acceleration schemes \cite{nesterov2013introductory, d2021acceleration}. 
Building upon the known literature, we propose to accelerate \eqref{eq:gradient} as follows: 
{
\begin{equation}\label{eq:accelerated gd general}
\begin{aligned}
    y_t &  = x_t + \frac{\sqrt{\rho}}{1+ \sqrt{\rho}} (v_t - x_t)   \\ 
    x_{t+1} & = y_t - \frac{\eta}{\Vert y_t \Vert^2} \nabla g (y_t),   \\
    v_{t+1} & = \left(1- \frac{1}{\sqrt{\rho}}\right) v_t + \frac{1}{\sqrt{\rho}} \left(y_t - \frac{1}{\mu} \nabla g(y_t)\right), 
\end{aligned}
\end{equation}
where $\eta, \mu, \rho > 0$. This accelerated gradient descent method is an adaptation of the general scheme of the so-called optimal method proposed by Nesterov  \cite{nesterov2013introductory}[Chapter 2.2.1]. We establish in Theorem \ref{thm:local acceleration} below, that this scheme offers improved bounds then those obtained in Theorem \ref{thm:convergence-1}, at the expense of the convergence being only local.
}
\begin{theorem} \label{thm:local acceleration} Let $M \in \RR^{n \times n}$ be a symmetric, positive semi-definite matrix with $\sigma_1 - \sigma_2 > 0$. Assume that $x_0 \in \RR^n$, such that $\Vert x_0  \pm \sqrt{\sigma_1}u_1 \Vert \le (\sigma_1 - \sigma_2)^{3/2} /(90\sqrt{2}\sigma_1)$, $v_0 = x_0$. Running the accelerated gradient-descent iterations \eqref{eq:accelerated gd general}
with  $\eta = 1/6$, $\mu < (\sigma_1 - \sigma_2)/4$, $\rho = 9 \sigma_1/\mu $, ensures 
\begin{align}
    & \frac{1}{\sigma_1^{3/2}} \left\vert \frac{\Vert x_{t+1} \Vert}{\sqrt{\sigma_1}} - 1 \right\vert^2  \vee \sqrt{\sigma_1} \left\Vert \frac{x_{t+1}}{\Vert x_{t+1} \Vert} \pm  u_1   \right\Vert^2  \lesssim \left( 1 - \sqrt{\rho} \right)^t \wedge \left( \frac{\rho \sqrt{\mu}}{2 \sqrt{\rho} + t^2 }\right) 
\end{align}
\end{theorem}

The proof of Theorem \ref{thm:local acceleration} is deferred to Appendix \ref{app:acceleration}. 
{When one chooses $\mu$ of order $\Theta(\sigma_1 -\sigma_2)$, and sets $\rho = \Theta(\frac{\sigma_1}{\sigma_1 - \sigma_2})$, then in view Theorem \ref{thm:local acceleration}, the gradient descent scheme \eqref{eq:accelerated gd general} enjoys a convergence rate of order $\left(1-\Theta(\sqrt{\frac{\sigma_1}{\sigma_1 - \sigma_2}})\right)^t$ after $t$ iterations. Thus, to achieve an approximation error of order $\epsilon$, we require $\Omega\left( \sqrt{\frac{\sigma_1}{\sigma_1 - \sigma_2}} \log\left(\frac{1}{\epsilon}\right)\right)$ iterations. In contrast, the gradient descent iterations \eqref{eq:gradient}, in view of Theorem \ref{thm:convergence-1}, would require $\widetilde{\Omega}\left( \frac{\sigma_1}{\sigma_1 - \sigma_2} \log\left(\frac{1}{\epsilon}\right)\right)$ iterations. Finally, we also remark that the presented result shows a gap-independent rate which is similar in spirit to the results of \cite{musco2015randomized, allen2016lazysvd}. 


Indeed the accelerated gradient scheme \eqref{eq:accelerated gd general} requires to adequately choose $\rho$ and $\mu$ along with proper initialization to attain improved convergence rates. In that sense, Theorem \ref{thm:local acceleration} provides evidence of ability to achieve acceleration over vanilla gradient descent but requiring problem specific parameter setting. However, through empirical study, we find a simpler version \eqref{eq:accelerated gd general} of it that achieves similar acceleration. In this variation, starting with random initial point $x_0$ as per \eqref{eq:random-init}, the 
iterations follow 
\begin{equation}\label{eq:accelerated gd}
    \begin{aligned}
    x_{t+1} & = x_t + \beta(x_t - x_{t-1}) - \frac{\eta}{\Vert y_t \Vert^2} \nabla g(y_t; M)  \\
    y_t  & = x_t + \alpha (x_t - x_{t-1}) 
\end{aligned}
\end{equation}
where parameter $\eta, \beta \in (0,1)$ and $\alpha \in \lbrace0, \beta \rbrace$. We set $\alpha = \beta$ (resp. $\alpha = 0$), to recover a reminiscent version of Nesterov's acceleration \cite{nesterov2013introductory} (resp. Polyak's heavy ball method \cite{nemirovskiy1984iterative}) but with an adaptive step-size selection rule. Both methods are appealingly simple. Moreover, as demonstrated in Figure \ref{fig:gap-to-iterations}, both acceleration schemes yield a performance improvement from $\frac{\sigma_1}{\sigma_1 - \sigma_2}$ to $\sqrt{\frac{\sigma_1}{\sigma_1 - \sigma_2}}$ when $\beta$ is well chosen. This suggests that gradient descent with the acceleration schemes \eqref{eq:accelerated gd} is globally convergent, despite Theorem \ref{thm:local acceleration} only showing local convergence. 
}

\section{Related Work}\label{sec:related}


Our work is primarily concerned with $k$-SVD computation which is of fundamental importance and has a very long history. 
Our results broadly relate to three lines of research reviewed below. 

\medskip
\noindent{\bf Methods from numerical linear algebra.} Computation of SVD is typically done via iterative methods 
that rely on one or more key algorithmic building blocks that includes power method \cite{mises1929praktische}, 
Lanczos' iterations \cite{lanczos1950iteration}, and the QR decomposition \cite{francis1961qr}. As is, these
algorithmic building blocks are not always numerically stable. This has led to a significant body of work to identify
stable, efficient algorithms. The stable, efficient variant of algorithms based on these building blocks typically 
transforms the matrix of interest to a bidiagonal matrix and then uses a variant of the QR algorithm, 
see \cite{demmel1990accurate} and \cite{cullum2002lanczos} for example, for an extensive literature overview. 
Such an algorithm, in some form or other, tries to identify singular vectors and values iteratively (either individually or in a block manner). The number of iterations taken by such an algorithm typically depends on the {\em gap} between singular values and the accuracy desired. 
In recent years, there have been {\em gap} independent (but still accuracy dependent) guarantees established 
when the desired accuracy is far above the {\em gap} \cite{musco2015randomized, allen2016lazysvd}. 
It is an understatement that such an algorithm is a workhorse of modern scientific computation and, more specifically, machine learning, cf. \cite{arora2012stochastic,  garber2015fast, shamir2015stochastic, garber2016faster}. The power method, despite not being optimized, is part of this workhorse due to its simplicity \cite{hardt2014noisy}. 
In this work, our objective is not necessarily to develop a method that is {\em better} than that known in literature. Instead, we seek to develop a fundamental understanding of the performance of a gradient based approach for the $k$-SVD problem. Subsequently, it will help understand the implications of acceleration methods on performance improvement. In the process, compare it with the rich prior literature and understand relative strengths and limitations. 






\medskip
\noindent{\bf Gradient descent and nonconvex optimization.} The convergence of gradient descent for 
nonconvex optimization has been studied extensively recently due to interest in empirical risk minimization in the context of large neural network or deep learning \cite{baldi1989neural, zhang2016riemannian, ge2015escaping, lee2016gradient, ge2016matrix, ge2017no, jin2017escape, du2018algorithmic}. 
In \cite{ge2015escaping}, authors established that the stochastic variant of gradient descent, the stochastic gradient descent (SGD), converges asymptotically to local minima and escapes strict saddle points for the tensor decomposition problem. 
In \cite{lee2016gradient}, it was further established that even the vanilla gradient descent converges to 
local minima provided that all saddle points are strict and the step-size is relatively small. 
An efficient procedure for saddle point escaping were also proposed in \cite{jin2017escape}. 
For eigenvalue problem, matrix completion, robust PCA, and matrix sensing tasks, \cite{auchmuty1989unconstrained, auchmuty1989variational, auchmuty1991globally, ge2016matrix,ge2017no} have highlighted that optimization landscapes 
have no spurious local minima, i.e., the objectives possesses only strict saddle points and all their local minima 
are also global. In \cite{balzano2010online, zhang2016riemannian}, authors established convergence of Riemannian 
or geometric gradient descent method.  

While all these works are quite relevant, they primarily provide asymptotic results, require the landscape of objective to satisfy strong conditions, or utilize complicated step-size selection rules where often knowledge of problem specific quantities is needed which are not obvious how to compute apriori. 
As we shall see, this work overcomes such limitations (see Theorem \ref{thm:convergence-1}).

\medskip
\noindent{\bf Matrix factorization as nonconvex optimization.} Matrix factorization can be viewed as a minimization of a nonconvex objective. Specifically, given a symmetric matrix $M$, to obtain rank $k$ approximation, the objective to minimize is  
$ \Vert M - XX^\top \Vert_F^2  $ for $X \in \RR^{n \times k}$. 
The parameterization $X X^\top$ has been often referred to as the Burrer-Monteiro matrix factorization \cite{burer2003nonlinear}. Recently, the study of gradient descent for solving this minimization problem has received a lot of interest, see \cite{candes2015phase, de2015global,jain2017global,chi2019nonconvex, chen2019gradient, tong2021accelerating, stoger2021small, zhang2023preconditioned, xu2023power, jiang2023algorithmic, jiang2014unconstrained, jia2024preconditioning, li2024crucial} and references therein. 
Specifically, progress has been made in three different regimes, namely, \emph{over-parameterized setting} when $k > d$, \emph{exact-parameterized setting} when $k = d$, and \emph{under-parameterized setting} when $k < d$. 
Linear convergence rates for gradient descent were initially established only locally for all regimes (see \cite{du2018algorithmic, chi2019nonconvex}. 
In \cite{stoger2021small}, it was shown that small random initialization is enough to ensure global linear convergence provided gradient descent uses a fixed but small enough step-size, which is problem instance dependent. 
Originally, this was restricted to the regime of $k \ge d$, and subsequently was extended to the regime of $k < d$ in \cite{jiang2014unconstrained}.  
In \cite{tong2021accelerating}, authors proposed using gradient descent with preconditioning, $X_{t+1} \gets X_t - \eta \nabla g(X_t; M) (X_t^\top X_t)^{-1}$, with $\eta > 0$ that is constant. They established linear convergence for $k = d$ with spectral initialization. But then it requires already knowing the SVD of the matrix!  
In \cite{zhang2023preconditioned} extended this result to $k > d$. In \cite{jia2024preconditioning}, authors established that preconditioning with random initialization is in fact sufficient to ensure global linear convergence, when $k = d$. In \cite{li2024crucial}, authors extend these results to $k > d$ and showed that even quadratic convergence rates can be 
achieved using the so-called Nyström initialization. 

\begin{table}[h!]
    \caption{\small Here, we contextualize and compare our contributions to prior work. Specifically, on computing $k$-SVD using gradient-descent for non-convex matrix factorization (i.e., minimizing $\Vert M - X X^\top \Vert_F^2$ over $X \in \RR^{n \times k}$).}\label{tab:prior_work_2}
      \begin{center}
        \resizebox{\columnwidth}{!}{
      \begin{tabular}{ccccccc}
        \toprule
        &  \multirow{2}{*}{\makecell{Parametrization \\ Regime}}  & \multirow{2}{*}{\makecell{Linear \\ Convergence}} & \multicolumn{2}{c}{Step-size Selection} & 
        \multirow{2}{*}{\makecell{Initialization}}  & \multirow{2}{*}{\makecell{(Local)\\ Acceleration}}   \\
        \cmidrule(lr){4-5} 
        & & & Parameter-free & Pre-conditioning &  & \\
        \midrule
        \multirow{1}{*}{Stoger et al. \cite{stoger2021small}} & \multirow{4}{*}{($k > d$)} & \checkmark  & --  & -- & \texttt{small-random-init}  & -- \\
        \multirow{1}{*}{Jia et al. \cite{jiang2023algorithmic}} &  & \checkmark  & 
 --  & -- & \texttt{small-random-init} &--  \\
        \multirow{1}{*}{Zhang et al. \cite{zhang2023preconditioned}} &  & \checkmark  &  --  & -- & \texttt{spectral-init} & --  \\
        \multirow{1}{*}{Li et al. \cite{li2024crucial}} &    & \checkmark  &  --  & \checkmark  & \textcolor{blue}{\texttt{random-init}}  & --\\
        \midrule
        \multirow{1}{*}{Zhang et al. \cite{zhang2023preconditioned}} & \multirow{6}{*}{($k = d$)}  & \checkmark  &  --  & -- & \texttt{spectral-init} & --  \\ 
        \multirow{1}{*}{Tong et al. \cite{tong2021accelerating}} &  & \checkmark  &  --  & \checkmark & \texttt{spectral-init} & --  \\
         \multirow{1}{*}{Stoger et al. \cite{stoger2021small}} &  & \checkmark  &  --  & --  & \texttt{small-random-init} & --  \\
        \multirow{1}{*}{Jia et al. \cite{jiang2023algorithmic}} &  & \checkmark  & 
 --  & -- & \texttt{small-random-init} & --  \\
        \multirow{1}{*}{Jia et al. \cite{jia2024preconditioning}} &  & \checkmark  & \checkmark  & \checkmark  & \textcolor{blue}{\texttt{random-init}} & --  \\ 
         \multirow{1}{*}{Li et al. \cite{li2024crucial}} &   & \checkmark  &  --  & \checkmark  & \textcolor{blue}{\texttt{random-init}}  & --\\

        \midrule
        \multirow{1}{*}{Chi et al. \cite{chi2019nonconvex}} & \multirow{3}{*}{$(k < d)$} & \checkmark  & --  & --  & \texttt{spectral-init} & --  \\
        \multirow{1}{*}{Jiang et al. \cite{jiang2023algorithmic}} &  & \checkmark  & 
 --  & -- & \texttt{small-random-init} &--  \\
        \multirow{1}{*}{Li et al. \cite{li2024crucial}} &  & --  &  --  & \checkmark  & \textcolor{blue}{\texttt{random-init}}  & --\\
        \midrule
        \multirow{1}{*}{This work} &  (any $k$)  &  \multirow{1}{*}{\checkmark} & \multirow{1}{*}{\checkmark} & \multirow{1}{*}{\checkmark}  & \multirow{1}{*}{\textcolor{blue}{\texttt{random-init}}} & \checkmark\\ 
        \bottomrule
      \end{tabular}%
    }
\end{center}
\end{table}

Despite all this progress, none of the aforementioned works consider the \emph{under-parameterized regimes}, except \cite{chi2019nonconvex, jiang2014unconstrained, li2024crucial}. Indeed, \cite{chi2019nonconvex} provided local convergence results for gradient descent with fixed step-size when $k = 1$, see \cite[Theorem 1]{chi2019nonconvex}. In \cite{jiang2014unconstrained}, global linear convergence was established but required small random initialization and fixed step-size when $k \le d$.  In \cite{li2024crucial}, authors considered gradient descent with preconditioning when $k < d$. However, they only showed sub-linear convergence, requiring  $O((1/\epsilon)\log(1/\epsilon))$ to find an $\epsilon$-optimal solution, see \cite[Theorem 2]{li2024crucial}. 
In contrast to the prior works, this work establishes global linear convergence, that is  gradient descent requires $O(\log(1/\epsilon))$ iterations to find an $\epsilon$-optimal solution, see Theorem \ref{thm:convergence-1} and Theorem \ref{thm:bigtheorem}.

\paragraph{Why study the under-parameterized setting?} While the \emph{under parameterized regime}, especially the case $k = 1$ is of fundamental importance as highlighted in \cite{chi2019nonconvex}, note that for $k > 1$, even if one finds an exact solution $X_\star$ that minimizes the objective $\Vert M - X X^\top \Vert_F^2$, then for any $k \times k$, orthogonal matrix $Q$, $X_\star Q$ also minimizes $\Vert M - X X^\top \Vert_F^2$. This rotation problem poses a challenge in using the objective $\Vert M - X X^\top \Vert_F^2$ for $k$-SVD while the objective $\Vert M - x x^\top \Vert_F^2$ doesn't. More importantly, being able to solve the \emph{under parameterized regime} with $k = 1$, allows us to perform $k$-SVD for any $k \ge 1$.




\section{Building Intution with rank $1$ Matrix: Gradient descent is Heron's method}\label{sec:warmup}

A key insight from our analysis is the observation that gradient descent with adaptive step-size \eqref{eq:gradient} behaves like Heron's method. This is in stark contrast, with the observation of \cite{stoger2021small} suggesting that gradient descent for low-rank matrix factorization with a fixed step-size and small random initialization behaves like a power-method at an initial phase. 
To explain this and build intuition, we consider a simple but instructive setup where matrix $M$ has rank $1$. 

First, note that the gradient of the function $g$, as defined in \eqref{eq:objective}, at $x \in \RR^{n}$ is given by
\begin{align}
    \nabla g (x;M) = - M x + \Vert x \Vert^2 x. 
\end{align}
When $M$ is exactly of rank $1$, i.e., $M = \sigma_1 u_1 u_1^\top$, the gradient updates \eqref{eq:gradient} become: for all $t \ge 0$,
\begin{align}\label{eq:grad-iterations-ada-rank1}
    x_{t+1} = \frac{1}{2} \left(x_t +  \frac{ \sigma_1  u_1^\top x_t }{\Vert x_t \Vert^2} u_1\right)
\end{align}
We consider random initialization 
such that $x \sim \mathcal{N}(0, I_n)$ and 
\begin{align}\label{eq:init-grad}
    x_0 & = M x. 
\end{align}
Thus, we see that $x_0 = \sigma_1 (u_1^\top x) u_1$. This, together with the iterations \eqref{eq:grad-iterations-ada-rank1}
implies that for all $t \ge 0$,  $x_t = \Vert x_t \Vert u_1$. Hence, for all $t \ge 0$, we have
\begin{align}\label{eq:heron's iterates}
    \Vert x_{t+1} \Vert u_1 & = \frac{1}{2}\left( \Vert x_t \Vert +  \frac{\sigma_1}{\Vert x_t \Vert} \right) u_1.
\end{align}
We see then that $\Vert x_t \Vert$ is evolving precisely according to Heron's method (a.k.a. the Babylonian method) for finding the square root of $\sigma_1$. This leads to the following proposition.

\begin{proposition}\label{prop:grad convergence rank 1}
Let $M = \sigma_1 u_1 u_1^\top$. Under gradient descent as described in \eqref{eq:gradient} with an 
random initialization as in \eqref{eq:random-init},
$\|x_1\| > 0$ with probability $1$. 

Let $\epsilon_t = (\Vert x_t \Vert/\sqrt{\sigma_1} ) - 1$ for $t \geq 0$. 

Then for $t \geq 1$, $0 < \epsilon_{t+1} \le ( \epsilon_t^2 \wedge \epsilon_t)/2$. 

Subsequently, as $t\to\infty$, $\Vert x_t \pm \sqrt{\sigma_1} u_1 \Vert {\longrightarrow} 0$. 
\end{proposition}
The Proposition of \ref{prop:grad convergence rank 1} follows from basic algebra and precisely corresponds to 
the convergence analysis of Heron's method. 
For completeness, proof is provided in  Appendix \ref{ref:appA}. 
The established convergence guarantee indicates that gradient descent converges at a quadratic rate, i.e. 
in order to reach accuracy $\varepsilon$, the method requires $O(\log\log(1/\varepsilon))$ iterations.

It is worth mentioning that when the rank of $M$ is exactly 1, then the objective $g$ corresponds 
to that of an \emph{exact-parameterization} regime. The random initialization scheme \eqref{eq:init-grad} 
we consider is only meant for ease of exposition and has been studied in the concurrent work of \cite{li2024crucial}. 
Similar to this work, authors in \cite{li2024crucial} obtain quadratic convergence in this \emph{exact-parameterization} 
regime. 
Indeed, if one uses an alternative random initialization, say  $x_1 \sim \mathcal{N}(0, I_n)$, then one 
only obtains a linear convergence rate. Indeed, the goal of this work is to understand gradient descent for any matrix, not just
rank $1$. This is precisely done in \textsection\ref{sec:proof} building upon the above discussion.

\section{Convergence analysis for General Matrix: Establishing Theorem \ref{thm:convergence-1}}\label{sec:proof}

In this section, we present the key ingredients for proving Theorem \ref{thm:convergence-1}. The proof strategy is broken into 
three steps: 
\emph{(i)}  establishing that gradient descent has a natural region of attraction and iterates reach that region; 
\emph{(ii)} once within this region of attraction the iterates $(x_t)_{t \ge 0}$ align with $u_1$ at a linear rate; and 
\emph{(iii)} as the iterate align with $u_1$, the sequence $(\Vert x_t \Vert)_{t \ge 0}$ evolves according to Heron's method for computing $\sqrt{\sigma_1}$ at a linear rate.

To that end, without loss of generality we consider that $x_0$ is randomly initialized as in \eqref{eq:random-init}. This choice of initialization allows us to simplify the analysis as it ensures that $x_1$ is in the image of $M$. Most importantly, our analysis only requires that $\PP( x_1^\top u_1 \neq 0) = 1$. It will be also convenient to introduce, for all $t \ge 0$, $i \in [d]$, the angle $\theta_{i,t}$ between $u_i$ and $x_t$, defined through   
\begin{align}
    \cos(\theta_{i,t}) = \frac{x_t ^\top u_i}{\Vert x_t \Vert}
\end{align}
whenever $\Vert x_t \Vert > 0$. 

\underline{\bf Step 1. Region of attraction.} One hopes that the iterates $x_t$ do not escape to infinity or vanish at $0$. It turns out that this is indeed the case and this is what we present in Lemma \ref{lem:attracting region}. We define: 

{
\begin{align}
    a & = 2\sqrt{\eta (1-\eta)} \left(\vert \cos(\theta_{1,0}) \vert \vee \sqrt{ \frac{\sigma_d}{\sigma_1}} \right) \label{eq:constant a}\\
    b & = 2 (1-\eta)  + \sqrt{\frac{\eta}{1-\eta}} \min\left(\frac{1}{\vert \cos(\theta_{1,0})\vert}, \sqrt{\frac{\sigma_1}{\sigma_d}} \right) \label{eq:constant b}
\end{align}
where it is not difficult to verify that $ 0 \le  a \le 1$ and $b > 1$ almost surely.
}

\begin{lemma}[Region of attraction]\label{lem:attracting region}
Gradient descent as described in \eqref{eq:gradient} with the random initialization \eqref{eq:random-init} 
ensures that the following holds almost surely: 
\begin{itemize}
    \item[(i)] $\forall t > 1,  a \sqrt{\sigma_1}  \le   \Vert x_{t} \Vert  $.
    \item[(ii)] $\forall t > \tau,  \Vert x_{t} \Vert \le b \, \sqrt{\sigma_1}$  where $\tau$ is given by 
\begin{align}
    \tau &= \left\lceil \frac{4}{3\eta}  \log\left( \frac{1}{2}  \left((1- \eta) \frac{\Vert x_0\Vert}{\sqrt{\sigma_1}} + \eta \frac{\sqrt{\sigma_1}}{\Vert x_0\Vert} \right)\right) \right\rceil \vee 1.
\end{align}
\item[(iii)] The sequence $(\vert \cos(\theta_{1,t})\vert )_{t\ge 0}$ is non-decreasing.
\end{itemize}
\end{lemma}
The proof of Lemma \ref{lem:attracting region} is given in Appendix \ref{sec:proof attracting region}. Interestingly, the constants $a$ and $b$ do not arbitrarily degrade with the quality of the initial random point. Thus, the number of iterations required to enter the region $[a \sqrt{\sigma_1}$, $b \sqrt{\sigma_1}]$ can be made constant if for instance one further constrains $\Vert x_1 \Vert = 1$. Furthermore, the fact that $\vert \cos(\theta_{1,t})\vert$ is non-decreasing is remarkable because it means that $ x_t / \Vert x_t \Vert$ can only get closer to $\lbrace -u_1 , u_1 \rbrace$. To see that, note that we have $\Vert  (x_t / \Vert x_t \Vert) \pm u_1 \Vert^2 = 2 \pm 2 \cos(\theta_{1, t}) \ge 2 (1 - \vert \cos(\theta_{1, t})\vert )$. An importance consequence of Lemma \ref{lem:attracting region} is that the gradient descent avoids saddle-point. 

{
\begin{lemma}[Avoiding saddle points]\label{lem:saddle-avoidance}
    Suppose that either $\Vert x_t \Vert^2 \le (\sigma_1 - \sigma_2)/2$ or for some $i \in \lbrace 2, \dots, d \rbrace$, $\vert \Vert x_t \Vert^2 - \sigma_i \vert^2 \le (\sigma_1 - \sigma_2)/2$. Then, gradient descent as described in \eqref{eq:gradient} with the random initialization \eqref{eq:random-init} ensures that 
    \begin{align}
     \Vert \nabla g(x_t; M) \Vert^2  \ge (a^2 \sigma_1 \wedge \Vert x_0\Vert^2)   \left(\frac{ (\sigma_1 - \sigma_{2} )^2}{4}\right) \vert \cos(\theta_{1,0}) \vert^2.   
    \end{align}
  
\end{lemma}
}
We provide a proof of Lemma \ref{lem:saddle-avoidance} in Appendix \ref{sec:proof saddle}.{ We remark that $\nabla g (x; M) = 0$ if  $x = 0$, or $x = \pm \sqrt{\sigma_i} u_i$ for some $i \in \lbrace 2, \dots, d\rbrace$. Therefore, Lemma \ref{lem:saddle-avoidance} entails that during the iteration of the algorithm, the gradient of the iterate cannot vanish if the iterate $x_t$ is close to any of these critical points. The only remaining critical points for which the gradient can vanish are the global optima.}  The lower bound depends on the initial point $x_0$ through $\vert \cos(\theta_{1,0})\vert$ which can be very small. In general, with the random initialization \eqref{eq:random-init}, small values of  $\vert \cos(\theta_{1,0})\vert$ are unlikely.

\underline{\bf Step 2. Alignment at linear rate.} Another key ingredient in the analysis is Lemma \ref{lem:conv singular vector} which we present below:

\begin{lemma}\label{lem:conv singular vector}
    Assume that $\sigma_1 - \sigma_2 > 0$ and let $\tau$ be defined as in Lemma \ref{lem:attracting region}. Gradient descent as described in \eqref{eq:gradient} with random initialization \eqref{eq:random-init} ensures that: $t \ge \tau$, $i \neq 1$, 
    {
    \begin{align}
        \left\Vert \frac{x_t}{\Vert x_t \Vert}  -   u_1  \right\Vert  \left\Vert \frac{x_t }{\Vert x_t \Vert} + u_1  \right\Vert & \le 2 \left(1 - \frac{ \eta (\sigma_1 - \sigma_2) }{ ((1-\eta) b^2 + \eta) \sigma_1 }   \right)^{t-\tau}, \\
        \vert \cos(\theta_{i,t})\vert  & \le \left(1 - \frac{ \eta (\sigma_1 - \sigma_i) }{ ((1-\eta) b^2 + \eta)\sigma_1}   \right)^{t-\tau}.
    \end{align}
    }
\end{lemma}
The proof of Lemma \ref{lem:conv singular vector} is given in  Appendix \ref{sec:proof conv singular vector}. The result shows that gradient descent is implicitly biased towards aligning its iterates with $u_1$. This alignment happens at a linear rate with a factor that depends on $\sigma_1 - \sigma_2$, and this is why we require the condition $\sigma_1 - \sigma_2 > 0$. 
If $\sigma_1 = \sigma_2 > \sigma_3$, then our proof can be 
adjusted to show that $x_t$ will align with the 
subspace spanned by $u_1, u_2$. Thus, our assumptions are without loss of generality.

\underline{\bf Step 3. Convergence with approximate Heron's iterations.} The final step of our analysis is to show that $\left\vert \Vert x_{t}\Vert / \sqrt{\sigma_1} - 1\right\vert$ vanishes at a linear rate which is presented in Lemma \ref{lem:conv singular value}. To establish this result, our proof strategy builds on the insight that the behavior of the iterates $(\Vert x_t \Vert)_{t\ge 0}$ resemble those of an approximate Heron's method. The result is presented when $\eta=1/2$ as this gives the cleanest proof. 
\begin{lemma}\label{lem:conv singular value} 
 Assume that $\sigma_1 - \sigma_2 > 0$. Gradient descent as described in \eqref{eq:gradient} with $\eta = 1/2$ and random initialization \eqref{eq:random-init} ensures that: for all $t > \tau_\star$, 

{
\begin{align}
     \left\vert \frac{\Vert x_{t+2} \Vert}{\sqrt{\sigma_1}} - 1 \right\vert \le \left( \frac{t -\tau_\star }{4} + \frac{2b}{3}\right) \left( \frac{2}{3}  \vee  \left(1 - \frac{\sigma_1 - \sigma_2}{(b^2 + 1)\sigma_1}\right)^2  \right)^{t-\tau_\star},
\end{align}
where 
\begin{align}
    \tau_\star = \left\lceil \frac{8}{3}  \log\left( \frac{1}{4}  \left( \frac{\Vert x_0\Vert}{\sqrt{\sigma_1}} + \frac{\sqrt{\sigma_1}}{\Vert x_0\Vert} \right)\right) \right\rceil \vee 1 + \frac{( b^2 + 1)\sigma_1}{2(\sigma_1 - \sigma_2)}   \log\left(\frac{2}{\vert \cos(\theta_{1,0}) \vert}\left( b + \frac{1}{a} \right)\right). 
\end{align}
}
\end{lemma}

\medskip
\noindent{\bf Proof sketch of Lemma \ref{lem:conv singular value}.} Taking the scalar product of the two sides of the gradient update equation \eqref{eq:gradient} with $u_1$, we obtain
\begin{align}\label{eq:1}
    \Vert x_{t+1} \Vert = \frac{1}{2}\left(\Vert x_t \Vert + \frac{\sigma_1}{\Vert x_t \Vert} \right) \frac{\vert \cos(\theta_{1, t})\vert}{ \vert \cos(\theta_{1, t+1})\vert }.
\end{align}
Next, we can take advantage of Lemma \ref{lem:conv singular vector} to show that the ratio $ \vert \cos(\theta_{1, t})\vert /  \vert \cos(\theta_{1, t+1})\vert$ converges to $1$ at a linear rate. 
This leads to the form of Heron's iterations \ref{eq:heron's iterates}. Starting with this form, we can show convergence of $\Vert x_t \Vert$ to $\sqrt{\sigma_1}$. 
We spare the reader the tedious details of this part and refer them to the complete proof given in Appendix \ref{sec:proof conv singular value}.

\underline{\bf Step 4. Putting everything together.} Below, we present a result which is an immediate consequence of  Lemma \ref{lem:attracting region}, Lemma \ref{lem:conv singular vector}, and Lemma \ref{lem:conv singular value}. 
\begin{theorem}\label{thm:convergence-2}
Assume that $\sigma_1 - \sigma_2 > 0$. Gradient descent as described in \eqref{eq:gradient} with $\eta = 1/2$ and random initialization \eqref{eq:random-init}
ensures that: for all $t > 0$,  

\begin{align}
     \left\Vert \frac{x_{t+\tau}}{\Vert x_{t+ \tau} \Vert}  -   u_1  \right\Vert  \wedge  \left\Vert \frac{x_{t+\tau}}{\Vert x_{t+\tau} \Vert}  +   u_1  \right\Vert & \le 2 \left(1 - \frac{  (\sigma_1 - \sigma_2) }{ (b^2 + 1) \sigma_1 }   \right)^{t}, \\
    \left\vert \frac{\Vert x_{t + \tau} \Vert}{\sqrt{\sigma_1}} - 1 \right\vert & \le \left( \sqrt{\frac{2}{3}}  \vee  \left(1 - \frac{(\sigma_1 - \sigma_2)}{(b^2 + 1)\sigma_1}\right)  \right)^{t}\\
    \Vert x_{t + \tau} - \sqrt{\sigma_1} u_1 \Vert \wedge \Vert x_{t+ \tau} + \sqrt{\sigma_1} u_1 \Vert,  & \le c_1 \sqrt{\sigma_1}  \left( \sqrt{\frac{2}{3}}  \vee  \left(1 - \frac{(\sigma_1 - \sigma_2)}{(b^2 + 1)\sigma_1}\right)  \right)^{t},
\end{align}
where  
\begin{align*}
    \tau :=  \frac{c_2 \sigma_1 }{\sigma_1 - \sigma_2}\log\left(  \frac{e \sigma_1 }{\sigma_1 - \sigma_2}\right)  + \underbrace{c_3 \log\left( \frac{e(\sigma_1^2 + 1) }{\sigma_1} \right)}_{\text{iterations to reach attracting region}}
\end{align*} 

with  positive constants $c_1, c_2, c_3$ that depend only on $x_0$; with random initialization, $c_1, c_2, c_3$ are strictly positive almost surely.
\end{theorem}
The proof is given in Appendix \ref{sec:proof thm1}. Theorem \ref{thm:convergence-1} is an immediate consequence of Theorem \ref{thm:convergence-2}.

\section{Experiments}\label{sec:experiments}

We present here few experimental results illustrating the performance of k-SVD with gradient descent, denoted \texttt{GDSVD}. The method was implemented in both \texttt{C} and \texttt{Python}. We mainly compare with the \texttt{Power Method}. 

\subsection{Experimental Data} 

\noindent \textbf{Synthetic data.} 
We utilize synthetic data to evaluate various aspects of the method. This involves generating matrices of the form $M = U \Sigma V^\top$ where $U, V$ are sampled uniformly at random from the space of semi-orthogonal matrices of size $n \times d$, and $\Sigma = \mathrm{diag}(\sigma_1 ,\dots, \sigma_d)$ where $\sigma_1, \dots, \sigma_d$ being the singular values of $M$. The choice of 
number of non-zero singular values, i.e. $d \geq 1$ and their magnitudes are done in a few different ways to capture various types of matrices:
\begin{itemize}
    \item[] {\em Rank-$1$ matrices.} Here $d = 1$. These are used to inspect the performance of the methods with respect to the size of the matrix. The matrices were generated for up to size $10^5 \times 10^5$.
    
    \item[] {\em Rank-$2$ matrices.} Here $d = 2$. These are used to inspect the performance of the methods with respect to the gap $\sigma_1 - \sigma_2$. For these matrices, we set  $\sigma_1 = 1$ and vary $\sigma_1 - \sigma_2$ in $\lbrace 10^{-k/4}: k \in [20]\rbrace$.

    \item[] {\em Rank-$\lfloor \log(n) \rfloor$ matrices.} These are used to inspect the robustness of the compared methods with respect to the heterogeneity in the distribution of singular values. Specifically, three types of distribution are chosen.

\begin{itemize}
    \item[1.] Exponential Decay: Sample $a \sim \mathrm{Unif}(2, \dots, 10)$ and set $\sigma_i = a^{-i}$, for $i \in [d]$.
    \item[2.] Polynomial Decay: Set $\sigma_i = i^{-1} + 1$ for $i \in [d]$.
    \item[3.] Linear Decay: Sample $a \sim \mathrm{Unif}(1, \dots, 10)$, $b \sim \mathrm{Unif}([0,1])$, then set $\sigma_i = a - b i$ for $i \in [d]$.

\end{itemize}
\end{itemize}

\medskip
\noindent \textbf{Real-word matrices.} To evaluate performance on real-world data, we utilize matrices from datasets \texttt{MNIST} and \texttt{MovieLens(10K, 1M, 10M)}. 

\subsection{Implementation Details} 

\medskip \noindent{\bf \texttt{GDSVD}.} The implementation  \texttt{GDSVD} is done by sequentially applying gradient descent to find one singular value and corresponding vector at a time, until all $k \ge 1$ are found. 
For every run of gradient descent, the stopping conditions utilized is: given parameter $\epsilon > 0$, stop at iteration $t \geq 2$ if
\begin{align}
    \Vert \Vert x_t \Vert^{-1} x_{t} -  \Vert x_{t-1}\Vert^{-1} x_{t-1} \Vert & < \epsilon \nonumber \\ 
    \vert \Vert x_t \Vert - \Vert x_{t-1}\Vert \vert & < \epsilon. 
\end{align}
In all the experiments, we utilize $\epsilon = 10^{-8}$. A detailed pseudo-code for the implementation is provided in Algorithm \ref{alg:gdsvd} provided in Appendix \ref{app:experiments}. 

For acceleration method, we implement two approaches: the Polyak's and Nesterov's. The Polyak's acceleration \texttt{GDSVD(Polyak)} uses Polyak's momentum, i.e., the scheme \eqref{eq:accelerated gd}  with $\alpha = 0$. The acceleration is implemented beyond iteration $t > 100 \beta$ steps before adding momentum in order for the method to converge. The Nesterov's acceleration \texttt{GDSVD(Nesterov)} uses Nesterov's momentum, i.e., the scheme \eqref{eq:accelerated gd}  with $\beta = 0$. This method is robust in that the acceleration can be implemented starting from $t \geq 1$. For both \texttt{GDSVD(Polyak)} and \texttt{GDSVD(Nesterov)}, different values of $\beta$ were tested and the best was reported. 

\medskip \noindent {\bf Power iteration.} We implement a power method for finding the $k$-SVD of a symmetric $M$. 
As with the gradient based method, we also proceed sequentially finding one singular value at a time and corresponding vector at a time. 
The method proceeds by iterating  the equations $x_{t+1} =  \Vert M x_t \Vert^{-1} M x_t $ until convergence. 
The stopping conditions utilized: given parameter $\epsilon > 0$, stop at iteration $t \geq 2$ if
\begin{align}
    \Vert x_{t+1} -    x_t  \Vert  & < \epsilon \nonumber \\ 
    \vert \Vert M x_{t+1} \Vert - \Vert M x_{t} \Vert \vert & < \epsilon. 
\end{align}
The method is initialized in the same fashion as the one with gradient descent. See Algorithm \ref{alg:powersvd} in Appendix \ref{app:experiments} for a detailed pseudo-code.

\medskip\noindent{\bf Dealing with asymmetric matrices.} When considering asymmetric matrices $M$ in the experiments, for both \texttt{GDSVD} and \texttt{Power Method}, we apply $k$-SVD to $M M^\top$ to identify the leading $k$ singular values of $M$, $\hat{\sigma}_1, \dots, \hat{\sigma}_k$ and corresponding left singular vectors $\hat{u}_1, \dots, \hat{u}_k$, we then simply recover the right singular vectors by setting $\hat{v}_i = \hat{\sigma_i}^{-1} M^\top \hat{u}_i$, for $i \in [k]$.

\medskip\noindent{\bf Computation of gradients and matrix-vector multiplications.} In all implemented methods in \texttt{C}, the computation of gradients or more generally matrix-vector multiplications, were parallelized using CBLAS (\url{https://www.netlib.org/blas/}) and its default thread parameter settings, or with OpenCilk (\url{https://www.opencilk.org/}) for efficient memory management and for-loops parallelization.

\medskip\noindent{\bf Machine characteristics.} All experiments were carried on machines with a 6-core, 12-thread Intel chip, or a 10-core, 20-thread Intel chip. 

\subsection{Results} 

We start by summarizing the key findings of the experiments along with the supporting evidence from experimental results for these findings. They are as follows.

First, \texttt{GDSVD} is robust to different singular value decay distribution and scale gracefully with the matrix size $n$. This is inline with the theoretical results. This can be concluded from the results listed in Table \ref{tab:results} across various datasets. Also see Figure \ref{fig:gap-to-iterations}. While $\eta = \frac12$ is a good choice for \texttt{GDSVD}, to understand impact of 
$\eta \in (0,1)$ on the performance, as seen from Figure \ref{fig:gap-to-iterations},
\texttt{GDSVD} still works with different values of $\eta \in (0,1)$ despite our proof only holding for $\eta=1/2$. The choice of $\eta=1/2$ in our theoretical statement was made to simplify the exposition of our proof analysis, these experimental results reinforce our claims that the method should converge for all $\eta \in (0,1)$.

Second, the runtime of the implemented \texttt{GDSVD} is competitive if not faster than the \texttt{Power Method} which reaffirms the benefits of the gradient-descent approach for solving $k$-SVD. This can be concluded from the results listed in Table \ref{tab:results} across various datasets. 

{Third}, the scaling of run time for \texttt{GDSVD} scales linearly with the inverse of the gap, $\sigma_1 - \sigma_2$ as suggested by theoretical results. The Figure \ref{fig:gap-to-iterations} provides evidence for this. 

{Fourth}, the scaling of run time for accelerated version of \texttt{GDSVD} scales as square-root of the inverse of the gap, $\sigma_1 - \sigma_2$ as suggested by theoretical results. This confirms the utility of acceleration and ability for optimization based methods to achieve faster scaling inline with other methods. It is worth noting that the  \texttt{GDSVD(Nesterov)} is relatively better and more robust compared to \texttt{GDSVD(Polyak)}. See Figure \ref{fig:convergence-acceleration-2}  where this is clearly demonstrated. It is worth noting that while theoretical result in Theorem \ref{thm:local acceleration} only shows local convergence, the experimental results suggest that \texttt{GDSVD(Nesterov)} enjoys global convergence with improved performance.

\begin{table}[h!]
    \caption{ All the considered methods were tested on the rank-$\log(n)$ settings varying $n \in \lbrace 50, 75, 100, 200, \dots, 1000\rbrace$, as well as on the real-word datasets. The reported values correspond to the mean and standard deviation across different values of $n$ (resp. number of datasets) and number of threads used for parallelization of the methods for the  rank-$\log(n)$  (resp. real-world setting) settings. Here we report the runtime and the $k$-SVD recovery errors measured as $\epsilon_\Sigma = \Vert \Sigma - \hat{\Sigma}\Vert$ and $\epsilon_{U,V} = \max(\Vert UU^\top - \hat{U} \hat{U}^\top \Vert_F, \Vert VV^\top - \hat{V} \hat{V}^\top \Vert_F)$. For the rank-$\log(n)$ matrices, $k = \lfloor \log(n)\rfloor$. For the real-world matrices $k = 10$.}\label{tab:results} 
      \begin{center}
      \begin{tabular}{c c c c}
        \toprule
        \multirow{2}{*}{\textbf{Datasets}} & \multirow{1}{*}{\makecell{\textbf{Algorithms}
        }} &  \multicolumn{1}{c}{\makecell{\texttt{\textbf{GDSVD}} $(\eta = 0.5)$ }}  & \multicolumn{1}{c}{\makecell{\texttt{\textbf{Power Method}}}}   
          \\
         \cmidrule(lr){2-2}
         \cmidrule(lr){3-3} \cmidrule(lr){4-4} 
         & \makecell{Evaluations } & mean (std) & mean (std)   
                \\
        \midrule
          \multirow{3}{*}{Exponential} & Runtime  &  $1.884$	($4.626$)  &  
          $3.598$	($6.185$)  \\
           & $\epsilon_\Sigma$   & $1.9${\footnotesize x $10^{-13}$}($5.4${\footnotesize x$10^{-13}$}) &  
           $1.7${\footnotesize x $10^{-16}$}($1.6${\footnotesize x$10^{-16}$})   \\
           & $\epsilon_{U, V}$ & $2.8${\footnotesize x$10^{-6}$}	($9.0${\footnotesize x$10^{-6}$})  & 
           $3.4${\footnotesize x$10^{-6}$}	($9.2${\footnotesize x$10^{-6}$}) 
            \\
          \midrule
          \multirow{3}{*}{Polynomial}  & Runtime & 
          $5.400$	($13.42$)  & 
          $6.433$ ($12.61$)   \\
          & $\epsilon_\Sigma$   &  
          $2.9${\footnotesize x$10^{-16}$}	($1.1${\footnotesize x$10^{-16}$}) & 
          $2.3${\footnotesize x$10^{-16}$}	($7.3${\footnotesize x$10^{-17}$})\\
          & $\epsilon_{U, V}$  & 
          $6.1${\footnotesize x$10^{-08}$}	($9.4${\footnotesize x$10^{-09}$}) &
          $1.9${\footnotesize x$10^{-08}$}	($4.5${\footnotesize x$10^{-09}$}) \\

          \midrule  
          \multirow{3}{*}{Linear}  & Runtime  & 
          $10.23$ ($23.38$) &  
          $10.56$ ($21.01$)  \\
          & $\epsilon_\Sigma$  & 
          $1.4${\footnotesize x$10^{-14}$}	($2.0${\footnotesize x$10^{-14}$}) & 
          $4.5${\footnotesize x$10^{-15}$}	($5.1${\footnotesize x$10^{-15}$})  \\
          & $\epsilon_{U, V}$ &  
          $6.2${\footnotesize x$10^{-08}$}	($1.1${\footnotesize x$10^{-0.8}$}) & 
          $2.5${\footnotesize x$10^{-08}$}	($5.6${\footnotesize x$10^{-09}$})  \\  
          \midrule
          \multirow{3}{*}{Real-world}  & Runtime  & $227.2$	($509.6$) & 
          $227.2$  ($509.6$)  \\
          & $\epsilon_\Sigma$    & $1.8${\footnotesize x$10^{-05}$}	($3.1${\footnotesize x$10^{-05}$}) & 
          $1.8${\footnotesize x$10^{-05}$}	($3.1${\footnotesize x$10^{-05}$})
          \\
          & $\epsilon_{U, V}$    &  $2.1${\footnotesize x$10^{-07}$}	($5.3${\footnotesize x$10^{-08}$}) & 
           $1.0${\footnotesize x$10^{-07}$}	($4.0${\footnotesize x$10^{-08}$})  \\
        \bottomrule
      \end{tabular}%
\end{center}
\end{table}

\begin{figure}[ht!]
    \centering
    \begin{subfigure}[b]{0.7\linewidth}
    \includesvg[width=1\linewidth]{fig_004_gap_to_time.svg}
    \caption{\emph{Gap vs. Runtime.}}
    \label{fig:gap-to-iterations-1}
    \end{subfigure}%
    \\
    \begin{subfigure}[b]{.7\linewidth}
    \includesvg[width=1\linewidth]{fig_003_gap_to_iters.svg}
    \caption{\emph{Gap vs. Number of iterations.}}
    \label{fig:gap-to-iterations-2}
    \end{subfigure}
    \caption{Here, we illustrate the runtime and convergence performance of \texttt{GDSVD} in both \texttt{C} and \texttt{Python} as we vary the gap $\sigma_1 - \sigma_2$ in the rank-$2$ setting. The implementations of \texttt{GDSVD} with different values of $\eta \in (0,1)$ were compared with  \texttt{Power Method}. The curves were averaged over values of $n \in \lbrace 50, 100, 200, \dots, 1000 \rbrace$, and the shaded areas correspond to the standard deviations. The doted plots correspond to the lowest and uppermost performance over different values of $n$.}
    \label{fig:gap-to-iterations}
\end{figure}

\section{ Conclusion}\label{sec:discussion}

In this work, we presented a method for $k$-SVD with gradient descent. The  method uses an appealingly simple adaptive step-size towards optimizing a non-convex matrix factorization objective. We showed that the gradient method enjoys a global linear convergence guarantee with random initialization in spite of the nonconvexity of the optimization landscape. The method is shown to converge in the \emph{under-parameterized} setting, a result which to the best of our knowledge has eluded prior work. Critical to our convergence analysis is the observation that the gradient method behaves like Heron's method. We believe that this observation offers an interesting insight that might be of independent interest in the study of nonconvex optimization landscapes. Finally, we carried out numerical experiments that validate our theoretical findings. Through theory and experiments we have also demonstrated that proposed gradient-descent method can be immediately improved using acceleration schemes like Nesterov's. The resulting method is algorithmically simple which contrasts with the overly complicated methods that rely on Lanczos' iterations. An exciting direction for future work that our work leaves open is the global convergence of accelerated gradient descent for nonconvex matrix factorization.

Due to modern compute infrastructures that allow efficient  deployment of gradient methods at scale, we believe our method is likely to find use for computing $k$-SVD for massively large matrices.  Moreover, gradient methods are known to be robust; thus, we foresee that our method will find widespread applications and further development for settings where data are incomplete or noisy.

\section*{Acknowledgements}

The authors would like to acknowledge Emily Gan who helped with running some experiments in a  previous version of this work.

\newpage 
\bibliographystyle{alpha}
\bibliography{references}

\appendix

\newpage

\section{Global Convergence of Gradient Descent}\label{ref:appA}

In this section, we provide the detailed proofs of Proposition \ref{prop:grad convergence rank 1}, Theorem \ref{thm:convergence-1}, and Theorem \ref{thm:convergence-2}. 

The proof of Theorem \ref{prop:grad convergence rank 1} is self-contained and is provided in \textsection\ref{app:prop:gd-conv-rank1} and serves the purpose of building intuition as of why gradient descent behaves like Heron's method.  Theorem \ref{thm:convergence-1} is an immediate consequence of Theorem \ref{thm:convergence-2}, and both their proofs are given in \textsection\ref{sec:proof thm1}. 

The proof of Theorem \ref{thm:convergence-2} builds on the insight that gradient descent behaves like Heron's method which is captured in the proof of Lemma \ref{lem:conv singular value} given in \textsection\ref{sec:proof conv singular vector}. But before that, we need first to ensure that gradient descent has a region of attraction and this made precise in Lemma \ref{lem:attracting region} and its proof is given in \textsection\ref{sec:proof attracting region}. And secondly, we require the iterates of the gradient descent method to align with the leading singular vector which is established in Lemma \ref{lem:conv singular vector} with its proof given in \textsection\ref{sec:proof conv singular vector}.

\subsection{Proof of Proposition \ref{prop:grad convergence rank 1}}\label{app:prop:gd-conv-rank1}

\begin{proof}[Proof of Proposition \ref{prop:grad convergence rank 1}]
 First, by the random initialization \eqref{eq:random-init}, we see that $\Vert x_0 \Vert > 0$, and $x_0 \in \mathrm{Span}\lbrace u_1 \rbrace$ almost surely. Thus, in view of \eqref{eq:gradient}, for all $t \ge 0$, $\vert x_t^\top u_1 \vert = \Vert x_t \Vert $ almost surely. \\
 Next, projecting \eqref{eq:gradient} onto $u_1$, we have for all $t\ge0$, 
\begin{align*}
    x_{t+1}^\top u_1 = \left( (1- \eta)  + \eta \frac{\sigma_1}{\Vert x_t \Vert^2} \right) x_t^\top u_1 
\end{align*}
which leads to 
\begin{align*}
    \Vert x_{t+1} \Vert =   (1- \eta) \Vert x_t \Vert  + \eta \frac{\sigma_1}{\Vert x_t \Vert}. 
\end{align*}
In the above, we recognize the iterations of a Babylonian method (a.k.a. Heron's method) for finding the square root of a real number. For completeness, we provide the proof of convergence. Let us denote 
\begin{align*}
    \epsilon_t = \frac{\Vert x_t \Vert}{\sigma_1} - 1
\end{align*}
by replacing in the equation above we obtain that 
\begin{align*}
    \epsilon_{t+1} & = (1- \eta)(\epsilon_t + 1) + \frac{\eta}{(\epsilon_t + 1)} - 1 \\
    & = \frac{(1-\eta)\varepsilon_t^2 + (1- 2 \eta)\varepsilon_t}{\varepsilon_t + 1}.
\end{align*}
With the choice $\eta = 1/2$, we obtain that 
\begin{align*}
    \epsilon_{t+1} = \frac{\epsilon_t^2 }{2(\epsilon_t + 1)},
\end{align*}
from which we first conclude that for all $t\ge 1$, $\varepsilon_{t+1} > 0$ (note that we already have $\varepsilon_t > - 1 $). Thus we obtain 
\begin{align*}
    0 < \epsilon_{t+1} \le \frac{1}{2}\min\left( \epsilon_t^2, \epsilon_t  \right).
\end{align*}
This clearly implies that we have quadratic convergence. 
\end{proof}

\subsection{Proof of Theorem \ref{thm:convergence-1} and Theorem \ref{thm:convergence-2}}\label{sec:proof thm1}

Theorem \ref{thm:convergence-2} is an intermediate step in proving Theorem \ref{thm:convergence-1} and is therefore included in the proof below.

{
\begin{proof}[Proof of Theorem \ref{thm:convergence-2}.] Let $t\ge 0$. first, let us start by relating the distance $\left\Vert \frac{x_t}{\Vert x_t \Vert} -  u_1 \right\Vert^2$ to that of the two error $\left\vert \frac{\Vert x_t\Vert}{\sigma_1} - 1\right\vert$ and  $\left\Vert \frac{x_t}{\Vert x_t \Vert} - u_1  \right\Vert \wedge \left\Vert \frac{x_t}{\Vert x_t \Vert} + u_1  \right\Vert $. To that end, we start by noting that  
    \begin{align*}
        \left\Vert \frac{x_t}{\Vert x_t \Vert} -  u_1 \right\Vert^2 \left\Vert \frac{x_t}{\Vert x_t \Vert} +  u_1 \right\Vert^2 & = (2 - 2 \cos(\theta_{i,t}))(2 + 2 \cos(\theta_{i,t})) \\
        & \ge 2 \left(   \left\Vert \frac{x_t}{\Vert x_t \Vert} -  u_1 \right\Vert \wedge  \left\Vert \frac{x_t}{\Vert x_t \Vert} +  u_1 \right\Vert  \right)^2
    \end{align*}
    Thus, starting from an application of Lemma \ref{lem:error-decomposition}, we have
    \begin{align*}
        \Vert x_t - \sqrt{\sigma_1} u_1 \Vert^2 \wedge \Vert x_t + \sqrt{\sigma_1} u_1 \Vert^2  & = \sigma_1 \left(\left( \frac{\Vert x_t \Vert}{\sqrt{\sigma_1}} - 1 \right)^2  + \frac{\Vert x_t \Vert}{\sqrt{\sigma_1}} \left(   \left\Vert \frac{x_t}{\Vert x_t \Vert} -  u_1 \right\Vert \wedge  \left\Vert \frac{x_t}{\Vert x_t \Vert} +  u_1 \right\Vert  \right)^2 \right) \\
        & \le \sigma_1 \left(\left( \frac{\Vert x_t \Vert}{\sqrt{\sigma_1}} - 1 \right)^2  + \frac{\Vert x_t \Vert}{2\sqrt{\sigma_1}}   \left\Vert \frac{x_t}{\Vert x_t \Vert} -  u_1 \right\Vert^2 \left\Vert \frac{x_t}{\Vert x_t \Vert} +  u_1 \right\Vert^2 \right).
    \end{align*}
Next, all that remains is to invoke Lemma \ref{lem:attracting region}, Lemma \ref{lem:conv singular vector}, and Lemma \ref{lem:conv singular value} to conclude. By application of Lemma \ref{lem:attracting region}, for all $t\ge \tau$, we have 
\begin{align}
    \frac{\Vert x_t \Vert}{2\sqrt{\sigma_1}} \le \frac{b}{2}. 
\end{align}
By application of Lemma \ref{lem:conv singular vector}, we have for all $t \ge \tau$
\begin{align}
    \left\Vert \frac{x_t}{\Vert x_t \Vert}  -   u_1  \right\Vert^2  \left\Vert \frac{x_t }{\Vert x_t \Vert} + u_1  \right\Vert^2 & \le 4 \left(1 - \frac{  (\sigma_1 - \sigma_2) }{ (b^2 + 1) \sigma_1 }   \right)^{2(t-\tau)},
\end{align}
and since 
\begin{align*}
     \left\Vert \frac{x_t}{\Vert x_t \Vert}  -   u_1  \right\Vert^2  \wedge  \left\Vert \frac{x_t}{\Vert x_t \Vert}  +   u_1  \right\Vert^2  \le  \frac{1}{2}  \left\Vert \frac{x_t}{\Vert x_t \Vert}  -   u_1  \right\Vert^2  \left\Vert \frac{x_t }{\Vert x_t \Vert} + u_1  \right\Vert^2,
\end{align*}
we further have that 
\begin{align*}
     \left\Vert \frac{x_t}{\Vert x_t \Vert}  -   u_1  \right\Vert^2  \wedge  \left\Vert \frac{x_t}{\Vert x_t \Vert}  +   u_1  \right\Vert^2 \le 2 \left(1 - \frac{  (\sigma_1 - \sigma_2) }{ (b^2 + 1) \sigma_1 }   \right)^{2(t-\tau)},
\end{align*}
By application of Lemma \ref{lem:conv singular value}, for all $t \ge \tau_\star + 2$,
\begin{align}
     \left\vert \frac{\Vert x_{t} \Vert}{\sqrt{\sigma_1}} - 1 \right\vert^2 & \le \left( \frac{t -\tau_\star - 2 }{4} + \frac{2b}{3}\right)^2 \left( \sqrt{\frac{2}{3}}  \vee  \left(1 - \frac{(\sigma_1 - \sigma_2)}{(b^2 + 1)\sigma_1}\right)  \right)^{4(t-\tau_\star - 2)}, \\
     & \le \left( \sqrt{\frac{2}{3}}  \vee  \left(1 - \frac{(\sigma_1 - \sigma_2)}{(b^2 + 1)\sigma_1}\right)  \right)^{2(t-\tau_\star - 2)},
\end{align}
provided that, 
\begin{align}
    \left( \frac{t -\tau_\star - 2 }{4} + \frac{2b}{3}\right) \left( \sqrt{\frac{2}{3}}  \vee  \left(1 - \frac{(\sigma_1 - \sigma_2)}{(b^2 + 1)\sigma_1}\right)  \right)^{(t-\tau_\star - 2)} \le 1
\end{align}
which is true, so long as 
\begin{align}
    t- \tau_\star - 2  \ge \tau_3  & := \frac{2}{ \frac{1}{3 + \sqrt{5}} \wedge \frac{\sigma_1 - \sigma_2 }{(b^2 + 1) \sigma_1}} \log\left(  \frac{3}{4 b} \left(\frac{1} {\frac{1}{3 + \sqrt{5}} \wedge \frac{\sigma_1 - \sigma_2 }{(b^2 + 1) \sigma_1}}  \right) \right) \\
    & = 2 \left( (3 + \sqrt{5} ) \vee \frac{(b^2 + 1)\sigma_1}{\sigma_1 - \sigma_2} \right) \log\left( \frac{2}{4 b}\left( (3 + \sqrt{5} ) \vee \frac{(b^2 + 1)\sigma_1}{\sigma_1 - \sigma_2} \right) \right)
\end{align}
Therefore, we obtain 
\begin{align}\label{eq:thm4:result}
      \Vert x_t - \sqrt{\sigma_1} u_1 \Vert^2 \wedge \Vert x_t + \sqrt{\sigma_1} u_1 \Vert^2  \le 3 b \sigma_1  \left( \sqrt{\frac{2}{3}}  \vee  \left(1 - \frac{(\sigma_1 - \sigma_2)}{(b^2 + 1)\sigma_1}\right)  \right)^{2(t-\tau_\star - 2)}
\end{align}
so long as $t \ge \tau_\star + 2 + \tau_3$. Note that we have 
\begin{align}
    \tau_\star + 2 + \tau_3  & =  \frac{8}{3} \log\left( \frac{e}{4}\left( \frac{\Vert x_0 \Vert}{\sqrt{\sigma_1}} + \frac{\sqrt{\sigma_1}}{\Vert x_0\Vert}  \right) \right) \\
    & + \frac{(b^2 + 1) \sigma_1}{\sigma_1 - \sigma_2} \log\left( \frac{2}{\vert \cos(\theta_{1, 0})\vert} \left( b + \frac{1}{a} \right) \right) + 2 \\ 
    & + 2 \left( (3 + \sqrt{5} ) \vee \frac{(b^2 + 1)\sigma_1}{\sigma_1 - \sigma_2} \right) \log\left( \frac{2}{4 b}\left( (3 + \sqrt{5} ) \vee \frac{(b^2 + 1)\sigma_1}{\sigma_1 - \sigma_2} \right) \right) \\  
    & \le \frac{8}{3} \log\left( \frac{e}{4}\left( \frac{\Vert x_0 \Vert}{\sqrt{\sigma_1}} + \frac{\sqrt{\sigma_1}}{\Vert x_0\Vert}  \right) \right) \\
    & + \frac{(b^2 + 1) \sigma_1}{\sigma_1 - \sigma_2} \log\left( \frac{18}{\vert \cos(\theta_{1, 0})\vert} \left( b + \frac{1}{a} \right) \right) \\ 
    & + 2 \left( (3 + \sqrt{5} ) \vee \frac{(b^2 + 1)\sigma_1}{\sigma_1 - \sigma_2} \right) \log\left( \left( (3 + \sqrt{5} ) \vee \frac{(b^2 + 1)\sigma_1}{\sigma_1 - \sigma_2} \right) \right) \\
    & \le \frac{8}{3} \log\left( \frac{e}{4}\left( \frac{\Vert x_0 \Vert}{\sqrt{\sigma_1}} + \frac{\sqrt{\sigma_1}}{\Vert x_0\Vert}  \right) \right)  \\
    & + 2 \left( (3 + \sqrt{5} ) \vee \frac{(b^2 + 1)\sigma_1}{\sigma_1 - \sigma_2} \right) \log\left(  \frac{8}{\vert \cos(\theta_{1,0})\vert} \left(b + \frac{1}{a}\right)  \left( (3 + \sqrt{5} ) \vee \frac{(b^2 + 1)\sigma_1}{\sigma_1 - \sigma_2} \right) \right)
\end{align}
Thus, we have just shown that \eqref{eq:thm4:result} still holds, when  
\begin{align*}
    t \ge \tau := \frac{c_1 \sigma_1 }{\sigma_1 - \sigma_2}\log\left(  \frac{e \sigma_1 }{\sigma_1 - \sigma_2}\right)  + \underbrace{c_2 \log\left( \frac{e(\sigma_1^2 + 1) }{\sigma_1} \right)}_{\text{iterations to reach attracting region}} 
\end{align*}
where $c_1, c_2$ are the corresponding constant that only depend on $x_0$, through $\cos(\theta_{1,0})$.
This concludes the proof.
\end{proof}

\begin{proof}[Proof of Theorem \ref{thm:convergence-1}.] The proof is an immediate application Theorem \ref{thm:convergence-2}. Indeed, Theorem \ref{thm:convergence-2} entails that 
for all $t \ge 0$, 
\begin{align*}
          \Vert x_{t+\tau} - \sqrt{\sigma_1} u_1 \Vert \wedge \Vert x_{t+\tau} + \sqrt{\sigma_1} u_1 \Vert  & \le \sqrt{3 b \sigma_1}  \left( \sqrt{\frac{2}{3}}  \vee  \left(1 - \frac{(\sigma_1 - \sigma_2)}{(b^2 + 1)\sigma_1}\right)  \right)^{t} \\
          & \le \sqrt{3b\sigma_1} \exp\left( - \left( \frac{1}{3 + \sqrt{6}} \wedge  \frac{\sigma_1 - \sigma_2 }{(b^2  +1) \sigma_1} \right) t \right),
\end{align*}
with $$
\tau = \frac{c_1 \sigma_1 }{\sigma_1 - \sigma_2}\log\left(  \frac{e \sigma_1 }{\sigma_1 - \sigma_2}\right)  + c_2 \log\left( \frac{e(\sigma_1^2 + 1)}{\sigma_1} \right).
$$
Let $\epsilon > 0$. We have 
\begin{align*}
     \Vert x_{t+\tau} - \sqrt{\sigma_1} u_1 \Vert \wedge \Vert x_{t+\tau} + \sqrt{\sigma_1} u_1 \Vert  \le \epsilon. 
\end{align*}
whenever, it also holds that 
\begin{align*}
   \sqrt{3b\sigma_1} \exp\left( - \left( \frac{1}{3 + \sqrt{6}} \wedge  \frac{\sigma_1 - \sigma_2 }{(b^2  +1) \sigma_1} \right) t \right) \le \epsilon
\end{align*}
which is equivalent to the condition 
\begin{align*}
    t \ge \left((3+\sqrt{6}) \vee \frac{(b^2+1)\sigma_1}{\sigma_1 - \sigma_2}\right)\log\left( \frac{\sqrt{3b\sigma_1}}{\epsilon}\right).
\end{align*}
Therefore, for all $t' \ge 0$,
\begin{align*}
     \Vert x_{t'+\tau_\star} - \sqrt{\sigma_1} u_1 \Vert \wedge \Vert x_{t'+\tau_\star} + \sqrt{\sigma_1} u_1 \Vert  \le \epsilon. 
\end{align*}
provided 
\begin{align*}
    \tau_\star = \frac{c_1'\sigma_1}{\sigma_1-\sigma_2} \log\left(\frac{e \sigma_1}{(\sigma_1 - \sigma_2) \epsilon}  \right) + c_2 \log\left( \frac{e(\sigma_1^2 + 1)}{\sigma_1} \right). 
\end{align*}
Following a similar reasoning, we also obtain from Theorem \ref{thm:convergence-2}, that under the same condition above, we have 
\begin{align*}
    \left\Vert  \frac{x_{t' + \tau_\star}}{\Vert x_{t' + \tau_\star}\Vert } - u_1 \right\Vert \wedge  \left\Vert  \frac{x_{t' + \tau_\star}}{\Vert x_{t' + \tau}\Vert } + u_1 \right\Vert  & \le \epsilon \\
    \left\vert \frac{\Vert x_{t' + \tau_\star} \Vert}{\sqrt{\sigma_1}} - 1 \right\vert & \le \epsilon. 
\end{align*}
Finally, using the elementary fact that  $\vert z - 1\vert \le \epsilon$, implies $\vert z^2 - 1 \vert \le 3 \epsilon$  for all $z \in \RR$ and $\epsilon \in (0,1)$,  leads to the desired results.
\end{proof}
}

\subsection{Proof of Lemma \ref{lem:attracting region}}\label{sec:proof attracting region}

{
\begin{proof}[Proof of Lemma \ref{lem:attracting region}.]
First, we note that proposed random initialization \eqref{eq:random-init} ensures that the event $\lbrace x_0 \not\in\mathrm{Ker}(M) \rbrace$ holds almost surely. Thus, the condition to apply Lemma \ref{lem:aux1}, Lemma \ref{lem:aux2}, Lemma \ref{lem:aux3}, and Lemma \ref{lem:aux4} in an almost sure sense hold. These lemmas are the key steps we use to establish the desired statement. \\
\underline{\textbf{Step 1.}} We have by Lemma \ref{lem:aux2}, that almost surely, for all $t\ge 1$, 
\begin{align}\label{eq:lem3:eq1}
    \Vert x_t \Vert \ge a \sqrt{\sigma_1}. 
\end{align}
where $a$ is as defined in Lemma \ref{lem:aux2}. \\
\underline{\textbf{Step 2.}} If $\Vert x_1 \Vert \le b \sqrt{\sigma_1}$, then  in view of \eqref{eq:lem3:eq1}, we have $ a \sqrt{\sigma_1} \le \Vert x_1 \Vert \le b\sqrt{\sigma_1}$. Thus, by Lemma \ref{lem:aux3} and using induction we have almost surely that for all $t \ge 1$, 
\begin{align}
    a \sqrt{\sigma_1} \le \Vert x_t \Vert \le b \sqrt{\sigma_1}.
\end{align}
where $b$ is as defined in Lemma \ref{lem:aux3}. \\
\underline{\textbf{Step 3.}} If $\Vert x_1 \Vert > b \sqrt{\sigma_1}$, then by Lemma \ref{lem:aux4}, we have almost surely that for all $t \ge \tau_0$,
\begin{align}
    a \sqrt{\sigma_1} \le \Vert x_t \Vert \le b \sqrt{\sigma_1}.
\end{align}
with $\tau_0  = \left\lceil \frac{\log\left(\frac{\Vert x_1 \Vert}{b \sqrt{\sigma_1}}\right)}{\log\left(\frac{1}{ (1-\eta) + \frac{\eta}{4}}\right)} \right\rceil \vee 1.$ Thus, we have just shown that almost surely for all $t\ge \tau_0$, 
\begin{align}
    a \sqrt{\sigma_1} \le \Vert x_t \Vert \le b \sqrt{\sigma_1}.
\end{align}
\underline{\textbf{Step 4. Simplifying the statement.}} At this stage we are almost done with the proof. Let us further provide an upper bound $\tau_0$ which simplifies the statement of the result. We have
\begin{align*}
    \tau_0 & = \left\lceil \frac{\log\left(\frac{\Vert x_1 \Vert}{b \sqrt{\sigma_1}}\right)}{\log\left(\frac{1}{ (1-\eta) + \frac{\eta}{4}}\right)} \right\rceil \vee 1 \\
    & \overset{(a)}{\le} \left\lceil   \frac{4}{3\eta} \log\left(\frac{\Vert x_1 \Vert}{b \sqrt{\sigma_1}}\right) \right\rceil \vee 1 \\ 
    & \overset{(b)}{\le} \left\lceil \frac{4}{3\eta}  \log\left(\frac{(1- \eta) \Vert x_0\Vert + \eta \frac{\sigma_1}{\Vert x_0\Vert}}{b\sqrt{\sigma_1}} \right) \right\rceil \vee 1 \\
    & = \left\lceil \frac{4}{3\eta}  \log\left( \frac{1}{b}  \left((1- \eta) \frac{\Vert x_0\Vert}{\sqrt{\sigma_1}} + \eta \frac{\sqrt{\sigma_1}}{\Vert x_0\Vert} \right)\right) \right\rceil \vee 1 \\
    & \overset{(c)}{\le} \underbrace{\left\lceil \frac{4}{3\eta}  \log\left( \frac{1}{2}  \left((1- \eta) \frac{\Vert x_0\Vert}{\sqrt{\sigma_1}} + \eta \frac{\sqrt{\sigma_1}}{\Vert x_0\Vert} \right)\right) \right\rceil \vee 1}_{\tau := } 
\end{align*}
where in inequality $(a)$, we use the fact that $\log((1-\eta) + \frac{\eta}{4}) = \log(1- \frac{3\eta}{4}) \le -\frac{3\eta}{4}$, to obtain inequality $(b)$, we used Lemma \ref{lem:aux1} (see statement $(iii)$) to upper bound $\Vert x_1\Vert$, and to obtain inequality $(c)$ we used the fact that $b \ge 2$ which follows from Lemma \ref{lem:aux3}. Setting $\tau = \left\lceil \frac{4}{3\eta}  \log\left( \frac{1}{2}  \left((1- \eta) \frac{\Vert x_0\Vert}{\sqrt{\sigma_1}} + \eta \frac{\sqrt{\sigma_1}}{\Vert x_0\Vert} \right)\right) \right\rceil \vee 1$, it also holds in particular that almost surely for all $t \ge \tau$, $a \sqrt{\sigma_1} \le \Vert x_{t}\Vert \le b\sqrt{\sigma_1}$. 

Finally, the fact that $\vert \cos(\theta_{1,t})\vert_{t \ge 0} $ is non-increasing follows from Lemma \ref{lem:aux1}. This concludes the proof.
\end{proof}

\begin{lemma}\label{lem:aux1}
   Suppose that $x_0 = M x$ and $ x \not\in \mathrm{Ker}(M)$, then for all $t \ge 0$, we have
   \begin{itemize}
    \item [(i)] $\Vert x_t \Vert > 0$ 
    \item [(ii)] $x_t \in \mathrm{Span}\lbrace u_1, \dots, u_d \rbrace$
    \item [(iii)]  $\left((1-\eta) + \eta \frac{\sigma_d}{\Vert x_t \Vert^2}\right) \Vert x_t \Vert  \le \Vert x_{t+1} \Vert \le  \left( (1-\eta) + \eta \frac{\sigma_1}{\Vert x_t \Vert^2}\right) \Vert x_t \Vert$
   \end{itemize}
\end{lemma}

\begin{proof}[Proof of Lemma \ref{lem:aux1}.]
The first step of our proof is to establish the claim that for all $t \ge 0 $, if $(i)$ and $(ii)$ are true then $(iii)$ holds. To that end, let $t \ge 0$ and suppose that both $\Vert x_t \Vert > 0$ and $ x_t \in \mathrm{Span}\lbrace u_1, \dots, u_d \rbrace$ hold true. Starting from \eqref{eq:gradient} we have that  
\begin{align}
    x_{t+1} & \overset{(a)}{=} (1- \eta) x_t + \eta \frac{ M x_t }{\Vert x_t \Vert^2} \\
    & \overset{(b)}{=}  \sum_{i =1}^d \left((1-\eta) + \eta \frac{\sigma_i}{\Vert x_t \Vert^2}\right) (u_i^\top x_t) u_i,  \label{eq:lem:1}
\end{align}
where $(a)$ follows by substituting $\nabla g(x_t;M) = - Mx_t + \Vert x_t \Vert^2 x_t$ in \eqref{eq:gradient}, and $(b)$ follows by substituting with $M = \sum_{i=1}^d \sigma_i u_i u_i^\top$ and using the fact $x_t \in \mathrm{Span}\lbrace u_1, \dots ,u_d\rbrace$ to substitute with $x_t = \sum_{i=1}^d (u_i^\top x_t) u_i$. From \eqref{eq:lem:1}, we see that $x_{t+1} \in \mathrm{Span}\lbrace u_1, \dots, u_d\rbrace$, thus
\begin{align}
    \Vert x_{t+1} \Vert^2 & = \sum_{i=1}^d \left( (1 - \eta) + \eta \frac{\sigma_i}{\Vert x_t \Vert^2} \right)^2(u_i^\top x_t)^2.  \label{eq:lem:2} \\
    & \overset{(c)}{\ge}  \left( (1 - \eta) + \eta \frac{\sigma_d}{\Vert x_t \Vert^2} \right)^2 \sum_{i=1}^d (u_i^\top x_t)^2 \\
    & \overset{(d)}{=}  \left( (1 - \eta) + \eta \frac{\sigma_d}{\Vert x_t \Vert^2} \right)^2 \Vert x_t \Vert^2, 
\end{align}
where the inequality $(c)$ follows from the fact that $\sigma_1 \ge \cdots \ge \sigma_d$, and the equality $(d)$ follows from the fact that $ x_t \in \mathrm{Span}\lbrace u_1, \dots, u_d \rbrace$, which implies that $\Vert x_t \Vert^2 = \sum_{i=1}^d (u_i^\top x_t)^2$. In a similar fashion, we also have 
\begin{align}
    \Vert x_{t+1} \Vert^2 & \le  \left( (1 - \eta) + \eta \frac{\sigma_1}{\Vert x_t \Vert^2} \right)^2 \Vert x_t \Vert^2. 
\end{align}
This concludes the proof of the claim and we are now ready to complete the proof. To that end, observe that it is sufficient to show that that for all $t \ge 0$, \emph{(i)} and \emph{(ii)} hold to complete the proof. We will proceed by induction. \\
\underline{Base Case. $t = 0$.} First, because $x \not \in \mathrm{Ker}(M)$, we see that $ \Vert x_0 \Vert = \Vert M x\Vert > 0$, thus \emph{(i)} is verified. Next, because $x_0 = Mx$ and $M = \sum_{i=1}^d \sigma_i u_i u_i^\top$, it follows that $x_0 \in \mathrm{Span}\lbrace u_1, \dots, u_d\rbrace$, and thus \emph{(ii)} is verified.  \\
\underline{Induction Step. $t \ge 0$.} Let $t \ge 0$ and suppose that $\Vert x_t \Vert > 0$ and $x_t \in \mathrm{Span}\lbrace u_1, \dots, u_d \rbrace$. First, by the established claim above, we have
\begin{align}
    \Vert x_{t+1} \Vert & \ge \left( (1 - \eta) + \eta \frac{\sigma_1}{\Vert x_t \Vert^2} \right)\Vert x_t \Vert  \\
    & \overset{(e)}{>} 0, 
\end{align}
where inequality $(e)$ follows by the induction hypothesis. Hence $\Vert x_{t+1}\Vert> 0$. Next, observe that equation \eqref{eq:lem:1} already establishes that $x_{t+1} \in \mathrm{Span}\lbrace u_1, \dots, u_d\rbrace$. 
\end{proof}

\begin{lemma}\label{lem:aux2}
    Suppose that $x_0 = Mx$ with $x \notin \mathrm{Ker}(M)$. Then, the following properties hold
    \begin{itemize}
        \item [(i)] $(\vert \cos(\theta_{1,t})\vert)_{t \ge 0}$ is a non-decreasing sequence 
        \item [(ii)] $(\vert \cos(\theta_{d,t})\vert)_{t \ge 0}$ is a non-increasing sequence 
        \item[(iii)] for all $t \ge 1$, $\Vert x_t \Vert \ge a \sqrt{\sigma_1}$, where $a = 2\sqrt{\eta (1-\eta)} \left(\vert \cos(\theta_{1,0}) \vert \vee \sqrt{ \frac{\sigma_d}{\sigma_1}} \right)  \in (0, 1]$.
        \item [(iv)] for all $i \in [d]$, either for all $t \ge 0$, $\cos(\theta_{i,t}) \ge 0$, or  all $t \ge 0$, $\cos(\theta_{i,t}) \le 0$
    \end{itemize}
\end{lemma}

\begin{proof}[Proof of Lemma \ref{lem:aux2}.]
    \underline{\textbf{Proving $(i)$ and $(ii)$.}}  To prove $(i)$ and $(ii)$, we show that for all $t \ge 0$, $\vert \cos(\theta_{1,t+1}) \vert \ge \vert \cos(\theta_{1,t}) \vert$ and $\vert \cos(\theta_{d,t+1}) \vert \le \vert \cos(\theta_{d,t}) \vert$. 
    Let $t \ge 0$, and $i \in [d]$.  Starting from \eqref{eq:gradient}, we have for all $i \in [d]$, by projecting both sides of the equation on $u_i$, we have 
    \begin{align}
        u_i^\top x_{t+1} = \left( 1 - \eta + \eta \frac{\sigma_i}{\Vert x_t \Vert^2} \right) u_i^\top x_t 
    \end{align}
    Next, using the fact that $u_i^\top x_t = \cos(\theta_{i,t}) \Vert x_{t} \Vert$, we obtain 
    \begin{align}\label{eq:aux2:eq5}
        \Vert x_{t+1} \Vert  \cos(\theta_{i,t+1}) = \left( (1- \eta) \Vert x_t \Vert + \eta \frac{\sigma_i}{\Vert x_{t} \Vert} \right) \cos(\theta_{i,t})   
    \end{align}
    Thus, for $i=1$ and $i=d$, and applying the absolute value to both sides of the equations, we obtain 
    \begin{align}
        \Vert x_{t+1} \Vert  \vert \cos(\theta_{1,t+1})\vert  = \left( (1- \eta) \Vert x_t \Vert + \eta \frac{\sigma_1}{\Vert x_{t} \Vert} \right) \vert \cos(\theta_{1,t}) \vert \label{eq:aux2:1}\\
        \Vert x_{t+1} \Vert  \vert \cos(\theta_{d,t+1})\vert  = \left( (1- \eta) \Vert x_t \Vert + \eta \frac{\sigma_i}{\Vert x_{t} \Vert} \right) \vert \cos(\theta_{d,t}) \vert \label{eq:aux2:2}
    \end{align}
    We further have by Lemma \ref{lem:aux1} that 
    \begin{align} 
        \Vert x_{t+1} \Vert & \le \left( 1- \eta + \eta \frac{\sigma_1}{\Vert x_t \Vert^2} \right) \Vert x_t \Vert =  \left( (1- \eta) \Vert x_t \Vert + \eta \frac{\sigma_1}{\Vert x_t \Vert} \right) \label{eq:aux2:3}  \\
        \Vert x_{t+1} \Vert & \ge \left( 1- \eta  + \eta \frac{\sigma_d}{\Vert x_t \Vert^2} \right)  \Vert x_t \Vert = \left((1- \eta) \Vert x_t \Vert  + \eta \frac{\sigma_d}{\Vert x_t \Vert} \right)  \label{eq:aux2:4}
    \end{align}
    Plugging \eqref{eq:aux2:1} (resp. \eqref{eq:aux2:2}) in \eqref{eq:aux2:3} (resp. \eqref{eq:aux2:4})  yields 
    \begin{align}
        \Vert x_{t+1} \Vert \vert \cos(\theta_{1,t+1}) \vert  & \ge  \Vert x_{t+1} \Vert \vert \cos(\theta_{1,t}) \vert \\
        \Vert x_{t+1} \Vert \vert \cos(\theta_{d,t+1}) \vert  & \le  \Vert x_{t+1} \Vert \vert \cos(\theta_{d,t}) \vert 
    \end{align}
    Again, by Lemma \ref{lem:aux1}, we have $\Vert x_{t+1}\Vert > 0$, thus it follows from the above that 
     \begin{align}
         \vert \cos(\theta_{1,t+1}) \vert  & \ge  \vert  \cos(\theta_{1,t}) \vert \\
       \vert \cos(\theta_{d,t+1}) \vert  & \le  \vert \cos(\theta_{d,t}) \vert. 
    \end{align} 
\underline{\textbf{Proving $(iv)$.}} Property $(iv)$ follows immediately from \eqref{eq:aux2:eq5}. \\
\underline{\textbf{Proving $(iii)$.}} Let $t \ge 0$. We have by Lemma \ref{lem:aux1}, that 
\begin{align}
    \Vert x_{t+1} \Vert & \ge \left( (1- \eta) + \eta \frac{\sigma_d}{\Vert x_t \Vert^2}\right) \Vert x_t \Vert \\
    & =   \left( (1- \eta) \Vert x_t \Vert + \eta \frac{\sigma_d}{\Vert x_t \Vert}\right) \\
    & \overset{(a)}{\ge} \inf_{x > 0}  \left( (1- \eta) x + \eta \frac{\sigma_d}{x}\right) \\
    & = 2\sqrt{\eta (1-\eta)\sigma_d} \label{eq:aux2:5}
\end{align}
where inequality $(a)$ follows by Lemma \ref{lem:aux1}, which entails that for all $t \ge 0, \Vert x_t\Vert> 0$. On the other hand, we also have 
\begin{align}
     \Vert x_{t+1} \Vert & \overset{(b)}{\ge}   \Vert x_{t+1} \Vert  \vert \cos(\theta_{1,t+1})\vert \\
     & \overset{(b)}{=} \left( (1- \eta) \Vert x_t \Vert + \eta \frac{\sigma_1}{\Vert x_{t} \Vert} \right) \vert \cos(\theta_{1,t}) \vert \\
     & \overset{(d)}{\ge} \left( (1- \eta) \Vert x_t \Vert + \eta \frac{\sigma_1}{\Vert x_{t} \Vert} \right) \vert \cos(\theta_{1,0}) \vert  \\
     & \overset{(e)}{\ge} \inf_{x > 0}  \left( (1- \eta) x + \eta \frac{\sigma_1}{x}\right) \vert \cos(\theta_{1,0}) \vert \\
    & = 2\sqrt{\eta (1-\eta)\sigma_1}  \vert \cos(\theta_{1,0}) \vert \label{eq:aux2:6}
\end{align}
where inequality $(b)$ follows from the fact that $\Vert x_{t+1} \Vert > 0$ (by Lemma \ref{lem:aux1}) and $ \vert \cos(\theta_{1,t+1})\vert \le 1 $, inequality $(c)$  follows because of inequality \eqref{eq:aux2:1}, inequality $(d)$ follows from the fact that $(\vert \cos(\theta_{1,t}) \vert)_{t \ge 0}$ is a non-decreasing sequence, and $(e)$ follows by Lemma \ref{lem:aux1}, which entails that for all $t \ge 0, \Vert x_t\Vert> 0$. 
We conclude from \eqref{eq:aux2:5} and \eqref{eq:aux2:6} that 
\begin{align}
    \Vert x_{t+1} \Vert \ge \underbrace{2\sqrt{\eta (1-\eta)} \left(\vert \cos(\theta_{1,0}) \vert \vee \sqrt{ \frac{\sigma_d}{\sigma_1}} \right)}_{a := }  \sqrt{\sigma_1},
\end{align}
where we set $a = 2\sqrt{\eta (1-\eta)} \left(\vert \cos(\theta_{1,0}) \vert \vee \sqrt{ \frac{\sigma_d}{\sigma_1}} \right)$ and clearly see that $0 < a \le 1$ (because $\sigma_d > 0$ by assumption from the problem setting). This  concludes the proof. 
\end{proof}

\begin{lemma}\label{lem:aux3}
 Suppose that $x_0 = M x$ with $x \not \in \mathrm{Ker}(M)$. For all $t \ge 0$, if $ a \sqrt{\sigma_1} \le \Vert x_t \Vert \le  b \sqrt{\sigma_1}$, then $ a \sqrt{\sigma_1} \le \Vert x_{t+1} \Vert  \le  b \sqrt{\sigma_1}$, where  
 $
 b = 2\left( (1-\eta)  + \frac{1}{2}\sqrt{\frac{\eta}{1-\eta}} \min\left(\frac{1}{\vert \cos(\theta_{1,1})\vert}, \sqrt{\frac{\sigma_1}{\sigma_d}} \right) \right) \ge 2.
 $
\end{lemma}
\begin{proof}[Proof of Lemma \ref{lem:aux3}.]
We prove the result by establishing the following two claims: Let $t \ge 0$.
\begin{itemize}
    \item \textbf{Claim 1.} If $a \sqrt{\sigma_1} \le \Vert x_t \Vert \le \sqrt{\sigma_1}$, then $ a \sqrt{\sigma_1} \le \Vert x_{t+1} \Vert  \le b\sqrt{\sigma_1}$.
    \item \textbf{Claim 2.}  If $ \sqrt{\sigma_1} \le \Vert x_t \Vert \le b \sqrt{\sigma_1}$, then $ a \sqrt{\sigma_1} \le \Vert x_{t+1} \Vert  \le b\sqrt{\sigma_1}$.
\end{itemize}
Before providing a proof of the two claims, let us verify that $b > 1$. Observe that $b = 2\left( (1-\eta) + \frac{\eta}{a}\right)$. Because $ 0 < a \le 1$ and $\eta \in (0,1)$, we see that $b \ge 2$. 

\underline{\textbf{Proving Claim 1}.} Assume that $a \sqrt{\sigma_1} \le \Vert x_t \Vert \le \sqrt{\sigma_1}$. First, we have by Lemma \ref{lem:aux2} (see statement $(iii)$), that  
\begin{align}
    \Vert x_{t+1} \Vert \ge a \sqrt{\sigma_1}. 
\end{align}
Next, we have 
\begin{align}
    \Vert x_{t+1} \Vert  & \overset{(a)}{\le} \left( 1- \eta + \eta \frac{\sigma_1}{\Vert x_t \Vert^2} \right) \Vert x_t \Vert \\
    & = \left( (1- \eta) \Vert x_t \Vert + \eta \frac{\sigma_1}{\Vert x_t \Vert} \right) \\
    & \overset{(b)}{\le} \left( (1- \eta)\sqrt{\sigma_1} + \eta \frac{\sigma_1}{a \sqrt{\sigma_1}} \right)  \\
    & \overset{(c)}{=}  \left( (1-\eta)  + \frac{1}{2}\sqrt{\frac{\eta}{1-\eta}} \min\left(\frac{1}{\vert \cos(\theta_{1,1})\vert}, \sqrt{\frac{\sigma_1}{\sigma_d}} \right)\right) \sqrt{\sigma_1} \\
    & \le  \underbrace{2 \left( (1-\eta)  + \frac{1}{2}\sqrt{\frac{\eta}{1-\eta}} \min\left(\frac{1}{\vert \cos(\theta_{1,1})\vert}, \sqrt{\frac{\sigma_1}{\sigma_d}} \right)\right)}_{b :=  } \sqrt{\sigma_1}.
\end{align}
where inequality \emph{(a)} comes from applying Lemma \ref{lem:aux1}, inequality \emph{(b)} follows from the fact that by assumption $\Vert x_t \Vert\le \sqrt{\sigma}$ and $\frac{1}{\Vert x_t \Vert} \le \frac{1}{a\sqrt{\sigma_1}}$, and equality \emph{(c)} follows from substituting the expression of $a$ as given in \eqref{eq:constant a}. Defining $b = 2 \left( (1-\eta)  + \frac{1}{2}\sqrt{\frac{\eta}{1-\eta}} \min\left(\frac{1}{\vert \cos(\theta_{1,1})\vert}, \sqrt{\frac{\sigma_1}{\sigma_d}} \right)\right)$, concludes the proof of \textbf{Claim 1}. \\
\underline{\textbf{Proving Claim 2.}} Assume that for $ \sqrt{\sigma_1} \le \Vert x_{t}\Vert \le b \sqrt{\sigma_1}$.  Again, as in the proof of \textbf{Claim 1}, by Lemma \ref{lem:aux2} (see statement \emph{(iii)}), we have that $\Vert x_{t+1}\Vert \ge a \sqrt{\sigma_1}$. Next, we have 
\begin{align}
    \Vert x_{t+1}\Vert & \overset{(a)}{\le} \left( 1-\eta + \frac{\sigma_1}{\Vert x_t\Vert^2}\right) \Vert x_{t}\Vert \\
    & \overset{(b)}{\le} \Vert x_t \Vert \\ 
    & \overset{(c)}{\le} b \sqrt{\sigma_1}
\end{align}
where inequality \emph{(a)} follows by Lemma \ref{lem:aux1} (see statement \emph{(iii)}), inequality \emph{(b)} follows from the fact that by assumption $\frac{\sigma_1}{\Vert x_t\Vert^2} \le 1$, and inequality \emph{(c)} follows from the fact that by assumption $\Vert x_t \Vert \le b \sqrt{\sigma_1}$. This concludes the proof of \textbf{Claim 2}.
\end{proof}

\begin{lemma}\label{lem:aux4}
 Suppose that $x_0 = M x$ with $x \not \in \mathrm{Ker}(M)$. If $ \Vert x_1 \Vert > b \sqrt{\sigma_1}$,  then, there exists $\tau > 0$, such that for all $t \ge \tau$, $a \sqrt{\sigma_1}\le \Vert x_{t} \Vert \le b \sqrt{\sigma_1}$. Specifically, we have 
 $$
 \tau = \left\lceil \frac{\log\left(\frac{\Vert x_1\Vert}{b \sqrt{\sigma_1}}\right)}{\log\left(\frac{1}{ (1-\eta) + \frac{\eta}{4}}\right)} \right\rceil \vee 1.
 $$
\end{lemma}
\begin{proof}[Proof of Lemma \ref{lem:aux4}.]
    Let us start by defining 
    \begin{align}
            t_\star = \inf \left\lbrace t \ge 1: \Vert x_{t}\Vert > b \sqrt{\sigma_1} \right\rbrace. 
    \end{align}  
    If $t_\star < \infty$ exists, then it corresponds to the number of iterations before the iterates $(x_t)$  enter the attraction region $[0, b \sqrt{\sigma_1}]$, and since by Lemma \ref{lem:aux2} (see statement \emph{(iii)}) for all $t \ge 1$, $\Vert x_t \Vert \ge a \sqrt{\sigma_1}$, this also means that $t_\star$ is also the number of iterations required to enter $[a \sqrt{\sigma_1}, b\sqrt{\sigma_1}]$. We establish that indeed $t_\star < \infty$. For all $1 \le t \le t_\star$, we have 
    \begin{align}
        \Vert x_{t+1} \Vert & \overset{(a)}{\le} \left( 1-\eta + \eta \frac{\sigma_1}{\Vert x_t \Vert^2}\right) \Vert x_{t}\Vert  \\
        & \overset{(b)}{\le} \left( 1-\eta + \frac{\eta}{b^2} \right) \Vert x_{t} \Vert  \\
        & \overset{(c)}{\le} \left( 1-\eta + \frac{\eta}{4} \right) \Vert x_{t} \Vert \label{eq:aux4:1}
    \end{align}
    where inequality \emph{(a)} follows by Lemma \ref{lem:aux1}, inequality \emph{(b)} follows from the fact that $\Vert x_t \Vert > b \sqrt{\sigma_1}$ for all $1 \le t \le t_\star$ which itself holds by definition of $\tau$, and inequality \emph{(c)} follows from the fact that $b \ge 2$ which holds by Lemma \ref{lem:aux3}. By iterating the equation \eqref{eq:aux4:1} over $ 1 \le t \le t_\star$, we obtain  
    \begin{align}\label{eq:aux4:2}
        \Vert x_{t_\star+1} \Vert \le \left((1-\eta) + \frac{\eta}{4}  \right)^{t_\star} \Vert x_1 \Vert. 
    \end{align}
    Observe that the RHS of \eqref{eq:aux4:2} is exponentially decreasing since $\left((1-\eta) + \frac{\eta}{4}\right) \in (0,1)$. Therefore, for $t_\star$ large enough, the RHS of \eqref{eq:aux4:2} becomes smaller than $b\sqrt{\sigma_1}$. Hence, it must hold that $t_\star \le \infty$. Moreover, if we define 
    \begin{align}
        \tau & :=  \left\lceil \frac{\log\left(\frac{\Vert x_1\Vert}{b \sqrt{\sigma_1}}\right)}{\log\left(\frac{1}{ (1-\eta) + \frac{\eta}{4}}\right)} \right\rceil \vee 1,
    \end{align}
    we see that 
    \begin{align}
            \left((1-\eta) + \frac{\eta}{4}  \right)^{\tau} \Vert x_1 \Vert \le b\sqrt{\sigma_1},
    \end{align}
    which implies that $t_\star + 1 \le \tau$. We have just shown that $\Vert x_{t_\star + 1} \Vert \le b\sqrt{\sigma_1}$. Now, by application of Lemma \ref{lem:aux2} (see statement $(iii)$), we also have that $\Vert x_{t_\star + 1} \Vert \ge  a \sqrt{\sigma_1}$. Next, using Lemma \ref{lem:aux3}, we conclude that for all $t \ge t_\star + 1$, we have $ a \sqrt{\sigma_1} \le \Vert x_t \Vert \le b\sqrt{\sigma_1}$. Further recalling the definition of $\tau$, we also have that for all $t \ge 0$, $ a \sqrt{\sigma_1} \le \Vert x_{\tau + t}\Vert \le b \sqrt{\sigma_1}$.    
\end{proof}
}

\newpage

\subsection{Proof of Lemma \ref{lem:saddle-avoidance}}\label{sec:proof saddle}

{

\begin{proof}[Proof of Lemma \ref{lem:saddle-avoidance}.] 

Before proceeding with the proof let us first note that by the random initialization \eqref{eq:random-init}, we have that $\PP(x_0 \not \in \mathrm{Ker}(M)) = 1$. This will allow us to apply Lemma \ref{lem:aux1} and Lemma \ref{lem:aux2} in almost sure sense. Throughout the proof, we implicitly use the fact that $x_0 \not \in \mathrm{Ker}(M)$ almost surely. 

Our goal is to show that if $\Vert x_t \Vert^2 \le (\sigma_1 - \sigma_2)/2$ or $\vert \Vert x_t \Vert^2 - \sigma_i \vert\le (\sigma_1 - \sigma_2)/2$ for some $i \in \lbrace 2, \dots, d\rbrace$, then $\Vert \nabla g(x_t; M)\Vert \gg 0$.

We know from Lemma \ref{lem:aux1} that $x_t \in \mathrm{Span}\lbrace u_1, \dots, u_d\rbrace$, which entails $x_t = \sum_{i=1}^d (u_i^\top x_t) u_i$, and recalling that $M = \sum_{i=1}^du_i u_i^\top $ gives 
\begin{align}
    \nabla g(x_t; M) & =  - M x_t + \Vert x_t \Vert^2 x_t  \\
    & = \sum_{i=1}^d  (-\sigma_i + \Vert x_t \Vert^2)  (u_i^\top x_t) u_i.
\end{align}
Now, applying the norm to both sides of the inequality yields
\begin{align}
    & \Vert \nabla g(x_t; M) \Vert^2   = \sum_{i=1}^d (\sigma_i - \Vert x_t \Vert^2)^2 (u_i^\top x_t)^2 \\
    & \quad \overset{(a)}{=}  \Vert x_t \Vert^2  \left( \sum_{i=1}^d  (\sigma_i - \Vert x_t \Vert^2)^2  \vert \cos(\theta_{i,t}) \vert^2 \right) \\
    & \quad  \overset{(b)}{\ge}  \Vert x_t \Vert^2 \left(  (\sigma_1 - \Vert x_t \Vert^2)^2  \vert \cos(\theta_{1,t}) \vert^2  +  \min_{ 2 \le i \le d}(\sigma_i - \Vert x_t \Vert^2)^2   \sum_{i = 2}^d  \vert \cos(\theta_{i,t}) \vert^2    \right) \\
    & \quad \overset{(c)}{=} \Vert x_t \Vert^2 \left(  (\sigma_1 - \Vert x_t \Vert^2)^2  \vert \cos(\theta_{1,t}) \vert^2  +  \min_{ 2 \le i \le d}(\sigma_i - \Vert x_t \Vert^2)^2   \vert \sin(\theta_{1,t})\vert^2 \right) \\
    & \quad \overset{(d)}{\ge} (a^2 \sigma_1 \wedge \Vert x_0\Vert^2) \left(  (\sigma_1 - \Vert x_t \Vert^2)^2  \vert \cos(\theta_{1,t}) \vert^2  +  \min_{ 2 \le i \le d}(\sigma_i - \Vert x_t \Vert^2)^2   \vert \sin(\theta_{1,t})\vert^2 \right) \\
    & \quad \overset{(e)}{\ge} (a^2 \sigma_1 \wedge \Vert x_0\Vert^2) \left(  (\sigma_1 - \Vert x_t \Vert^2)^2  \vert \cos(\theta_{1,0}) \vert^2  +  \min_{ 2 \le i \le d}(\sigma_i - \Vert x_t \Vert^2)^2  \vert \sin(\theta_{1,t})\vert^2 \right) 
\end{align}
where, to obtain equality $(a)$ we plugged in $x_t^\top u_i = \cos(\theta_{i,t}) \Vert x_t \Vert$, to obtain inequality $(b)$, we lower bound $(\sigma_i - \Vert x_t \Vert^2)^2 \ge \min_{ 2 \le j \le d} (\sigma_j - \Vert x_t \Vert^2)^2 $ for all $i \in \lbrace 2, \dots, d\rbrace$, to obtain inequality $(c)$, we use the fact that $\sum_{i = 2}^d  \vert \cos(\theta_{i,t}) \vert^2 = 1- \vert \cos(\theta_{1,t}) \vert^2 = \vert \sin(\theta_{1,t}) \vert^2$ (because $x_t \in \mathrm{Span}\lbrace u_1, \dots, u_d\rbrace$ which holds by Lemma \ref{lem:aux1}), to obtain $(d)$, we use the fact that $\Vert x_t \Vert^2 \ge a \sqrt{\sigma_1}$ for all $t\ge 1$ which follows from Lemma \ref{lem:aux2}, , and to obtain $(e)$ we have used the fact that $\vert \cos(\theta_{1,t})\vert \ge \vert \cos(\theta_{1,0} ) \vert$ for all $t \ge 0$, which holds by application of Lemma \ref{lem:aux2}.

Thus, if for some $i \in \lbrace 2, \dots, d \rbrace$, it holds $\vert \Vert x_t \Vert^2 - \sigma_i \vert \le \frac{\sigma_1 - \sigma_2}{2}$, or $\Vert x_t \Vert^2 \le \frac{\sigma_1 - \sigma_2}{2}$, then $
(\sigma_1 - \Vert x_t \Vert^2)^2 \ge \left(\frac{ (\sigma_1 - \sigma_{2})^2 }{4}\right).
$
Therefore, under such condition it must hold   
\begin{align*}
     \Vert \nabla g(x_t; M) \Vert^2  \ge (a^2 \sigma_1 \wedge \Vert x_0\Vert^2)   \left(\frac{ (\sigma_1 - \sigma_{2} )^2}{4}\right) \vert \cos(\theta_{1,0}) \vert^2.   
\end{align*}
This concludes the proof. 
\end{proof}
}

\subsection{Proof of Lemma \ref{lem:conv singular vector}}\label{sec:proof conv singular vector}
{ 
\begin{proof}[Proof of Lemma \ref{lem:conv singular vector}]

    Using similar derivations as in the proof of Lemma \ref{lem:aux2} (see \eqref{eq:aux2:eq5}), we obtain that, for all $i \in \lbrace 1, \dots, d \rbrace$, $t \ge 0$,  
    \begin{align}\label{eq:csv:eq1}
        \Vert x_{t+1} \Vert  \cos(\theta_{i,t+1}) = \left( (1- \eta) \Vert x_t \Vert + \eta \frac{\sigma_i}{\Vert x_{t} \Vert} \right) \cos(\theta_{i,t}).   
    \end{align}

    Observe that the random initialization \eqref{eq:random-init} ensures that for all $i \in [d]$, $\cos(\theta_{i,0}) \neq 0$ almost surely. Moreover, in view  of Lemma \ref{lem:aux1} which entails that for all $t \ge 0$, $\Vert x_t \Vert > 0$ almost surely, and the above equations, it also holds by induction that for all $i \in [d]$, $t \ge 0$, $\cos(\theta_{i,t}) \neq 0$ almost surely.

    Hence, we can proceed by dividing the equations given in \eqref{eq:csv:eq1}. Specifically, for all $i \in \lbrace 2, \dots, d\rbrace$, dividing the one corresponding to $1$  by that corresonding to $i$, we obtain for all $t \ge 0$
     \begin{align}\label{eq: key equation}
     \frac{\vert \cos(\theta_{1, t+1}) \vert}{\vert \cos(\theta_{i, t+1}) \vert} 
     & = \frac{\left( (1-\eta) \Vert x_t \Vert + \eta \frac{\sigma_1}{\Vert x_t \Vert }\right)}{\left( (1-\eta) \Vert x_t \Vert + \eta \frac{\sigma_i}{\Vert x_t \Vert }\right)} \frac{\vert \cos(\theta_{1, t}) \vert}{\vert \cos(\theta_{i, t}) \vert} \\ 
     & = \left(1+ \frac{\left( \eta \frac{\sigma_1 - \sigma_i}{\Vert x_t \Vert }\right)}{\left( (1-\eta) \Vert x_t \Vert + \eta \frac{\sigma_i}{\Vert x_t \Vert }\right)}  \right)\frac{\vert \cos(\theta_{1, t}) \vert}{\vert \cos(\theta_{i, t}) \vert} \\
     & = \underbrace{\left(1 + \frac{ \eta (\sigma_1 - \sigma_i) }{ (1-\eta) \Vert x_t \Vert^2 + \eta \sigma_i}  \right)}_{ > 1 \text{ because } \sigma_1 - \sigma_i > 0}\frac{\vert \cos(\theta_{1, t}) \vert}{\vert \cos(\theta_{i, t}) \vert}.
    \end{align}
    Further rearranging terms gives for all $i \in \lbrace 2, \dots, d\rbrace$, $t \ge 0$,
    \begin{align}\label{eq:svc:eq2}
         \frac{\vert \cos(\theta_{i, t}) \vert}{\vert \cos(\theta_{i, t+1}) \vert}  
         & = \left(1 + \frac{ \eta (\sigma_1 - \sigma_i) }{ (1-\eta) \Vert x_t \Vert^2 + \eta \sigma_i}  \right)\frac{\vert \cos(\theta_{1, t}) \vert}{\vert \cos(\theta_{1, t+1}) \vert}.
    \end{align}
    Now, by Lemma \ref{lem:attracting region}, for all $t \ge \tau$, we have that 
    \begin{align}\label{eq:svc:eq1}
         a \sqrt{\sigma_1} \le \Vert x_t \Vert \le b \sqrt{\sigma_1} \quad \text{and} \quad \vert \cos(\theta_{1, \tau}) \vert  \ge \vert \cos(\theta_{1, \tau+t+1}) \vert, 
    \end{align}
    where $\tau$ is as given in the Lemma \ref{lem:attracting region}.

    Next, observe that for all $t \ge 0$, $ i \in \lbrace 2, \dots,  d \rbrace $,
    \begin{align*}
        \frac{\vert \cos(\theta_{i, \tau})\vert}{\vert \cos(\theta_{i,\tau + t+1})\vert} & \overset{(a)}{=} \prod_{s = 0}^{t} \frac{\vert \cos(\theta_{i, \tau + s})\vert}{\vert \cos(\theta_{i, \tau + s+1})\vert} \\
         & \overset{(b)}{=} \prod_{s = 0}^{t} \left(1 + \frac{ \eta (\sigma_1 - \sigma_i) }{ (1-\eta) \Vert x_{\tau + s} \Vert^2 + \eta \sigma_i}  \right)\frac{\vert \cos(\theta_{1, \tau + s}) \vert}{\vert \cos(\theta_{1, \tau + s+1}) \vert} \\
         & \overset{(c)}{=} \left(\prod_{s = 0}^{t} \left(1 + \frac{ \eta (\sigma_1 - \sigma_i) }{ (1-\eta) \Vert x_{\tau + s} \Vert^2 + \eta \sigma_i}  \right) \right) \frac{\vert \cos(\theta_{1, \tau}) \vert}{\vert \cos(\theta_{1, \tau + t+1}) \vert} \\
         & \overset{(d)}{\ge} \left(1 + \frac{ \eta (\sigma_1 - \sigma_i) }{ (1-\eta) b^2 \sigma_1 + \eta \sigma_i}   \right)^{t+1} \frac{\vert \cos(\theta_{1, \tau}) \vert}{\vert \cos(\theta_{1, \tau + t+1}) \vert}  \\
         & \overset{(e)}{\ge} \left(1 + \frac{ \eta (\sigma_1 - \sigma_i) }{ (1-\eta) b^2 \sigma_1 + \eta \sigma_i}   \right)^{t+1}, 
    \end{align*}
    where equality $(a)$ follows by writing LHS as a result of a telescoping product, equality $(b)$ follows by using \eqref{eq:svc:eq2}, equality $(c)$ by observing that $\prod_{s=0}^t \frac{\vert \cos(\theta_{1, \tau+s})\vert}{\vert \cos(\theta_{1,\tau+s+1})\vert}$ is a telescoping product, and inequality $(d)$  follows from \eqref{eq:svc:eq1} where more precisely we have 
    \begin{align*}
         \frac{\eta(\sigma_1 - \sigma_i)}{(1- \eta) \Vert x_{\tau + s} \Vert^2 + \eta \sigma_i} \ge  \frac{\eta(\sigma_1 - \sigma_i)}{(1- \eta) b^2 \sigma_1 + \eta \sigma_i} > 0,
    \end{align*}
    and inequality $(e)$ also follows from \eqref{eq:svc:eq1}, specifically from the fact that $\vert \cos(\theta_{1, \tau}) \vert \ge \vert \cos(\theta_{1, \tau+t+1}) \vert$. \\
    We have just shown that for all $i \in \lbrace 2, \dots, d \rbrace$, $t \ge 0$, we have 
    \begin{align}
        \vert \cos(\theta_{i,\tau + t}) \vert  & \le {\underbrace{\left(1 + \frac{ \eta (\sigma_1 - \sigma_i) }{ (1-\eta) b^2 \sigma_1 + \eta \sigma_i}   \right)}_{ < 1 \text{ because } \sigma_1 - \sigma_2 > 0}}^{-t} \vert \cos(\theta_{i,\tau}) \vert \label{eq:svc:eq5} \\ 
        & \le \left(1 + \frac{ \eta (\sigma_1 - \sigma_i) }{ (1-\eta) b^2 \sigma_1 + \eta \sigma_i}   \right)^{-t} 
    \end{align}
    This means that $\max_{2\le i \le d}\vert \cos(\theta_{i, t}) \vert \underset{t \to \infty}{\longrightarrow} 0$.

    Observe that 
    \begin{align}
         \left\Vert \frac{x_{\tau + t}}{\Vert x_{\tau + t}\Vert} - u_1 \right\Vert^2 & = 2 (1- \cos(\theta_{i,\tau + t}))\label{eq:svc:eq3} \\ 
         \left\Vert \frac{x_{\tau + t}}{\Vert x_{\tau + t}\Vert} + u_1 \right\Vert^2 & = 2 (1 + \cos(\theta_{i,\tau + t})). \label{eq:svc:eq4} 
    \end{align}
    Therefore, we have  
    \begin{align}
         \left\Vert \frac{x_{\tau + t}}{\Vert x_{\tau + t}\Vert} - u_1 \right\Vert^2  \left\Vert \frac{x_{\tau + t}}{\Vert x_{\tau + t}\Vert} + u_1 \right\Vert^2 & = 4 (1 - \cos^2(\theta_{i,\tau + t})) \\
          & \overset{(a)}{=} 4 \sum_{i=2}^d \cos^2(\theta_{i,\tau + t}) \\
          & \overset{(b)}{\le} 4 \sum_{i=2}^d  \left(1 + \frac{ \eta (\sigma_1 - \sigma_i) }{ (1-\eta) b^2 \sigma_1 + \eta \sigma_i}   \right)^{-2t} \vert \cos(\theta_{i,\tau})\vert^2 \\
          & \overset{(c)}{\le} 4 \left(1 + \frac{ \eta (\sigma_1 - \sigma_2) }{ (1-\eta) b^2 \sigma_1 + \eta \sigma_2}   \right)^{-2t} \sum_{i=2}^d \vert \cos(\theta_{i,\tau})\vert^2 \\
          & \overset{(d)}{=} 4 \left(1 + \frac{ \eta (\sigma_1 - \sigma_2) }{ (1-\eta) b^2 \sigma_1 + \eta \sigma_2}   \right)^{-2t} (1 - \vert \cos(\theta_{1,\tau})\vert^2) \\
          & \overset{(e)}{\le} 4 \left(1 + \frac{ \eta (\sigma_1 - \sigma_2) }{ (1-\eta) b^2 \sigma_1 + \eta \sigma_2}   \right)^{-2t}.
    \end{align}
    where in equality $(a)$ we plug in \eqref{eq:svc:eq3} and \eqref{eq:svc:eq4}, in inequality $(b)$ we use \eqref{eq:svc:eq5}, in inequality $(c)$ we use the fact that $\sigma_2 \ge \cdots \ge \sigma_d$, in equality $(d)$ we use the fact that $\sum_{i=1}^d \cos^2(\theta_{i,\tau +t}) = 1$ which holds since $x_{\tau + t} \in \mathrm{Span}\lbrace u_1, \dots, u_d\rbrace$ by Lemma \ref{lem:aux1}, and in inequality $(e)$ we use the elementary fact that $(1 - \vert \cos(\theta_{1,\tau})\vert^2) \le 1$.  
\end{proof}
}

\subsection{Proof of Lemma \ref{lem:conv singular value}}\label{sec:proof conv singular value}

\begin{proof}[Proof of Lemma \ref{lem:conv singular value}.]
    Let us denote for all $t \ge 0$,
    \begin{align}
        \rho_t & = 1 - \frac{\vert \cos(\theta_{1, t})\vert} {\vert \cos(\theta_{1, t+1})\vert}, \\
        \epsilon_t & = \frac{\Vert x_t \Vert }{\sqrt{\sigma_1}} - 1.
    \end{align}
{
Using similar derivations as in the proof of Lemma \ref{lem:aux2} (see derivations leading to \eqref{eq:aux2:eq5}), we obtain   
    \begin{align}
        \Vert x_{t+1} \Vert \vert \cos(\theta_{1, t+1})\vert = \left((1-\eta)\Vert x_t \Vert + \eta \frac{\sigma_1}{\Vert x_t \Vert} \right) \vert \cos(\theta_{1, t})\vert.
    \end{align}
    Thus, since for all $t \ge 0$, $\Vert x_{t}\Vert = \sqrt{\sigma_1}(\epsilon_t + 1)$, and by definition of $\rho_t$,  we have 
    \begin{align}
        \epsilon_{t+1} & = \left( (1- \eta) (\epsilon_t + 1) + \frac{\eta}{\epsilon_t + 1}\right) (1 - \rho_t) - 1  \\
        & =  \left( (1- \eta) (\epsilon_t + 1) + \frac{\eta}{\epsilon_t + 1}\right)   - 1 - \rho_t \left( (1- \eta) (\epsilon_t + 1) + \frac{\eta}{\epsilon_t + 1}\right) \\
        & = \frac{(1-\eta)\epsilon_t^2 + (1-2\eta)\epsilon_t }{(\epsilon_t + 1)}  - \rho_t \left( (1- \eta) (\epsilon_t + 1) + \frac{\eta}{\epsilon_t + 1}\right). 
    \end{align}
    With the choice $\eta=1/2$, it follows that
    \begin{align}
        \epsilon_{t+1} = \frac{\epsilon_t^2}{2 (\epsilon_t + 1)} - \frac{\rho_t}{2} \left( (\epsilon_t + 1) + \frac{1}{\epsilon_t + 1}\right), \label{eq:csval:eq3}
    \end{align}
    With this we clearly see the similarity between gradient descent and Heron's method. Our goal is to show that both sequence $(\rho_t)_{t \ge 0}$ and $(\epsilon_t)_{t \ge 0}$ converge to zero as $t \to \infty$. \\
\underline{\textbf{Step 1. Convergence of $(\rho_t)_{t \ge 0}$.}} First, we verify that $\rho_t \underset{t \to \infty}{\longrightarrow} 0$. Let $\tau_1 = \tau$ where $\tau$ is as defined in Lemma \ref{lem:attracting region}.} We have {
for all $t \ge \tau_1$,
\begin{align}
    \rho_t & = 1 - \frac{\vert \cos(\theta_{1,t} ) \vert }{\vert \cos(\theta_{1,t+1} ) \vert} \\ 
    & = \frac{\vert \cos(\theta_{1,t+1} ) \vert - \vert \cos(\theta_{1,t} ) \vert }{\vert \cos(\theta_{1,t+1} ) \vert} \\
    &  \overset{(a)}{\le}  \frac{1 - \vert \cos(\theta_{1,t} ) \vert }{\vert \cos(\theta_{1,t+1} ) \vert} \\
    & \overset{(b)}{\le}  \frac{1 - \vert \cos(\theta_{1,t} ) \vert^2 }{\vert \cos(\theta_{1,t+1} ) \vert} \\
    & \overset{(c)}{\le}  \frac{1 - \vert \cos(\theta_{1,t} ) \vert^2 }{\vert \cos(\theta_{1,0} ) \vert} \\
    & \overset{(d)}{=} \frac{1}{\vert \cos(\theta_{1,0} ) \vert} \left(  \sum_{i=2}^d \cos^2(\theta_{i, t})\right) \\
    & \overset{(e)}{\le} \frac{1}{\vert \cos(\theta_{1,0} ) \vert} \left(   1 + \frac{\eta(\sigma_1 - \sigma_2)}{(1-\eta)b^2 \sigma_1 + \eta \sigma_2}\right)^{-2(t-\tau)} \label{eq:csval:eq}
\end{align}
where inequality $(a)$ follows by upper bounding $\vert \cos(\theta_{i,t+1})\vert \le 1$, inequality $(b)$ follows from $\vert \cos(\theta_{i,t})\vert^2 \le \vert \cos(\theta_{i,t})\vert \le 1$, inequality $(c)$ follows from Lemma \ref{lem:aux2} where we have $\vert \cos(\theta_{1, t+1})\vert \ge \vert \cos(\theta_{1, 0})\vert$, and equality $(e)$ follows by carrying similar derivations as in the proof of Lemma \ref{lem:conv singular vector}. This clearly shows that $\rho_t \underset{t \to \infty}{\longrightarrow} 0$.\\
}

\medskip 

{\underline{\textbf{Step 2. Convergence of $(\epsilon_t)_{t \ge 0}$.}} Now, let us show that $\epsilon_t \underset{t \to \infty}{\longrightarrow} 0$. 
    By Lemma \ref{lem:attracting region}, for all $t \ge \tau_1 = \tau $, we have $ a \sqrt{\sigma_1} \le \Vert x_{\tau} \Vert \le b \sqrt{\sigma_1}$ where $\tau$ is as given in Lemma \ref{lem:attracting region}. This means, by definition of $\epsilon_t$, that 
    $0 < a  \le \epsilon_t + 1 \le b$. Therefore, starting from  \eqref{eq:csval:eq3}, we obtain  
    \begin{align}
        \epsilon_{t+1} \ge - \frac{\rho_t}{2}\left(b + \frac{1}{a}\right). \label{eq:csval:eq4}
    \end{align}
    Recalling \eqref{eq:csval:eq}, we see that for $t$ large enough we have $- \frac{\rho_t}{2}\left(b + \frac{1}{a}\right) \ge -\frac{1}{4}$ which would also mean that $\epsilon_{t+1} \ge -\frac{1}{4}$. Indeed, from \eqref{eq:csval:eq}, we have 
    \begin{align}\label{eq:csval:eq7}
        \frac{\rho_t}{2}\left(b + \frac{1}{a}\right) \le \frac{1}{2 \vert \cos(\theta_{1,0} ) \vert} \left(b + \frac{1}{a}\right) \left(   1 + \frac{\eta(\sigma_1 - \sigma_2)}{(1-\eta)b^2 \sigma_1 + \eta \sigma_2}\right)^{-2(t-\tau_1)} \le \frac{1}{4}
    \end{align}
    so long as, 
    \begin{align}
        t- \tau_1 \ge  \frac{\log\left(\frac{2}{\vert \cos(\theta_{1,0}) \vert}\left( b + \frac{1}{a} \right)\right)}{2 
        \log\left(   1 + \frac{\eta(\sigma_1 - \sigma_2)}{(1-\eta)b^2 \sigma_1 + \eta \sigma_2}\right) 
        }. 
    \end{align}
    We can further simplify the lower bound above by noting that  
    \begin{align*}
        \frac{\log\left(\frac{2}{\vert \cos(\theta_{1,0}) \vert}\left( b + \frac{1}{a} \right)\right)}{2 
        \log\left(   1 + \frac{\eta(\sigma_1 - \sigma_2)}{(1-\eta)b^2 \sigma_1 + \eta \sigma_2}\right)} & = \frac{\log\left(\frac{2}{\vert \cos(\theta_{1,0}) \vert}\left( b + \frac{1}{a} \right)\right)}{- 2 
        \log\left(  1 - \frac{\eta(\sigma_1 - \sigma_2)}{((1-\eta)b^2  + \eta) \sigma_1} \right)} \\
        & \overset{(a)}{\le} \tau_2 := \frac{((1-\eta) b^2 + \eta)\sigma_1}{2\eta(\sigma_1 - \sigma_2)}   \log\left(\frac{2}{\vert \cos(\theta_{1,0}) \vert}\left( b + \frac{1}{a} \right)\right)
    \end{align*}
    where in inequality $(a)$ we used that the fact that $\exp(-x) \ge 1 - x$

    Therefore, recalling \eqref{eq:csval:eq4}, we have for all $t \ge \tau_\star := \tau_1 + \tau_2 $, 
    \begin{align}\label{eq:csval:eq6}
        \epsilon_{t+1} \ge - \left[\left(\frac{\rho_t}{2}\left(b + \frac{1}{a}\right)\right) \wedge \frac{1}{4} \right]. 
    \end{align}
    Next, let $t \ge \tau_\star + 1$, we have just shown that $\epsilon_t \ge -\frac{1}{4}$. Thus, starting from \eqref{eq:csval:eq3}, we have
    \begin{align}
        \epsilon_{t+1} & = \frac{\epsilon_t^2}{2 (\epsilon_t + 1)} - \frac{\rho_t}{2} \left( (\epsilon_t + 1) + \frac{1}{\epsilon_t + 1}\right) \\
         & \overset{(a)}{\le} \frac{\epsilon_t^2}{2 (\epsilon_t + 1)} \\
        & \overset{(b)}{=}  \frac{\epsilon_t^2}{2(\epsilon_t + 1)} \mathbf{1} \left\lbrace \vert \epsilon_t \vert \le \frac{1}{4} \right\rbrace + \frac{\epsilon_t^2}{2(\epsilon_t + 1)} \mathbf{1} \left\lbrace  \epsilon_t  > \frac{1}{4} \right\rbrace  \\
        & \overset{(c)}{\le}  \frac{2\epsilon_t^2}{3} \mathbf{1}\left\lbrace \vert \epsilon_t \vert \le \frac{1}{4} \right\rbrace + \frac{\epsilon_t}{2} \mathbf{1} \left\lbrace  \epsilon_t  > \frac{1}{4} \right\rbrace   \\
        & \overset{(d)}{\le}   \frac{2\vert \epsilon_t \vert}{3}   \label{eq:csval:eq5}
    \end{align}
    where inequality $(a)$ holds because $\rho_t > 0$ and $\epsilon_t + 1> 0$, equality $(b)$ follows from the fact that $\epsilon_t \ge -\frac{1}{4}$, inequality $(c)$ holds because $\epsilon_t + 1 \ge \frac{3}{4}$ if $\vert \epsilon_t \vert \le \frac{1}{4}$, and $\epsilon_t + 1 \ge \vert \epsilon_t \vert$ if $\epsilon_t  > \frac{1}{4}$, and inequality $(d)$ follows from the fact that $\epsilon_t^2 \le \vert \epsilon_t \vert$ if $\vert \epsilon_t \vert \le \frac{1}{4}$.\\
    In view of \eqref{eq:csval:eq6} and \eqref{eq:csval:eq5}, we have just shown that $t > \tau_\star$, $
         - \frac{\rho_t}{2}\left(b + \frac{1}{a}\right) \le  \epsilon_{t+1} \le \frac{2}{3}  \vert\epsilon_t \vert$, thus implying that, 
    \begin{align*}
         \vert \epsilon_{t+1} \vert  & \le \frac{2 \vert\epsilon_t \vert}{3} + \frac{\rho_t}{2}\left(b + \frac{1}{a}\right) \\
         & \overset{(a)}{\le} \frac{2 \vert\epsilon_t \vert}{3} + \frac{1}{2 \vert \cos(\theta_{1,0} ) \vert} \left(b + \frac{1}{a}\right) \left(   1 + \frac{\eta(\sigma_1 - \sigma_2)}{(1-\eta)b^2 \sigma_1 + \eta \sigma_2}\right)^{-2(t-\tau_1)} \\
         & =  \frac{2 \vert\epsilon_t \vert}{3} + \frac{1}{2 \vert \cos(\theta_{1,0} ) \vert} \left(b + \frac{1}{a}\right) \left(   1 + \frac{\eta(\sigma_1 - \sigma_2)}{(1-\eta)b^2 \sigma_1 + \eta \sigma_2}\right)^{-2(t - \tau_2-\tau_1  + \tau_2  )} \\
         & \overset{(b)}{\le} \frac{2 \vert\epsilon_t \vert}{3} + \frac{1}{4} \left(   1 + \frac{\eta(\sigma_1 - \sigma_2)}{(1-\eta)b^2 \sigma_1 + \eta \sigma_2}\right)^{-2(t - \tau_\star - 1)},
    \end{align*}
    where in $(a)$ we used \eqref{eq:csval:eq}, in equality $(b)$, and in inequality $(b)$ we used the inequality \eqref{eq:csval:eq7} which holds true $t = \tau_2 + 1$.
    Now, by recursively using the inequality above from $t = \tau_\star + 1$ to $t = \tau_\star + k$, we obtain for all $k \ge 1$, 
    \begin{align}
        \vert \epsilon_{\tau_\star + k + 1 } \vert & \le \left(\frac{2 }{3}\right)^{k} \vert\epsilon_t \vert+ \frac{1}{4}\left( \sum_{i=0}^{k-1}\left(\frac{2}{3}\right)^{k-1-i}   \left( 1 + \frac{\eta(\sigma_1 - \sigma_2)}{(1-\eta)b^2 \sigma_1 + \eta \sigma_2}\right)^{-2i} \right) \\
         & \overset{(a)}{\le} \left(\frac{2 }{3}\right)^{k} (b-1) + \frac{1}{4}\left( \sum_{i=0}^{k-1}\left(\frac{2}{3}\right)^{k-1-i}   \left( 1 + \frac{\eta(\sigma_1 - \sigma_2)}{(1-\eta)b^2 \sigma_1 + \eta \sigma_2}\right)^{-2i} \right) \\
         & \overset{(b)}{\le} \left(\frac{2 }{3}\right)^{k} (b-1)  + \frac{k}{4} \left( \left( \frac{2}{3}\right) \vee \left( 1 - \frac{\eta}{(1-\eta)b^2 + \eta} \left(\frac{\sigma_1 - \sigma_2}{\sigma_1} \right)\right)^2 \right)^{k-1}, \\
         & \overset{(c)}{=} \left(\frac{2 }{3}\right)^{k} (b-1)  + \frac{k}{4} \left( \left( \frac{2}{3}\right) \vee \left( 1 - \frac{1}{b^2 + 1} \left(\frac{\sigma_1 - \sigma_2}{\sigma_1} \right)\right)^2 \right)^{k-1}, \\
         & \le \left( \frac{2(b-1)}{3} +  \frac{k}{4}\right) \exp\left[ -  \left( \frac{1}{3} \wedge  \left( \frac{2(\sigma_1 - \sigma_2)}{(b^2 + 1)\sigma_1 }\right)\right) (k-1)  \right]
    \end{align}
    where in $(a)$ we used the fact that $\vert \epsilon_t\vert \le b-1$ for all $t \ge 1$, in inequality $(b)$, we used Lemma \ref{lem:series inequality} with $\alpha = \frac{2}{3}$ and $\gamma = \left( 1 + \frac{\eta(\sigma_1 - \sigma_2)}{(1-\eta)b^2 \sigma_1 + \eta \sigma_2}\right)^{-2} = \left( 1 - \frac{\eta}{(1-\eta)b^2 + \eta} \left(\frac{\sigma_1 - \sigma_2}{\sigma_1} \right)\right)^2$, and in equality $(c)$, we simply used the fact that  $\eta= 1/2$. 
    Thus, we have just shown that for all $t \ge \tau_\star$,
    \begin{align}
        \vert \epsilon_{t + 2} \vert & \le \left( \frac{t -\tau_\star + 1}{4} + \frac{2(b-1)}{3}\right) \left( \frac{2}{3}  \vee  \left(1 - \frac{\sigma_1 - \sigma_2}{(b^2 + 1)\sigma_1}\right)^2  \right)^{t-\tau_\star}.
    \end{align}    
}
\end{proof}

\newpage 
\section{Proof of Theorem \ref{thm:bigtheorem}} \label{app:proof-big-thm}

{
\begin{proof}[Proof of Theorem \ref{thm:bigtheorem}.]
Let us recall that $M_1 = M$, and $M_{\ell} = M - \sum_{i=1}^{\ell-1} \hat{\sigma}_i \hat u_i \hat u_i^\top$ for $\ell \in \lbrace 2, \dots,  k \rbrace$, where for each $\ell \in [k]$, $\hat{\sigma}_\ell, \hat{u}_\ell$ are obtained by running the gradient descent iterations \eqref{eq:gradient} for a rank-1 approximation of the matrix $M_\ell$. Let us denote $\tilde{\sigma}_\ell$ the maximum eigenvalues of $M_\ell$ (i.e.,  $\tilde{\sigma}_\ell = \sigma_{\max}(M_\ell)$), and $\tilde{u}_\ell$ its corresponding eigenvectors (i.e., $M_\ell \tilde{u}_\ell = \tilde{\sigma}_\ell \tilde{u}_\ell$). Let us also denote $\tilde{\sigma}_{\ell,2}$ the second largest eigenvalue of $M_\ell$ (i.e., $ \tilde{\sigma}_{\ell,2} = \sigma_{2}(M_\ell)$).

Let $\rho > 0$. For each $\ell \in [k]$, let us denote $T_\ell(\rho)$ the number iterations required for the application of gradient descent on the matrix $M_\ell$ to ensure that  
\begin{align}\label{eq:grad-app}
    \vert \tilde{\sigma}_\ell - \hat \sigma_\ell \vert \le \rho \sigma_\ell \qquad \text{and} \qquad \Vert \tilde{u}_\ell - \hat u_\ell \Vert \wedge \Vert \tilde{u}_\ell +\hat u_\ell \Vert \le \rho  
\end{align}
Note that 
$T_\ell(\rho)$ exists by Lemma \ref{lem:conv singular vector} and Lemma \ref{lem:conv singular value} (or Theorem \ref{thm:convergence-1}). For now let us keep $\rho$ arbitrary, we will precise its choice later in the proof.

Before proceeding with the proof let us further define for all $\ell \in [k]$, a shorthand for the error term: 
\begin{align}
    \Delta_\ell := \left\Vert  M_\ell -    \sum_{i = \ell}^d \sigma_i u_i u_i^\top  \right\Vert. 
\end{align}

\underline{\textbf{Step 1.}} Let $\ell \ge 1$, Next, we have 
\begin{align}
    \vert \sigma_{\ell } - \hat{\sigma}_{\ell }\vert & \le \vert\sigma_{\ell} - \tilde{\sigma}_{\ell} \vert  + \vert \tilde \sigma_{\ell} - \hat{\sigma}_{\ell} \vert 
\end{align}
Since $\tilde{\sigma}_{\ell} = \sigma_{\max}(M_{\ell})$, we have by Weyl's inequality, 
\begin{align*}
    \vert \sigma_{\ell} - \tilde \sigma_{\ell} \vert, \vert \sigma_{\ell+1} - \tilde \sigma_{\ell,2} \vert \le \left\Vert M_{\ell} - \sum_{i=\ell}^{d} \sigma_i u_i u_i^\top \right\Vert = \Delta_{\ell}. 
\end{align*}
Moreover, by application of gradient descent \ref{eq:grad-app} we have $\vert \tilde{\sigma}_{\ell} - \hat \sigma_{\ell}\vert \le \rho \sigma_{\ell}$, thus we may have 
\begin{align}\label{eq:thm2:aux1}
    \vert \sigma_{\ell } - \hat{\sigma}_{\ell }\vert \le \rho \sigma_\ell  + \Delta_{\ell}. 
\end{align}

\underline{\textbf{Step 2.}} Let $\ell \ge 1$. By triangular inequality, we have  
\begin{align}
    \Vert  {u}_{\ell } - \hat{u}_{\ell } \Vert \wedge \Vert {u}_{\ell} + \hat{u}_{\ell} \Vert & \le  \Vert  \tilde{u}_{\ell} - u_{\ell} \Vert \wedge \Vert \tilde{u}_{\ell} + u_{\ell} \Vert  + \Vert  \tilde{u}_{\ell} - \hat{u}_{\ell} \Vert \wedge \Vert \tilde{u}_{\ell} + \hat{u}_{\ell } \Vert
\end{align}
Now, using Davis-Kahan's inequality, we obtain 
\begin{align}
    \Vert \tilde{u}_{\ell }  - u_{\ell } \Vert \wedge \Vert \tilde{u}_{\ell }  + u_{\ell } \Vert \le \frac{\sqrt{2}}{\sigma_{\ell } - \sigma_{\ell } }   \left\Vert M_{\ell} - \sum_{i=\ell }^{d} \sigma_i u_i u_i^\top \right\Vert =  \frac{\sqrt{2} }{\sigma_{\ell} - \sigma_{\ell + 1} } \Delta_{\ell} 
\end{align}
Furthermore, by application of gradient descent, the guarantee \eqref{eq:grad-app} entails that $\Vert  \tilde{u}_{\ell} - \hat{u}_{\ell} \Vert \wedge \Vert \tilde{u}_{\ell} + \hat{u}_{\ell} \Vert \le \rho$. Thus, we have 
\begin{align}\label{eq:thm2:aux2}
     \Vert  {u}_{\ell} - \hat{u}_{\ell} \Vert \wedge \Vert {u}_{\ell} + \hat{u}_{\ell} \Vert & \le \rho +  \frac{\sqrt{2} }{\sigma_{\ell} - \sigma_{\ell + 1} } \Delta_{\ell}. 
\end{align}

\underline{\textbf{Step 3.}} Let $\ell \ge 1$. First, note that 
\begin{align}
    \Delta_{\ell + 1} & : = \left\Vert M_{\ell + 1} - \sum_{i=\ell + 1}^{d} \sigma_i u_i u_i^\top \right\Vert  \\
    & = \left\Vert M_\ell -  \hat{\sigma}_{\ell} \hat u_{\ell} \hat u_\ell^\top  - \sum_{i=\ell + 1}^{d} \sigma_i u_i u_i^\top \right\Vert \\
    & = \left\Vert M_\ell   - \sum_{i=\ell }^{d} \sigma_i u_i u_i^\top  +  \sigma_{\ell} u_{\ell}  u_\ell^\top    -  \hat{\sigma}_{\ell} \hat u_{\ell} \hat u_\ell^\top  \right\Vert \\
    & \le \left\Vert  M_\ell   - \sum_{i=\ell }^{d} \sigma_i u_i u_i^\top \right\Vert  + \left\Vert  \sigma_{\ell} u_{\ell}  u_\ell^\top    -  \hat{\sigma}_{\ell} \hat u_{\ell} \hat u_\ell^\top \right\Vert  \\ 
    & = \Delta_{\ell} + \left\Vert (\sigma_\ell - \hat{\sigma}_\ell)\hat{u}_\ell \hat {u}_\ell^\top  +  \sigma_\ell ( u_\ell  {u}_\ell^\top  -  \hat{u}_\ell \hat {u}_\ell^\top)   \right\Vert  \\
    & \le \Delta_{\ell} + \vert \sigma_\ell - \hat{\sigma}_\ell\vert + \sqrt{2}\sigma_\ell \left(\Vert u_\ell - \hat{u}_\ell \Vert \wedge \Vert u_\ell + \hat{u}_\ell \Vert \right),
\end{align}
where we used the triangular inequality, then Lemma \ref{lem:elem-eq-2} in the last inequality. Thus, using the inequalities \eqref{eq:thm2:aux1} and \eqref{eq:thm2:aux2}, we obtain
\begin{align}
    \Delta_{\ell + 1} & \le (\sigma_\ell+ \sqrt{2}\sigma_\ell)\rho + \Delta_{\ell} + \frac{2\sigma_\ell}{\sigma_\ell - \sigma_{\ell+1}} \Delta_\ell   \\
    & \le 3\sigma_\ell \rho + \frac{3\sigma_\ell}{\sigma_{\ell} - \sigma_{\ell+1}} \Delta_\ell. 
\end{align}
\underline{\textbf{Step 4 (concluding inequalities).}} By iterating the inequalities above, and recalling that $\Delta_1 = 0$, we deduce that 
\begin{align}
    \Delta_{\ell + 1} & \le \sum_{i=1}^\ell \left(\prod_{j = i+1}^\ell \frac{3\sigma_j}{\sigma_j - \sigma_{j+1}}   \right) (3\sigma_i \rho) \\
    & \le 3 \ell \sigma_1 \left( \max_{i \in [\ell]} \frac{3\sigma_i}{\sigma_i - \sigma_{i+1}}  \right)^{\ell}  \rho
\end{align}
Consequently, we have for all $\ell \in [k]$,
\begin{align}
     \vert \sigma_{\ell } - \hat{\sigma}_{\ell }\vert  & \le  \sigma_\ell \left(\rho  +  3 (\ell-1) \frac{\sigma_1}{\sigma_\ell} \left( \max_{i \in [\ell-1]} \frac{3\sigma_i}{\sigma_i - \sigma_{i+1}}  \right)^{\ell-1} \rho \right) .   \\
     & \le \sigma_\ell \left(\rho  +  3 k \frac{\sigma_1}{\sigma_k} \left( \max_{i \in [k]} \frac{3\sigma_i}{\sigma_i - \sigma_{i+1}}  \right)^{k} \rho \right) 
\end{align}
and 
\begin{align}
     \Vert  {u}_{\ell} - \hat{u}_{\ell} \Vert \wedge \Vert {u}_{\ell} + \hat{u}_{\ell} \Vert & \le \rho +  \frac{3\sqrt{2} (\ell-1) \sigma_1}{\sigma_{\ell} - \sigma_{\ell + 1}} \left( \max_{i \in [\ell-1]} \frac{3\sigma_i}{\sigma_i - \sigma_{i+1}}  \right)^{\ell-1} \rho  \\
     & \le \rho +  \frac{\sqrt{2} (\ell - 1) \sigma_1}{\sigma_{\ell}} \left( \max_{i \in [\ell]} \frac{3\sigma_i}{\sigma_i - \sigma_{i+1}}  \right)^{\ell} \rho  \\
     & \le \rho +  3 k \frac{ \sigma_1}{\sigma_{k}} \left( \max_{i \in [k]} \frac{3\sigma_i}{\sigma_i - \sigma_{i+1}}  \right)^{k} \rho
\end{align}

\underline{\textbf{Step 5 (Choice of $\rho$).}} Let $\epsilon\in(0,1/2)$. We choose $\rho$ so that  
\begin{align}\label{eq:rho}
    3 k \frac{ \sigma_1}{\sigma_{k}} \left( \max_{i \in [k]} \frac{3\sigma_i}{\sigma_i - \sigma_{i+1}}  \right)^{k} \rho = \frac{\epsilon}{2} \iff \frac{1}{\rho} =  \frac{ \sigma_1}{\sigma_{k}}  \left( \max_{i \in [k]} \frac{3\sigma_i}{\sigma_i - \sigma_{i+1}}  \right)^{k}  \frac{6k}{\epsilon}
\end{align}
This entails that for all $\ell \in [k]$, 
\begin{align}
      \Delta_ \ell  & \le \frac{\epsilon \sigma_\ell }{2}  \\
       \vert \sigma_{\ell } - \hat{\sigma}_{\ell }\vert  & \le \epsilon \sigma_\ell,  \\
     \Vert  {u}_{\ell} - \hat{u}_{\ell} \Vert \wedge \Vert {u}_{\ell} + \hat{u}_{\ell} \Vert & \le \epsilon. 
\end{align}

\underline{\textbf{Step 6 (Required number of iterations)}}
By theorem  \ref{thm:convergence-1}, we know that for all $\ell \in [k]$,
\begin{align*}
    T_\ell (\rho) = \frac{c_{\ell, 1}\tilde{\sigma}_\ell}{\tilde{\sigma}_\ell- \tilde{\sigma}_{\ell,2}} \log\left(\frac{e \tilde{\sigma}_\ell}{(\tilde \sigma_\ell - \tilde\sigma_{\ell, 2}) \rho}  \right) + c_{\ell, 2} \log\left( \frac{e(\tilde \sigma_\ell^2 + 1)}{\tilde \sigma_\ell} \right). 
\end{align*}
where $c_{1,\ell}$ and $c_{2, \ell}$ are strictly positive constants. Observe that Weyl's inequality ensures that $ \vert \tilde \sigma_{\ell} - \sigma_{\ell }\vert, \vert \tilde \sigma_{\ell,2} - \sigma_{\ell+1 }\vert  \le \Delta_ \ell  \le \frac{\epsilon \sigma_\ell }{2}$ which gives 
\begin{align*}
    \left(1-\frac{\epsilon}{2}\right) \sigma_\ell  & \le \tilde{\sigma}_\ell \le \left(1+\frac{\epsilon}{2}\right) \sigma_\ell, \\
    \left(1-\frac{\epsilon}{2}\right) \sigma_{\ell+1}  & \le \tilde{\sigma}_{\ell,2} \le \left(1+\frac{\epsilon}{2}\right) \sigma_{\ell+1},
\end{align*}
thus
\begin{align*}
    \sigma_\ell \left( \frac{\sigma_\ell - \sigma_{\ell+1}}{\sigma_\ell} - \epsilon \right) \le \tilde{\sigma}_\ell - \tilde \sigma_{\ell , 2} \le \sigma_\ell \left( \frac{\sigma_\ell - \sigma_{\ell+1}}{\sigma_\ell} + \epsilon \right), 
\end{align*}
Since by assumption we have  
\begin{align*}
\epsilon \le \frac{1}{2}\min_{\ell \in [k]} \left(\frac{\sigma_\ell - \sigma_{\ell+1}}{\sigma_\ell} \right) \le  \frac{1}{2},    
\end{align*}
it further holds 
\begin{align*}
    \frac{2}{5} \left( \frac{\sigma_\ell - \sigma_{\ell+1}}{\sigma_\ell}  \right) \le \frac{\tilde{\sigma}_\ell - \tilde \sigma_{\ell , 2}}{\tilde{\sigma}_\ell} \le 2 \left( \frac{\sigma_\ell - \sigma_{\ell+1}}{\sigma_\ell} \right). 
\end{align*}
Therefore, we have for all $\ell \in [k]$,
\begin{align*}
    T_\ell(\rho) & \le \frac{c_{\ell, 1}' \sigma_\ell}{\sigma_\ell- {\sigma}_{\ell + 1}} \log\left(\frac{e {\sigma}_\ell}{(\sigma_\ell - \sigma_{\ell + 1}) \rho}  \right) + c_{\ell, 2}' \log\left( \frac{e(\sigma_\ell^2 + 1)}{\sigma_\ell} \right) \\
     & \le c_{1}' \max_{i \in [k]}\left(\frac{\sigma_i}{\sigma_i- {\sigma}_{i + 1}} \right) \log\left(\frac{e}{\rho} \max_{i \in [k]}\left(\frac{ {\sigma}_i}{(\sigma_i - \sigma_{i + 1}) } \right) \right) + c_{2}' \log\left( e\left( \sigma_1 + \frac{1}{\sigma_k} \right) \right).  
\end{align*}
where $c_1', c_2'$ are the new positive constants.
We note from the choice of $\rho$ that 
\begin{align*}
    \log\left(\frac{1}{\rho}\right) \le c_3' k \log\left(  \frac{\sigma_1}{\sigma_k} \max_{i \in [k]}\left(\frac{\sigma_i}{\sigma_i - \sigma_{i+1}} \right) \frac{k}{\epsilon}   \right).
\end{align*}
Hence, for all $\ell \in [k]$,
\begin{align*}
     T_\ell(\rho) \le C_1 k \max_{i \in [k]}\left(\frac{\sigma_i}{\sigma_i- {\sigma}_{i + 1}} \right) \log\left(\frac{\sigma_1}{\sigma_k} \max_{i \in [k]}\left(\frac{ {\sigma}_i}{(\sigma_i - \sigma_{i + 1}) } \right) \frac{k}{\epsilon}\right) + C_{2} \log\left( e\left( \sigma_1 + \frac{1}{\sigma_k} \right) \right)
\end{align*}
where $C_1, C_2$ are the new final positive constants. Summing over $\ell \in [k]$, gives the final required number of iterations. 
\end{proof}
}

\newpage

\section{Local Convergence of Nesterov's Accelerated Gradient Descent}\label{app:acceleration}

In this section, we prove Theorem \ref{thm:local acceleration}. The key challenge in establishing this results lies in the fact that Nesterov's acceleration method is not a descent method and that the function $g$ is  only smooth and strongly-convex on a local neighborhood of the global minima (see Appendix \ref{app:non-convex-g}). Thus, we need to ensure   that  Nesterov's acceleration method remain in this local neighborhood once it enters it. To that end, we adapt the results of Nesterov and establish Theorem \ref{thm:local-acceleration-general} which may be of independent interest. The proof of Theorem \ref{thm:local-acceleration-general} is given in \textsection\ref{sec:local-acceleration-general-proof}. The proof of Theorem \ref{thm:local acceleration} is given in \textsection\ref{sec:local-acceleration-proof}.

\subsection{Local acceleration for Nesterov's gradient method in its general form}\label{sec:local-acceleration-general-proof}

Here, we present a general result regarding the local acceleration of Netsterov's method in its general form \cite{nesterov2013introductory}. 
Throughout this section, we consider function $f$ that satisfies:
\begin{assumption}
    $f$ is differentiable, $L$-smooth and $\mu$-strongly-convex on a local neighborhood $\cC_\xi = \lbrace z: \Vert x- x_\star \Vert \le \xi  \rbrace$ of a global minima $x_\star$, for some $\mu, L, \xi > 0$. 
\end{assumption}
For convenience, we define 
$
\underline{\cC}_\xi  = \lbrace x: \Vert x - x_\star \Vert \le \xi/2\rbrace. 
$

Before presenting the algorithm and result we start by introducing certain definitions and notations 
which are needed for describing the method. We also introduce a Lyapunov function that will be useful in characterizing the 
convergence properties of the method. This builds on the so-called \emph{estimate sequence} construction (see Definition 2.2.1 in \cite{nesterov2013introductory}).   

To that end, let $(\alpha_k)_{k \ge 0}$ be a sequence taking value in $(0,1)$, $(y_k)_{k \ge 0}$ be an arbitrary sequence taking values in $\underline{\cC}_\xi$, and $\gamma_0 > 0$. For now the choice of this sequences will be arbitrary but will be made precise later. 
Define a sequence of functions $(\phi_k)_{k \ge 0}$ on $\RR^n$ as follows:
\begin{equation}\label{eq:phi-functions}
\begin{split}
    \phi_0(x) & = f(x_0) + \frac{\gamma_0}{2} \Vert x - x_0 \Vert^2 \\
    \phi_{k+1}(x) & = (1- \alpha_k) \phi_k(x) + \alpha_k \left(f(y_k) + \langle \nabla f(y_k), x - y_k \rangle + \frac{\mu}{2} \Vert x - y_k \Vert^2   \right)
    \end{split}
\end{equation}

In the following Lemma, we see that the functions $(\phi_k)$ have quadratic form. 
\begin{lemma}[Lemma 2.2.3. in \cite{nesterov2013introductory}]\label{lem:estimate-sequence-canonical-form}
The $\phi_k(\cdot)$ defined as per \eqref{eq:phi-functions} have form, for any $x \in \RR^n$, 
\begin{align*}
        \phi_k(x) = \phi^\star_k + \frac{\gamma_k}{2} \Vert x  - v_k\Vert^2,  
\end{align*}
where $\phi_0^\star = f(x_0)$, and for all $k \ge 0$,
\begin{align}
        & \gamma_{k+1} = (1-\alpha_k) \gamma_k + \alpha_k \mu, \nonumber \\
        & v_{k+1}  = \frac{1}{\gamma_{k+1}} \left(  (1 - \alpha_k)\gamma_k v_k  + \alpha_k \mu y_k - \alpha_k \nabla f (y_k) \right) \label{eq: v and gamma} \\
        & \phi_{k+1}^\star  =  (1- \alpha_k) \phi_k^\star + \alpha_k f(y_k) - \frac{\alpha_k^2 }{2 \gamma_{k+1}} \Vert \nabla f(y_k)\Vert^2 \nonumber \\
        & \qquad  + \frac{\alpha_k (1- \alpha_k) \gamma_k}{\gamma_{k+1}} \left( \frac{\mu}{2} \Vert y_k - v_k\Vert^2  + \langle \nabla f(y_k), v_k - y_k \rangle \right). 
\end{align}    
\end{lemma}

Now, we describe the generic Nesterov's method: initialize $x_0 \in \underline{\cC}_\xi$, $v_0 = x_0$, $\gamma_0 > 0$; for $k \ge 0$, define
iterates satisfying
\begin{align}
      \alpha_k^2 & = \frac{(1-\alpha_k) \gamma_k + \alpha_k \mu}{\ell} = \frac{\gamma_{k+1}}{\ell}\label{eq:def-y_k}\\
    y_k & =  \frac{\alpha_k \gamma_k v_k + \gamma_{k+1} x_k }{\gamma_k + \alpha_k \mu} \label{eq:def-y_k}\\
    x_{k+1} & \ \text{s.t.} \ f(x_{k+1})  \le f(y_k) - \frac{1}{2\ell} \Vert \nabla f(y_k) \Vert^2, \text{and} \quad x_{k+1} \in \cC_\xi \label{eq:def-x_{k+1}} \\
    \gamma_{k+1}, v_{k+1} & \quad \text{as in \eqref{eq: v and gamma}}, \nonumber 
\end{align}
where $\ell = 2 L$. In the above, the choices for $\alpha_k, y_k$ and $x_{k+1}$ are motivated by the need to ensure that $\phi_{k+1}^\star \ge f(x_{k+1})$ which in turn will be useful to show decay of a certain Lyapunov function. This will be apparent shortly. 

\begin{lemma}\label{lem:lyapunov-lower}
For $k\ge 0$, if $x_k, y_k, v_k \in \underline{\cC}_\xi$, and $\phi^\star_{k} \ge f(x_k)$, then  $\phi_{k+1}^\star \ge f(x_{k+1})$.  
\end{lemma}

\begin{proof}[Proof of Lemma \ref{lem:lyapunov-lower}.]
We recall that 
\begin{align*}
     \phi_{k+1}^\star & =   (1- \alpha_k) \phi_k^\star  + \alpha_k f(y_k) - \frac{\alpha_k^2 }{2 \gamma_{k+1}} \Vert \nabla f(y_k)\Vert^2 \nonumber  \\ &  + \frac{\alpha_k (1- \alpha_k) \gamma_k}{ \gamma_{k+1}}  \left( \frac{\mu}{2} \Vert y_k - v_k\Vert^2  + \langle \nabla f(y_k), v_k - y_k \rangle \right) 
\end{align*}
which further gives, 
\begin{align*}
     \phi_{k+1}^\star & \ge (1- \alpha_k) \phi_k^\star  + \alpha_k f(y_k) - \frac{\alpha_k^2 }{2 \gamma_{k+1}} \Vert \nabla f(y_k)\Vert^2 + \frac{\alpha_k (1- \alpha_k) \gamma_k}{\gamma_{k+1}} \left( \langle \nabla f(y_k), v_k - y_k \rangle \right).
\end{align*}
We note that $\phi_k^\star \ge f(x_k)$ and since $x_k, y_k \in \underline{\cC}_\xi \subseteq \cC_\xi$ and $f$ is strongly convex on $\cC_\xi$ we have 
\begin{align*}
    (1-\alpha_k) \phi_k^\star + \alpha_k f(y_k)  \ge f(y_k) + (1-\alpha_k )\langle \nabla f (y_k), x_k - y_k\rangle. 
\end{align*}
Therefore, we obtain 
\begin{align*}
     \phi_{k+1}^\star & \ge f(y_k) + (1-\alpha_k )\langle \nabla f (y_k), x_k - y_k\rangle - \frac{\alpha_k^2 }{2 \gamma_{k+1}} \Vert \nabla f(y_k)\Vert^2 + \frac{\alpha_k (1- \alpha_k) \gamma_k}{\gamma_{k+1}} \langle \nabla f(y_k), v_k - y_k \rangle \\
     & = f(y_k) - \frac{\alpha_k^2 }{2 \gamma_{k+1}} \Vert \nabla f(y_k)\Vert^2 + \langle \nabla f(y_k), (1-\alpha_k)(x_k - y_k) + \frac{\alpha_k(1-\alpha_k)}{\gamma_{k+1}}(v_k - y_k) \rangle  
\end{align*}
We further note that by choice of $y_k$ (see \eqref{eq:def-y_k}) and $\gamma_{k+1}$ (see \eqref{eq: v and gamma}), that we have 
\begin{align*}
  (1-\alpha_k)(x_k - y_k) + \frac{\alpha_k(1-\alpha_k)\gamma_k}{\gamma_{k+1}} (v_k - y_k)   & = \frac{(1-\alpha_k)}{\gamma_{k+1}} \left( \gamma_{k+1} x_k + \alpha_k \gamma_k v_k - (\gamma_{k+1} + \alpha_k \gamma_k) y_k \right)   \\
  & =  \frac{(1-\alpha_k)}{\gamma_{k+1}} \left( \gamma_{k+1} x_k + \alpha_k \gamma_k v_k - (\gamma_k + \alpha_k \mu ) y_k \right)\\
   & = 0.
\end{align*}
Thus, we have also
\begin{align*}
     0 & = \langle \nabla f(y_k) , (1-\alpha_k)(x_k - y_k) + \frac{\alpha_k(1-\alpha_k)\gamma_k}{\gamma_{k+1}} (v_k - y_k)  \rangle.  
\end{align*}
Therefore, the above inequalities with the choice of $\alpha_k$ (see \eqref{eq:def-y_k}) and $x_{k+1}$ (see \eqref{eq:def-x_{k+1}}) yields 
\begin{align*}
    \phi^\star_{k+1} & \ge f(y_k) - \frac{\alpha_k^2}{2 \gamma_{k+1}} \Vert f(y_k)\Vert^2 \\ 
    & =  f(y_k) - \frac{1}{2 \ell} \Vert f(y_k)\Vert^2 \\
    & \ge f(x_{k+1}).
\end{align*}
\end{proof}

\begin{lemma}\label{lem:lyapunov-upper} For $k\ge 0$, 
if $x_k, y_k, v_k \in \underline{\cC}_\xi$, then for  $x \in \cC_\xi$, 
$$
    \phi_{k+1}(x) \le (1-\alpha_k) \phi_k(x) + \alpha_k f(x). $$
\end{lemma}

\begin{proof}[Proof of Lemma \ref{lem:lyapunov-upper}.] For all $x \in \cC_\xi$,
    \begin{align*}
        \phi_{k+1}(x) & = (1-\alpha_k) \phi_k(x) + \alpha_k \left(f(y_k) + \langle  \nabla f(y_k), x - y_k \rangle + \frac{\mu}{2} \Vert x - y_k \Vert^2\right) \\
        & \le (1-\alpha_k) \phi_k(x) + \alpha_k f(x),
    \end{align*}
    where in the last inequality we use that fact $x, y_k \in \underline{\cC}_\xi \subseteq  \cC_\xi$ and $f$ is $\mu$-strongly convex.
\end{proof}
Next we argue that $x_k, y_k, v_k$ remain in $\underline{\cC}_\xi$ provided $v_0 = x_0$ is sufficiently close to $x_\star$. 
\begin{theorem} \label{thm:local-acceleration-general}Assume that $\gamma_0 \in  (\mu, L)$, $v_0 = x_0 \in \underline{\cC}_\xi$ such that $\Vert x_0 - x_\star \Vert \le (\xi/2)\sqrt{\mu/L}$. For all $k \ge 0$, we have: 
\begin{itemize}
    \item [(i)] $x_k, y_k, v_k \in \underline{C}_\xi$,
    \item [(ii)] $\phi_k^\star \ge f(x_k)$,
    \item [(iii)] 
     $
        \phi_{k+1}^\star - f(x_{\star}) + \frac{\gamma_{k+1}}{2}\Vert v_{k+1} - x_\star \Vert^2 \le  
        \left(\prod_{i=0}^k (1-\alpha_i) \right) \left( f(x_0) - f(x_\star)  + \frac{\gamma_0}{2}\Vert x_0 - x_\star \Vert^2 \right).
    $
\end{itemize}   
\end{theorem}

\begin{proof}[Proof of Theorem \ref{thm:local-acceleration-general}.]
    It is easy to note, by construction, that for all $k \ge 0$, $\gamma_k$ is  weighted average of $\mu$ and $\gamma_0$, thus $\mu \le \gamma_k \le L$. Now we prove the result by induction.

    \underline{Base case. $k = 0$.}, \textbf{(1)} we have by definition that $x_0, v_0 \in \underline{\cC}_\xi$. Because $y_0$ is a weighted average of $x_0$ and $v_0$ and $\underline{\cC}_\xi$ is convex, we also have that $y_0 \in \underline{\cC}_\xi$.

    \textbf{(2)} By definition we have $\phi_0^\star = f(x_0)$. 
    
    \textbf{(3)} Next, since $x_0,y_0, v_0 \in \underline{\cC}_\xi$, we have by Lemma \ref{lem:lyapunov-upper}, that for all $x \in \cC_\xi$
    \begin{align*}
        \phi_1(x) \le (1-\alpha_0) \phi_0(x) + \alpha_0 f(x),
    \end{align*}
    and recalling that $\phi_k(x) = \phi_k^\star + \frac{\gamma_k}{2}\Vert v_k - x_\star \Vert^2$, we have, in particular ($x = x_\star$), 
    \begin{align*}
          \phi_1^\star - f(x_\star) + \frac{\gamma_1}{2} \Vert v_1 - x_\star \Vert^2   = \phi_1(x_\star) - f(x_\star)  \le (1-\alpha_0) \left(f(x_0) - f(x_\star)  + \frac{\gamma_0}{2} \Vert x_0 - x_\star\Vert^2 \right).
    \end{align*}

    \underline{Induction Step. $k \geq 1$}. Assume the desired statements are true for $k-1$:
    \begin{itemize}
    \item [{\it(i)}] $x_{k-1}, y_{k-1}, v_{k-1} \in \underline{C}_\xi$,
    \item [{\it (ii)}] $\phi_{k-1}^\star \ge f(x_{k-1})$,
    \item [{\it (iii)}] 
     $
        \phi_{k}^\star - f(x_{\star}) + \frac{\gamma_{k}}{2}\Vert v_{k} - x_\star \Vert^2 \le  
        \prod_{i=0}^{k-1} (1-\alpha_{i}) \left( f(x_0) - f(x_\star)  + \frac{\gamma_0}{2}\Vert x_0 - x_\star \Vert^2 \right).
    $
\end{itemize}   
    We wish to argue that the above remains true for $k$. We argue that next.
    
    \textbf{(1)} First, since $x_{k-1}, y_{k-1} ,v_{k-1} \in \underline{\cC}_\xi$, and $\phi_{k-1}^\star \ge f(x_{k-1})$, then by Lemma \ref{lem:lyapunov-lower}, we also have $\phi_k^\star \ge f(x_{k})$.

    \textbf{(2)} Next, combining the fact that $\phi_k^\star \ge f(x_{k})$ with the inequality $(iii)$ from the induction hypothesis (for $k-1$), we obtain 
    \begin{align}\label{eq:H1}
         f(x_k) - f(x_{\star}) + \frac{\gamma_{k}}{2}\Vert v_{k} - x_\star \Vert^2  \le  
        \prod_{i=0}^{k-1} (1-\alpha_{i}) \left( f(x_0) - f(x_\star)  + \frac{\gamma_0}{2}\Vert x_0 - x_\star \Vert^2 \right). 
    \end{align}
     Because by choice of $x_k$ (see \eqref{eq:def-x_{k+1}}), we have  $x_k, x_\star \in \cC_\xi$, and because $f$ is $L$-smooth and $\mu$-strongly convex on $\cC_\xi$, it follows that 
    \begin{align*}
        f(x_k) - f(x_\star)  \ge \frac{\mu}{2} \left\Vert  x_k - x_\star \right \Vert^2   \quad \text{and}\quad 
        f(x_0) - f(x_\star)  \le \frac{L}{2} \Vert x_0 - x_\star \Vert^2,   
    \end{align*}
    Combining the above inequalities with \eqref{eq:H1}, yields
    \begin{align*}
         (\Vert x_\star - x_{k}\Vert  \vee  \Vert x_\star - v_{k} \Vert)^2 
        & \le \left(\prod_{i=0}^{k-1}(1-\alpha_i) \right)\left(\frac{L + \gamma_0}{ \mu + \gamma_1} \right)   \Vert x_\star - x_0\Vert^2   \\
        &  \le \left(\frac{L}{ \mu} \right)   \Vert x_\star - x_0 \Vert^2 \le \frac{\xi^2}{4}, 
    \end{align*}
    where we used the fact that $\alpha_i \in (0,1)$ for all $i \ge 0$, and $\Vert x_\star - x_0 \Vert \le (\xi/2)\sqrt{\mu/L}$, and $ \mu \le \gamma_0, \gamma_1 \le L$.
    This means that $x_{k}, v_{k} \in \underline{\cC}_\xi$. Since $y_{k}$ is weighted average of $x_{k}, v_{k}$, and $\underline{\cC}_\xi$ is convex, we also have $y_{k} \in \underline{\cC}_\xi$.

    \textbf{(3)}  Since $x_{k}, y_{k}, v_{k} \in \underline{\cC}_\xi$ which we have just established, by Lemma \ref{lem:lyapunov-upper}, we have for all $x \in \mathcal{C}_\xi$
    \begin{align*}
        \phi_{k+1}(x) \le (1-\alpha_{k}) \phi_{k}(x) + \alpha_{k} f(x).
    \end{align*}
    In particular, with $x = x_\star \in \cC_\xi$, we have 
    \begin{align*}
        \phi_{k+1}(x_\star) - f(x_\star)  &\le (1-\alpha_{k}) (\phi_{k}(x_\star) - f(x_\star)) \\
        & \le (1-\alpha_{k}) \left(\phi_{k}^\star  - f(x_\star) + \frac{\gamma_{k}}{2} \Vert v_{k} - x_\star\Vert^2 \right)  
    \end{align*}
    Recalling that $\phi_{k}(x_\star) = \phi_k^\star + \frac{\gamma_k}{2} \Vert v_k - x_\star \Vert^2$, and using the above inequality together with the induction hypothesis gives 
    \begin{align*}
         \phi_{k+1}^\star  - f(x_\star) + \frac{\gamma_{k+1}}{2} \Vert v_{k+1} - x_\star \Vert^2  \le \prod_{i=0}^k (1-\alpha_i) \left(f(x_0) - f(x_\star) + \frac{\gamma_0}{2} \Vert x_0 - x_\star \Vert^2 \right) 
    \end{align*}
\end{proof}

We will next provide the following result which quantifies the growth of $\prod_{i=0}^k (1-\alpha_i)$. 
\begin{lemma}[Lemma 2.2.4 in \cite{nesterov2013introductory}]
    For $\gamma_0 \ge \mu $, we have 
    \begin{align*}
        \prod_{i=0}^k(1-\alpha_i)  \le \min \left\lbrace \left( 1 - \sqrt{\frac{\mu}{\ell}}
 \right)^k , \frac{4\ell}{(2 \sqrt{\ell} + k\sqrt{\gamma_0})^2}\right\rbrace.
    \end{align*}
\end{lemma}

\subsection{Proof of Theorem \ref{thm:local acceleration}} \label{sec:local-acceleration-proof}

To prove Theorem \ref{thm:local acceleration}, we apply Theorem \ref{thm:local-acceleration-general}. To that end, let us start by defining $x_\star \in \lbrace - \sqrt{\sigma_1} u_1, +\sqrt{\sigma_1} u_1 \rbrace$,  $\xi := \frac{\sigma_1 - \sigma_2}{15\sqrt{\sigma_1}}$, and   
\begin{align*}
    \cC := \cC_\xi & = \left\lbrace x: \Vert x -x_\star \Vert \le \xi  \right\rbrace, \quad 
      \underline{\cC}  := \underline{\cC}_\xi   =\left\lbrace x: \Vert x - x_\star \Vert \le {\xi}/2  \right\rbrace. 
\end{align*}
From Lemma \ref{lem: smooth + strong convexity}, we 
see that the set $\cC$ constitute a basin of attraction for the function $g$, in the sense that $g$ is $L$-smooth and $\mu$-strongly-convex with 
\begin{align*}
    L = \frac{9\sigma_1}{2}, \qquad \mu = \frac{\sigma_1 - \sigma_2}{4}.
\end{align*}
Next, we have  that $x_{k+1}$ is updated as follows:
$$
    x_{k+1} = y_k - \frac{1}{6 \Vert y_k \Vert^2} \nabla g(y_k).
$$
Before applying Theorem \ref{thm:local-acceleration-general}, we need to ensure the following result: 
\begin{lemma}\label{lem:descent}
    Assume that $y_k \in \underline{\cC}_\xi$. Then, $
        g(x_{k+1}) \le g(y_k) - \frac{1}{4L} \Vert \nabla g(y_k)\Vert^2$, and $x_{k+1} \in \cC_\xi$.
\end{lemma}
\begin{proof}[Proof of Lemma \ref{lem:descent}.]
    We start by noting that $y_k \in \underline{\cC}$. In particular, we have $\Vert y_k  - x_\star \Vert \le \frac{\sigma_1 - \sigma_2}{15\sqrt{\sigma_1}} \le \frac{\sigma_1}{(\sqrt{6} + 2)\sqrt{\sigma_1}}$ and this implies that 
    \begin{align}\label{eq: norm y}
       \frac{3\sigma_1}{4} \le \Vert y_t \Vert^2 \le \frac{3\sigma_1}{2}.
    \end{align}
    Next, we recall that 
    \begin{align}\label{eq:update rule nag plain}
        x_{k+1} =  y_k  - \frac{1}{6\Vert y_k \Vert^2} \nabla g(y_k),
    \end{align}
    which is equivalent to
    \begin{align}\label{eq:update rule nag}
        x_{k+1} - x_\star = y_k - x_\star - \frac{1}{6\Vert y_k \Vert^2} \nabla g(y_k).
    \end{align}
    Because $y_k \in \underline{\cC}$, by $L$-smoothness of $g$ on $\cC$, we have $\Vert \nabla g(y_k) \Vert \le L \Vert y_k - x_\star \Vert$. Starting from the above equality and using triangular inequality gives  
    \begin{align*}
        \Vert x_{k+1} - x_\star \Vert & \le \left( 1 + \frac{L}{6 \Vert y_k \Vert^2 }\right)\Vert y_k - x_\star \Vert  \\
        & \le \left( 1 + \frac{2L}{9  \sigma_1 }\right)\Vert y_k - x_\star \Vert \\
        & \le 2 \Vert y_k - x_\star \Vert \le \xi,
    \end{align*}
    where we use the fact that $L = 9\sigma_1/2$.
    Thus, $x_{k+1} \in \cC$. Again, using the fact that $g$ is $L$-smooth on $\cC$, we also have 
    \begin{align*}
         g(x_{k+1})  & \le  g(y_k) + \langle \nabla g(y_k),  x_{k+1} -y_k  \rangle + \frac{L}{2} \Vert  x_{k+1}- y_k  \Vert^2 \\ 
        & \qquad \qquad (\text{by }L-\text{smoothness})\\
        & \ = g(y_k) - \left(\frac{1}{6 \Vert y_k\Vert^2} - \frac{L}{72 \Vert y_k \Vert^4 } \right) \Vert \nabla g(y_k)\Vert^2 \\
        & \qquad \qquad (\text{using \eqref{eq:update rule nag plain}})\\
        & \ \le  g(y_k) - \frac{1}{18 \sigma_1} \Vert 
 \nabla g(y_k)\Vert^2.  
    \end{align*}
    The last inequality follows because, using \eqref{eq: norm y}, we have   
    \begin{align*}
        \left(\frac{1}{6 \Vert y_k\Vert^2} - \frac{L}{72 \Vert y_k \Vert^4 } \right) & = \frac{1}{6 \Vert y_k \Vert^2} \left( 1 - \frac{9\sigma_1}{24 \Vert y_k \Vert^2}\right) \\ &  \ge \frac{1}{6} \times \frac{2}{3\sigma_1} \left( 1 -  \frac{9 \sigma_1}{24} \times \frac{4}{3\sigma_1}\right) \\
        & = \frac{1}{9 \sigma_1} \left( 1 - \frac{1}{2}\right) = \frac{1}{18 \sigma_1}.
    \end{align*}
    This concludes the proof.
\end{proof}
Specializing further the general scheme of Nesterov's accelerated gradient descent to the case $\gamma_0 = \mu$, and $x_{k+1}$ as per presented before gives the scheme: for all $k \ge 0$ 
\begin{align*}
    \alpha_k & = \alpha := \sqrt{\frac{\mu}{2L}} \\
    \gamma_k & = \mu \\
    y_k & =  \frac{1}{1+ \alpha} x_k + \frac{\alpha }{1 + \alpha } v_k\\
    x_{k+1} & =  y_k - \frac{1}{6 \Vert y_k \Vert^2} \nabla g(y_k)  \\ 
    v_{k+1} & = (1- \alpha) v_k + \alpha \left( y_k - \frac{1}{\mu} \nabla g(y_k)\right) 
\end{align*}
with $v_0 = x_0 $ and $\Vert x_0 - x_\star \Vert \le (\xi/2)\sqrt{\mu/L}$. Note, that  thanks to Lemma \ref{lem:descent} we have that \begin{align*}
    g(x_{k+1}) \le g(y_k) - \frac{1}{18\sigma_1} \Vert  \nabla g(y_k) \Vert^2, \quad \text{and}, \quad x_{k+1} \in \mathcal{C}.
\end{align*}
We may therefore, apply Theorem \ref{thm:local-acceleration-general} to obtain
\begin{align*}
     g(x_{k+1}) - g(x_\star) + \frac{\mu}{2} \Vert v_{k+1} - x_\star \Vert^2  \le \min \left\lbrace \left( 1 - \sqrt{\frac{\mu}{2 L}} \right)^k , \frac{8L}{(2 \sqrt{2L} + k\sqrt{\mu})^2}\right\rbrace   \left( g(x_0) - g(x_\star) + \frac{\mu}{2} \Vert x_{0} - x_\star \Vert^2 \right)
\end{align*}
Since $x_{k+1}, x_{\star}, x_{0}, \in \cC_\xi$, we have by $\mu$-strong convexity of $g$ on $\cC$ we deduce that 
\begin{align*}
    \frac{\mu}{2} \Vert x_{k+1} - x_\star \Vert^2 \le 
    \min \left\lbrace \left( 1 - \sqrt{\frac{\mu}{2 L}}
 \right)^k , \frac{8L}{(2 \sqrt{2L} + k\sqrt{\mu})^2}\right\rbrace  \frac{L + \mu}{2} \Vert x_0 - x_\star \Vert^2
\end{align*}
With the choice of $x_0$, using the inequality $(L/\mu) \Vert x_0 - x_\star \Vert \le \xi$, with the above inequality and replacing $L$ and $\mu$ by there values gives 
\begin{align*}
    \Vert x_{k+1} - x_\star \Vert^2 \le  \min\left\lbrace \left( 1 - \sqrt{\frac{\sigma_1 - \sigma_2}{36 \sigma_1}}\right)^{k},   \frac{144\sigma_1}{(12\sqrt{\sigma_1} + k\sqrt{\sigma_1 - \sigma_2} )^2}\right\rbrace   \left( \frac{\sigma_1 - \sigma_2}{15\sqrt{\sigma_1}} \right). 
\end{align*}
We conclude by noting that Lemma \ref{lem:elem-eq} ensures that 
\begin{align*}
    \Vert x_{k+1} - x_\star \Vert^2  
    & = \vert \Vert x_{k+1}\Vert - \sqrt{\sigma_1} \vert^2 + \Vert x_{k+1}\Vert \sqrt{\sigma_1} \left\Vert \frac{x_{k+1}}{\Vert x_{k+1}\Vert} \pm u_1 \right\Vert^2 \\
    & \ge \vert \Vert x_{k+1}\Vert - \sqrt{\sigma_1} \vert^2 + \frac{\sqrt{3}\sigma_1}{2} \left\Vert \frac{x_{k+1}}{\Vert x_{k+1}\Vert} \pm u_1 \right\Vert^2 \\
    & \ge \max \left\lbrace \vert \Vert x_{k+1}\Vert - \sqrt{\sigma_1} \vert^2,  \frac{\sqrt{3}\sigma_1}{2} \left\Vert \frac{x_{k+1}}{\Vert x_{k+1}\Vert} \pm u_1 \right\Vert^2 \right\rbrace,
\end{align*}
where the second to last inequality follows from the fact that $x_{k+1} \in \cC$, and thus $\Vert x_{k+1}\Vert^2 \ge 3\sigma_1/4$. This concludes the proof. 

\begin{remark}
    Note that in the proof of the result, we can replace  $\frac{\sigma_1 - \sigma_2}{4}$ by $\mu$ for any value of $\mu \le \frac{\sigma_1 - \sigma_2}{4}$ and the result will still hold  with an error rate that is of order 
    \begin{align*}
         \Vert x_{k+1} - x_\star \Vert^2 \le 
         & \min \left\lbrace \left( 1 - \sqrt{\frac{\mu}{2 L}}
        \right)^k , \frac{8L}{(2 \sqrt{2L} + k\sqrt{\mu})^2}\right\rbrace   \left( \frac{\sigma_1 - \sigma_2 }{15\sqrt{\sigma_1}}\right) 
    \end{align*}
\end{remark}

\newpage

\section{Miscellaneous tools and lemmas}

In this section, we present some tools and concepts that we make use of consistently in our proofs. A review of strong convexity and smoothness are presented in \textsection\ref{app:subsection:convex}. In \textsection \ref{app:non-convex-g}, we present Lemma \ref{lem:opt-conditions} characterizing the critical points of the non-convex function $g$, and Lemma \ref{lem: smooth + strong convexity} which establishes the strong convexity and smoothness of $g$ locally. Finally, in \textsection\ref{app:misc:error} we present a few error decomposition lemmas.

\subsection{Smoothness and strong convexity} \label{app:subsection:convex}
In this subsection we review some concepts and results from convex optimization that are relevant to our analysis. We will not prove these results as these are standard (e.g., see \cite{d2021acceleration}).

The first property corresponds to that of smoothness which is characterized as follows:
\begin{definition}[$L$-smoothness]
    Let $f$ be a differentiable function on $\cC$. We say that it is $L$-smooth on $\cC$ if for all $x, y \in \cC$, we have $$
    \Vert \nabla f (x) - \nabla f(y) \Vert \le L  \Vert x - y \Vert.$$
\end{definition}
An important consequence of a function being $L$-smooth is the following inequality:
\begin{lemma}
    Let $f$ be a differentiable function on $\cC$. If it is $L$-smooth on $\cC$, than for all $x,y \in \cC$, $$
    f(x) \le f(y) + \langle \nabla f(y), x - y \rangle + \frac{L}{2} \Vert x - y \Vert^2.   
    $$ 
\end{lemma}
A consequence of the above result is the following:
\begin{lemma}
    Let $f$ be a differentiable function on $\cC$. Assume that $f$ is $L$-smooth on $\cC$, and $x_\star \in \cC$ is a critical point of $f$ (i.e., $\nabla f(x_\star)$ = 0), then  for all $x \in \cX$, 
    \begin{align*}
       \Vert  \nabla f(x) \Vert \le L \Vert x - x_\star \Vert \\
       \qquad \text{and} \qquad f(x) - f(x_\star) \le \frac{L}{2} \Vert x - x_\star \Vert^2 
    \end{align*}
\end{lemma}

To verify that a function $f$ that is twice differentiable is $L$-smooth for some $L >0$, we often inspect the largest eigenvalue of its Hessian. The following lemma clarifies why. 
\begin{lemma}
    Let $f$ be a twice differentiable function on $\cC$.  If  $\lambda_{\max}(\nabla^2f(x)) \le L$ for all $x \in \cC$, then $f$ is $L$-smooth on $\cC$.
\end{lemma}

The second property concerns the convexity of a given function, namely that of strong convexity. Specifically, this property is often characterized via an important inequality as we present below:
\begin{definition}
    Let $f$ be a differentiable function on $\cC$. We say that it is $\mu$-strongly convex on $\cC$ if for all $x,y \in \cC$, 
    $$
    f(x) \ge f(y) + \langle \nabla f(x), y - x \rangle + \frac{\mu}{2} \Vert y - x \Vert^2.   
    $$
\end{definition}
An immediate consequence of strong convexity is the following result
\begin{corollary}
    Let $f$ be a differentiable function on $\cC$. Assume that $f$ is $\mu$-strongly convex on $\cC$, and $x_\star \in \cC$ is a critical point of $f$ (i.e., $\nabla f(x_\star)$ = 0), then  for all $x \in \cX$, 
    \begin{align*}
     f(x) - f(x_\star) \ge \frac{\mu}{2} \Vert x - x_\star \Vert^2 
    \end{align*}
\end{corollary}

To verify whether a function $f$ that is twice differentiable is $\mu$-strongly convex for some $\mu> 0$, we often inspect the minimum eigenvalue of its Hessian.
\begin{lemma}
    Let $f$ be a twice differentiable function on $\cC$. If  $\lambda_{\min}(\nabla^2 f(x)) \ge \mu$ for all $x \in \cC$, then $f$ is $\mu$-strongly-convex on $\cC$.
\end{lemma}

\subsection{Properties of the non-convex function $g$}\label{app:non-convex-g} 

The properties of the function $g$ and its relation to the spectral properties of $M$ have pointed out in early works (e.g., \cite{auchmuty1989unconstrained, chi2019nonconvex}). Here we overview some of these properties which are relevant to our analysis. In the following lemma, we present a characterization of the properties of critical points of $g$:   

\begin{lemma}[1$^{\textrm{st}}$ and 2$^{\textrm{nd}}$ order optimality conditions]\label{lem:opt-conditions}
     Let $x \in \RR^{n}$, the following properties hold:
\begin{itemize}
    \item [(i)] $x$ is a critical point, i.e. $\nabla g(x;M) = 0$, if and only if $x = 0$ or $x = \pm \sqrt{\sigma_i} u_i$, $i\in [k]$.
    \item [(ii)] if for all $i \neq 1$, $\sigma_1 > \sigma_i$, then all local minima of the loss function $g$ are also global minima\footnote{We recall that $x$ is a local minima of $g$ iff $\nabla g(x) = 0$ and $\lambda_{\min}(\nabla^2 g(x)) \ge 0$.}, $0$ is a local maxima, and all other critical points are strict saddle\footnote{We recall that $x$ is a strict saddle point iff $\nabla g(x) = 0$, and  $\lambda_{\min}(\nabla^2 g(x)) < 0$.}. More specifically, the only local and global minima of $g$ are the points $ - \sqrt{\sigma_1} u_1$ and $+\sqrt{\sigma_1} u_1$.
\end{itemize}
\end{lemma}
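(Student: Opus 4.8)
The plan is to work directly from the closed forms of the first two derivatives of $g$ and to evaluate them at the candidate points; nothing beyond elementary linear algebra is needed.

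For part $(i)$: writing $g(x;M)=\tfrac12\Vert M\Vert_F^2 - x^\top Mx + \tfrac12\Vert x\Vert^4$, the stationarity condition $\nabla g(x;M)=0$ reads $Mx=\Vert x\Vert^2 x$. If $x=0$ this holds trivially. If $x\neq 0$, then $x$ is an eigenvector of $M$ with eigenvalue $\Vert x\Vert^2>0$; since $M$ is symmetric positive definite with eigenpairs $(\sigma_i,u_i)_{i\in[d]}$, the vector $x$ must be a scalar multiple of some $u_i$, say $x=cu_i$, and comparing norms forces $c^2=\sigma_i$, i.e. $x=\pm\sqrt{\sigma_i}\,u_i$. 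The reverse inclusion is checked by direct substitution. (Positive definiteness is exactly what excludes further critical points: a critical $x$ with $Mx=0$ would need $\Vert x\Vert^2=0$, hence $x=0$.)

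For part $(ii)$: the Hessian is $\nabla^2 g(x;M)=2\big(\Vert x\Vert^2 I_n+2xx^\top - M\big)$; since the positive prefactor does not affect signatures I work with $H(x):=\Vert x\Vert^2 I_n+2xx^\top-M$. At $x=0$, $H(0)=-M\prec 0$, so $\lambda_{\min}(\nabla^2 g(0))<0$ and the origin is a strict saddle. At $x=\pm\sqrt{\sigma_i}\,u_i$ one has $\Vert x\Vert^2=\sigma_i$ and $xx^\top=\sigma_i u_iu_i^\top$, so $H(x)=\sigma_i I_n+2\sigma_i u_iu_i^\top-M$; this operator is simultaneously diagonalizable with $M$, and its eigenvalues are $2\sigma_i$ along $u_i$, $\sigma_i-\sigma_j$ along $u_j$ for each $j\neq i$, and $\sigma_i$ with multiplicity $n-d$ on $\ker M$. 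Hence for $i=1$ the hypothesis $\sigma_1>\sigma_j$ for all $j\neq 1$ makes every eigenvalue strictly positive, so $\nabla^2 g(\pm\sqrt{\sigma_1}u_1)\succ 0$ and $\pm\sqrt{\sigma_1}u_1$ are local minima; for $i\geq 2$ the eigenvalue $\sigma_i-\sigma_1<0$ along $u_1$ exhibits a strict saddle.

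To finish, I would identify $\pm\sqrt{\sigma_1}u_1$ as the global minima via Eckart--Young: $g(x)=\tfrac12\Vert M-xx^\top\Vert_F^2$ is minimized over all symmetric positive semidefinite matrices of rank at most one by the best rank-one approximant $\sigma_1 u_1u_1^\top$ of $M$ (which is positive semidefinite since $\sigma_1\ge 0$), so $\min_x g(x)=\tfrac12\sum_{j\ge 2}\sigma_j^2=g(\pm\sqrt{\sigma_1}u_1)$; thus every local minimum is global and these are the only ones. I do not anticipate a genuine obstacle: once the Hessian is written down the rest is bookkeeping. The points needing care are getting the sign right so that the curvature at the origin is truly negative (otherwise $0$ gets misclassified as a minimum), tracking the multiplicity-$(n-d)$ block $\sigma_i I$ coming from $\ker M$, and noting that the $i\ge 2$ case uses only $\sigma_1>\sigma_i$, so ties among $\sigma_2,\dots,\sigma_d$ are harmless.
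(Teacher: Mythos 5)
Your proof is correct. There is no in-paper argument to compare against: the paper omits the proof entirely and defers to \cite{chi2019nonconvex}, and your direct computation — expand $g$, read off $\nabla g(x)=2(\Vert x\Vert^2 x - Mx)$ and $\nabla^2 g(x)=2(\Vert x\Vert^2 I_n+2xx^\top-M)$, diagonalize the Hessian at each critical point in the eigenbasis of $M$, and identify the global minimizers via Eckart--Young — is exactly the standard route taken there, so it serves as a valid self-contained substitute. Two small points to tighten. First, at the origin you assert $H(0)=-M\prec 0$; when $M$ is only positive semi-definite of rank $d<n$ this is merely $\preceq 0$, but the conclusion survives because $\lambda_{\min}(\nabla^2 g(0))=-2\sigma_1<0$ already exhibits a strictly negative direction, which is all a strict saddle requires. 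Second, in part (i) the step ``$x$ is an eigenvector of $M$ with eigenvalue $\Vert x\Vert^2>0$, hence a scalar multiple of some $u_i$'' silently assumes the nonzero eigenvalues are simple; if $\sigma_i=\sigma_j$ the critical set contains every vector of norm $\sqrt{\sigma_i}$ in that eigenspace, not just $\pm\sqrt{\sigma_i}u_i$ and $\pm\sqrt{\sigma_j}u_j$. The lemma as stated carries the same implicit assumption, so this is a caveat on the statement rather than a defect of your argument, but it deserves a sentence. Part (ii) indeed uses only $\sigma_1>\sigma_i$ for $i\neq 1$, so ties among the lower eigenvalues are harmless, as you note.
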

A proof of Lemma \ref{lem:opt-conditions} can be found for instance in \cite{chi2019nonconvex}.

The function $g$ is locally smooth and strongly convex. This statement is made precise in the following lemma: 
\begin{lemma}\label{lem: smooth + strong convexity}
Define $\cC = \lbrace x \in \RR^n: \Vert x \pm \sqrt{\sigma_1} u_1 \Vert \le \frac{\sigma_1 - \sigma_2}{15\sqrt{\sigma_1}} \rbrace$. Then for all $x \in \cC$, we have 
\begin{align*}
     \frac{(\sigma_1 - \sigma_2)}{4} I_n \preceq \nabla^2 g(x; M) \preceq \frac{9\sigma_1}{2}  I_n. 
\end{align*}
\end{lemma}
The proof follows that of \cite{chi2019nonconvex}, and we provide it below for completeness.
\begin{proof} [Proof of Lemma \ref{lem: smooth + strong convexity}.] 
Let $x \in \cC$, we can easily verify that 
\begin{align}\label{eq:ineq-x}
    \sigma_1 - 0.25 (\sigma_1 - \sigma_2) \le \Vert x \Vert^2 \le 1.15 \sigma_1 
    \quad \text{and} \quad 
    \Vert x - x_\star \Vert \Vert x \Vert \le (\sigma_1 - \sigma_2)/12.  
\end{align}
\underline{\emph{Proving the upper bound.}}
First, we recall that 
\begin{align*}
    \nabla^2 g(x;M) = \Vert x \Vert^2 I_n + 2 xx^\top - M.
\end{align*}
Thus, using the inequalities \eqref{eq:ineq-x}, we obtain
\begin{align*}
    \Vert \nabla^2 g(x;M) \Vert \le 3 \Vert x \Vert^2 + \sigma_1 \le 4.5 \sigma_1.
\end{align*}

\underline{\emph{Proving the lower bound.}} We start  start by lower bounding $xx^\top$ for $x \in \cC$, 
\begin{align*}
     xx^\top  
    & = x_\star x_\star^\top +  x_\star (x-x_\star)^\top + (x - x_\star)x_\star^\top  + (x - x_\star)(x - x_\star)^\top  \\
    & \succeq \sigma_1 u_1 u_1^\top - 2 \Vert x- x_\star \Vert \Vert x_\star \Vert I_n \\
    & \succeq \sigma_1 u_1 u_1^\top - 0.25(\sigma_1 - \sigma_1) I_n  
\end{align*}
We know that $I_n = \sum_{i=1}^n u_i u_i^\top$ and $M = \sum_{i=1}^n \sigma_i u_i u_i^\top$. We may therefore write
\begin{align*}
    \nabla^2 g(x) & \succeq \Vert x \Vert^2 \sum_{i=1}^n u_i u_i^\top  - \sum_{i=1}^n \sigma_i u_i u_i^\top  + 2 \left(\sigma_1 u_1 u_1^\top - 0.25 (\sigma_1 - \sigma_2) \sum_{i=1}^n  u_i u_i^\top \right) \\
    & \succeq \Vert x \Vert^2 \sum_{i=1}^n u_i u_i^\top   - \sum_{i=1}^n \sigma_i u_i u_i^\top  + 2 \left(\sigma_1 u_1 u_1^\top - 0.25 (\sigma_1 - \sigma_2) \sum_{i=1}^n  u_i u_i^\top \right) \\
    & \succeq (\Vert x \Vert^2 + \sigma_1 - 0.5 (\sigma_1 - \sigma_2) ) u_1 u_1^\top + \sum_{i=2}^n (\Vert x \Vert^2 - \sigma_i - 0.5 (\sigma_1 - \sigma_2)) u_i u_i^\top \\
    & \succeq \left( \Vert x \Vert^2  - 0.5 (\sigma_1 - \sigma_2) -\sigma_2 \right) \sum_{i=1}^n u_i u_i^\top \\
    & \succeq 0.25(\sigma_1 - \sigma_2) I_n,
\end{align*}
where in the last inequality we used \eqref{eq:ineq-x}. This concludes our proof.
\end{proof}

 Defining the parameters 
\begin{align*}
    \mu = \frac{\sigma_1 - \sigma_2}{4} \qquad \text{and} \qquad L = 
 \frac{9 \sigma_1}{2},
\end{align*} 
we note that Lemma \ref{lem: smooth + strong convexity} ensures that the function $g$ enjoys two key properties. First, $g$ is locally $L$-smooth near its global minima, meaning that for all $x, y \in \cC$,
 \begin{align*}
     g(x) \le g(y) + \langle \nabla g (y), x - y\rangle + \frac{L}{2} \Vert x - y \Vert^2. 
 \end{align*}
 Second, $g$ is locally $\mu$-strongly convex near its global minima, meaning that for all $x, y \in \cC$,
  \begin{align*}
     g(x) \ge g(y) + \langle \nabla g (y), x - y\rangle + \frac{\mu}{2} \Vert x - y \Vert^2. 
 \end{align*}

\subsection{Error decompositions}\label{app:misc:error}

\begin{lemma}\label{lem:spectral-perturbation}
    Let $M, \tilde{M} \in \RR^{n \times n}$ symmetric matrices. Let $\sigma_1, \dots, \sigma_n$ (resp. $\tilde{\sigma}_1, \dots, \tilde{\sigma}_n$) be the eigenvalues of $M$ (resp. $\tilde{M}$) in decreasing order, and $u_1, \dots, u_d$ (resp. $\tilde{u}_1, \dots, \tilde{u}_k$) be their corresponding eigenvectors. Then, for all $\ell \in [k]$, we have   
    \begin{align}
        \min_{W \in \mathcal{O}^{\ell \times \ell}} \Vert U_{1:\ell} - \tilde{U}_{1:\ell} W \Vert  & \le \sqrt{2} \frac{\Vert M - \tilde{M} \Vert }{\sigma_\ell - \sigma_{\ell+1}} \\
        \vert \sigma_\ell - \tilde{\sigma}_{\ell} \vert  & \le  \Vert M - \widetilde{M} \Vert  
    \end{align}
    where $\mathcal{O}^{\ell \times \ell}$ denotes the set of $\ell \times \ell$ orthogonal matrices, and using the convention that $\sigma_{d+1} = 0$.
\end{lemma}
\begin{proof}[Proof of Lemma \ref{lem:spectral-perturbation}.]
    The proof of the result is an immediate consequence of Davis-Kahan and the inequality $\min_{W \in \mathcal{O}^{\ell \times \ell}} \Vert U_{1:\ell} - \tilde{U}_{1:\ell} W \Vert \le \Vert \sin( U_{1:\ell}, \tilde{U}_{1:\ell}) \Vert $ (e.g., see \cite{cape2019two}). The second inequality is simply Weyl's inequality.
\end{proof}

\begin{lemma}\label{lem:error-decomposition}
    For all $x \in \RR^{n} \backslash\lbrace 0 \rbrace$, we have,
\begin{align*}
     \Vert x - \sqrt{\sigma_1} u_1 \Vert^2 & = \sigma_1 \left(\left( \frac{\Vert x \Vert}{\sqrt{\sigma_1}} - 1 \right)^2  + \frac{\Vert x \Vert}{\sqrt{\sigma_1}}  \left\Vert \frac{x}{\Vert x \Vert} -  u_1 \right\Vert^2 \right),  \\
     \Vert x + \sqrt{\sigma_1} u_1 \Vert^2 & = \sigma_1 \left(\left( \frac{\Vert x \Vert}{\sqrt{\sigma_1}} - 1 \right)^2  + \frac{\Vert x \Vert}{\sqrt{\sigma_1}}  \left\Vert \frac{x}{\Vert x \Vert} +  u_1 \right\Vert^2 \right),    \\
    \Vert xx^\top - \sigma_1 u_1 u_1^\top \Vert^2 & \le \sigma_1^2 \left(
    \left( \frac{\Vert x \Vert}{\sqrt{\sigma_1}} - 1\right)^2
    \left( \frac{\Vert x \Vert}{\sqrt{\sigma_1}} + 1 \right)^2  + \frac{\Vert x  \Vert^2}{2\sigma_1} 
    \left\Vert \frac{x}{\Vert x \Vert} -  u_1 \right\Vert^2
    \left\Vert \frac{x}{\Vert x \Vert} +  u_1 \right\Vert^2
    \right).
\end{align*}
\end{lemma}

\begin{proof}[Proof of Lemma \ref{lem:error-decomposition}.]
    The first and second equality follow immediately  by invoking Lemma \ref{lem:elem-eq} with $y = \sqrt{\sigma_1}u_1$ and $y = - \sqrt{\sigma_1}u_1$.
    The third inequality follows by first upper bounding $\Vert xx^\top - \sigma_1 u_1 u_1^\top   \Vert \le \Vert xx^\top - \sigma_1 u_1 u_1^\top \Vert_\F$, then using Lemma \ref{lem:elem-eq-2}.
\end{proof}

\begin{lemma}\label{lem:elem-eq}
    For all $x, y \in \RR^{n} \backslash \lbrace 0 \rbrace$, we have
    \begin{align*}
        \Vert x - y \Vert^2 = \left\vert \Vert x \Vert - \Vert y \Vert \right\vert ^2  + \Vert x \Vert \Vert y \Vert  \left \Vert  \frac{x}{\Vert x \Vert } - \frac{y}{\Vert y\Vert }   \right \Vert^2    
    \end{align*}
\end{lemma}
\begin{proof}[Proof of Lemma \ref{lem:elem-eq}]
Let $x, y \in \RR^{n} \backslash \lbrace 0 \rbrace$. We have, through simple algebra,
   \begin{align*}
       \Vert x - y \Vert^2 & = \Vert x \Vert^2 + \Vert y \Vert^2 - 2 \langle x , y\rangle \\  
       & = (\Vert x \Vert - \Vert y \Vert)^2 + 2 \Vert x \Vert \Vert  y \Vert   - 2 \langle x , y\rangle \\
       & = (\Vert x \Vert - \Vert y \Vert)^2 +  \Vert x \Vert \Vert  y \Vert  \left(2 - 2  \left\langle \frac{x}{\Vert x \Vert} , \frac{y}{\Vert y \Vert } \right\rangle \right) \\
       & = (\Vert x \Vert - \Vert y \Vert)^2 +  \Vert x \Vert \Vert  y \Vert  \left\Vert  \frac{x}{\Vert x \Vert} - \frac{y}{\Vert y\Vert }\right\Vert^2. 
   \end{align*}
   This concludes the proof.
\end{proof}

\begin{lemma}\label{lem:elem-eq-2}
    For all $x, y \in \RR^{n} \backslash \lbrace 0 \rbrace$, we have
    \begin{align*}
        & \Vert x x^\top - y y^\top \Vert_\F^2 = \left( \Vert x \Vert^2  - \Vert y \Vert^2 \right)^2 + \frac{\Vert x \Vert^2 \Vert y \Vert^2}{2} \left\Vert    \frac{x}{\Vert x \Vert} -  \frac{y}{\Vert y\Vert}  \right\Vert^2 \left\Vert   \frac{x}{\Vert x \Vert} +  \frac{y}{\Vert y\Vert} \right\Vert^2.     
    \end{align*}
\end{lemma}

\begin{proof}[Proof of Lemma \ref{lem:elem-eq-2}.]
    Let $x, y \in \RR^{n} \backslash \lbrace 0 \rbrace$. We have, through simple algebra,
    \begin{align*}
         \left\Vert x x^\top - y y^\top \right\Vert_\F^2 & = \left\Vert x x^\top  \right\Vert_\F^2 + \left\Vert y y^\top  \right\Vert_\F^2  - 2 \tr(xx^\top y y^\top) \\
         & = \left\Vert x \right\Vert^4 + \left\Vert y  \right\Vert^4  - 2 \vert \langle x, y \rangle \vert^2 \\
         & = \left(\Vert x \Vert^2  - \Vert y \Vert^2 \right)^2 + 2 \Vert x \Vert^2 \Vert y \Vert^2  - 2 \vert \langle x, y \rangle \vert^2  \\
         & = \left( \Vert x \Vert^2  - \Vert y \Vert^2 \right)^2  +  \frac{\Vert x \Vert^2 \Vert y \Vert^2}{2} \left( 2 - 2  \left\langle \frac{x}{\Vert x \Vert}, \frac{y}{\Vert y\Vert} \right\rangle \right)  \left( 2 + 2  \left\langle \frac{x}{\Vert x \Vert}, \frac{y}{\Vert y\Vert} \right\rangle \right) \\
         & = \left( \Vert x \Vert^2  - \Vert y \Vert^2 \right)^2  +  \frac{\Vert x \Vert^2 \Vert y \Vert^2}{2} \left\Vert    \frac{x}{\Vert x \Vert} -  \frac{y}{\Vert y\Vert}  \right\Vert^2 \left\Vert   \frac{x}{\Vert x \Vert} +  \frac{y}{\Vert y\Vert} \right\Vert^2 
    \end{align*}
    This concludes the proof.
\end{proof}

\subsection{Helper Lemmas}

{
\begin{lemma}\label{lem:series inequality}
    Let $\alpha,  \gamma \in (0,1)$. We have, for all $k \ge 1$, $\sum_{i=0}^{k-1} \alpha^{k-1-i} \gamma^{i} \le k (\alpha \vee \gamma)^{k-1}$. 
\end{lemma}
\begin{proof}[Proof of Lemma \ref{lem:series inequality}.]
First, we recall the elementary fact, that for all $k \ge 1$, we have. 
\begin{align}
    \alpha^{k} - \gamma^{k} = (\alpha - \gamma) \left(\sum_{i=0}^{k-1} \alpha^{k-1-i} \gamma^i\right).
\end{align}
Now, observe that 
\begin{align}
    \sum_{i=0}^{k-1} \alpha^{k- 1- i} \gamma^{i} & = \textbf{1}\lbrace \gamma = \alpha\rbrace k \gamma^{k-1}  + \textbf{1}\lbrace \gamma \neq \alpha \rbrace  \frac{\alpha^k - \gamma^k}{\alpha - \gamma} \\
    & \overset{(a)}{\le} \textbf{1}\lbrace \gamma = \alpha\rbrace k \gamma^{k-1} + \textbf{1}\lbrace \gamma \neq \alpha \rbrace  k (\alpha \vee \gamma)^{k-1} \\
    & = k (\alpha \vee \gamma)^{k-1} 
\end{align}
where in the inequality $(a)$ we used the mean value theorem to obtain $\vert \alpha^k - \gamma^k \vert \le k (\alpha \vee \gamma)^{k-1}$. 
\end{proof}
}

\newpage
\section{Algorithm Details}\label{app:experiments}

We provide pseudo-code of the algorithms \texttt{GDSVD} and Power Iteration here.

\begin{algorithm}
\caption{$k$-SVD via gradient descent 
(\texttt{GDSVD})}\label{alg:gdsvd}
\SetKwComment{Comment}{$\triangleright$\ }{}
    \textbf{Input} a symmetric matrix $M$, approximation rank $k$, step-size parameter $\eta$, tolerance $\epsilon$ \;
    
    $M_1 \gets M $ \;
    
    \For{$\ell = 1, \dots, k $}{
    
        $x_0\gets M_\ell z$ where $z$ is a random unit norm vector in $\RR^n$\Comment*[r]{\textcolor{blue}{initialization}}
        $t\gets 0$\; 
        
        $\epsilon^\sigma_{0}, \epsilon^{u}_0 \gets 2\epsilon$\;
        
        \While{$(\epsilon_{t}^\sigma > \epsilon)$ or ($\epsilon_{t}^\sigma > \epsilon)$}{
            $x_{t+1} \gets x_t - \frac{\eta}{\Vert x_{t}\Vert^2} \nabla g(x_t; M_\ell)$ \Comment*[r]{\textcolor{blue}{gradient descent step}}
            $\epsilon^u_{t+1} \gets \left\Vert \frac{x_{t+1}}{\Vert x_{t+1} \Vert} - \frac{x_{t}}{\Vert x_{t}\Vert} \right\Vert$\Comment*[r]{\textcolor{blue}{approximation error of $u_\ell$}}  
            $\epsilon^\sigma_{t+1} \gets \left\vert \Vert x_{t+1} \Vert^2 - \Vert x_{t}\Vert^2 \right\vert$ \Comment*[r]{\textcolor{blue}{approximation error of $\sigma_\ell$}}
            $t \gets t + 1$\;
        }
        $\hat{\sigma}_{\ell} \gets \Vert x_{t}\Vert^2$, $\hat{u}_{\ell} \gets \frac{x_{t}}{\Vert x_{t}\Vert}$ \Comment*[r]{\textcolor{blue}{Recovery of  $\hat{\sigma}_\ell$ and $\hat{u}_\ell$}}
        $M_{\ell+1} \gets M_\ell -\hat{\sigma_\ell}\hat{u}_\ell\hat{u}_\ell^\top$\;
    }
    \KwRet{$\hat{\sigma}_1, \dots, \hat{\sigma}_k$ and $\hat{u}_1, \dots, \hat{u}_k$}\;
\end{algorithm}

\begin{algorithm}
\caption{$k$-SVD via power iterations (\texttt{Power Method})}\label{alg:powersvd}
\SetKwComment{Comment}{$\triangleright$\ }{}
    \textbf{Input} a symmetric matrix $M$, approximation rank $k$, step-size parameter $\eta$, tolerance $\epsilon$ \;
    
    $M_1 \gets M $ \;
    
    \For{$\ell = 1, \dots, k $}
    {
        $x_0\gets M_\ell z$ where $z$ is a random unit norm vector in $\RR^n$ \; \Comment*[r]{\textcolor{blue}{initialization}}
        $t\gets 0$\; 
        
        $\epsilon^\sigma_{0}, \epsilon^{u}_0 \gets 2\epsilon$\;
        
        \While{$(\epsilon_{t}^\sigma > \epsilon)$ or ($\epsilon_{t}^\sigma > \epsilon)$}{
            $x_{t+1} \gets \frac{M x_t}{ \Vert M x_t \Vert }$ \Comment*[r]{\textcolor{blue}{power iteration step}}
            $\epsilon^u_{t+1} \gets \left\Vert x_{t+1} -  x_{t}  \right\Vert$\Comment*[r]{\textcolor{blue}{approximation error of $u_\ell$}}  
            $\epsilon^\sigma_{t+1} \gets \left\vert \Vert M x_{t+1} \Vert - \Vert M x_{t}\Vert \right\vert$ \Comment*[r]{\textcolor{blue}{approximation error of $\sigma_\ell$}}
            $t \gets t + 1$\;
        }
        $\hat{\sigma}_{\ell} \gets \Vert M x_{t}\Vert$, $\hat{u}_{\ell} \gets x_{t} $ \Comment*[r]{\textcolor{blue}{Recovery of  $\hat{\sigma}_\ell$ and $\hat{u}_\ell$}}
        $M_{\ell+1} \gets M_\ell -\hat{\sigma_\ell}\hat{u}_\ell\hat{u}_\ell^\top$\;
    }
    \KwRet{$\hat{\sigma_1}, \dots, \hat{\sigma_k}$ and $\hat{u}_1, \dots, \hat{u}_k$}\;
\end{algorithm}

\end{document}